\documentclass[10pt]{article} 

\usepackage[preprint]{tmlr}

\usepackage[T1]{fontenc}
\usepackage{lmodern}
\usepackage{graphicx}
\usepackage{booktabs}
\usepackage{array}
\usepackage{amsmath,amssymb,amsfonts}
\usepackage{algorithm}
\makeatletter\let\tmlrAND\AND\let\AND\@undefined\makeatother
\usepackage{algorithmic}
\usepackage{multirow}
\usepackage{placeins}
\usepackage{xcolor}
\usepackage{amsthm}
\newtheorem{theorem}{Theorem}
\newtheorem{lemma}{Lemma}
\newtheorem{proposition}{Proposition}
\newtheorem{corollary}{Corollary}
\theoremstyle{definition}
\newtheorem{assumption}{Assumption}
\theoremstyle{remark}
\newtheorem{remark}{Remark}
\usepackage[hidelinks]{hyperref}
\usepackage{url}
\usepackage[capitalize]{cleveref}

\newcommand{\eg}{\emph{e.g.}}
\newcommand{\ie}{\emph{i.e.}}

\newcommand{\etal}{\emph{et al.}}
\newcommand{\scs}[1]{\textsuperscript{\scriptsize #1}} 

\crefname{section}{Sec.}{Secs.}
\crefname{table}{Tab.}{Tabs.}
\crefname{figure}{Fig.}{Figs.}
\crefname{equation}{Eq.}{Eqs.}
\crefname{algorithm}{Alg.}{Algs.}
\crefname{proposition}{Proposition}{Propositions}
\crefname{theorem}{Theorem}{Theorems}
\crefname{lemma}{Lemma}{Lemmas}
\crefname{assumption}{Assumption}{Assumptions}
\crefname{corollary}{Corollary}{Corollaries}

\title{The Scissors Effect: When Resize-Based Input\\
Diversity Helps or Hurts Transfer Attacks}

\author{%
  \name Yuhang Jiang \email jyhtjtj@gmail.com \\
  \addr University of Trento
  \tmlrAND
  \name Xiaojing Chen \email chenxiaojing0909@ahu.edu.cn \\
  \addr Anhui University
}

\def\month{09}
\def\year{2026}
\def\openreview{\url{https://openreview.net/forum?id=b4pCcgJM0M}}

\begin{document}

\maketitle

\begin{abstract}
Input Diversity (DI), a random resize and pad applied at each attack iteration,
is a near-default ingredient of transfer-based attacks, widely assumed to improve
transferability. We show this assumption is regime-dependent and, for
adversarially trained surrogates, often reversed. Holding the attack fixed and
varying only the surrogate, raising the DI probability improves transfer from
standard surrogates but degrades it from robust ones: the two response curves
separate like a pair of scissors, a pattern we call the \emph{Scissors Effect}.
On ImageNet, blind DI costs a robust source $10.3$ percentage points of attack
success across four
architecturally diverse targets; the effect is several times smaller at
$32{\times}32$. Direct measurement supports a bias--variance account: DI
displaces the gradient by a comparable amount on both groups but reduces its
variance only where the gradient is noisy, and robust surrogates have little
noise left to average away. A gradient-consistency probe, frozen and hashed
before the runs, predicts the sign of the effect on seven unseen surrogates, and
we report where it fails alongside where it works. The practical consequence
holds independently of the mechanism: leaving DI enabled by default understates
the attack a robust surrogate can mount, and so overstates the robustness of the
model being evaluated. Code: \url{https://github.com/Avalon-S/ScissorsEffect}.
\end{abstract}

\section{Introduction}
\label{sec:intro}

Constructing transferable adversarial examples, perturbations that fool unseen
target models, is a central tool for evaluating the safety of black-box deep
learning systems. Since the discovery of
transferability~\citep{szegedy2013intriguing, goodfellow2015explaining, papernot2016transferability},
many techniques have been proposed to enhance it. Among them, Input Diversity
(DI)~\citep{xie2019improving}, which applies random resizing and padding at each
attack iteration, is now enabled by default in most of them. Later methods such
as Admix~\citep{wang2021admix}, SSA~\citep{long2022frequency} and
SIA~\citep{wang2023structure} stack further input transformations on top of it,
under the working assumption that ``more diversity is always better.''

In this paper we show that this assumption is regime-dependent, and for robustly
trained surrogates it is often reversed. We systematically study transferability
from robust surrogates (\ie, models trained with adversarial
training~\citep{madry2018towards} or related robustification) and uncover a
phenomenon we call the \emph{Scissors Effect} (\cref{fig:scissors}). Throughout,
a \emph{difference} between two attack success rates is reported in percentage
points (pp), and \% is reserved for rates themselves and for relative changes.
Holding the
attack and target fixed and varying only the diversity probability $p$:
\begin{itemize}
    \item For \textbf{standard} surrogates, increasing diversity improves
    transferability, which is what the default assumption predicts. The gain is
    large on ImageNet ($+14.9$ pp at $p{=}1$) and small at $32{\times}32$, where
    it appears at DI's conventional transform probability and has gone by
    $p{=}1$.
    \item For \textbf{robust} surrogates, increasing diversity ($p\to1$) often
    degrades transferability, in many cases performing worse than an
    attack with no diversity at all.
\end{itemize}
The two response curves separate in opposite directions as $p$ increases, like
the blades of a pair of scissors.

\begin{figure}[t]
\centering
\includegraphics[width=1.0\linewidth]{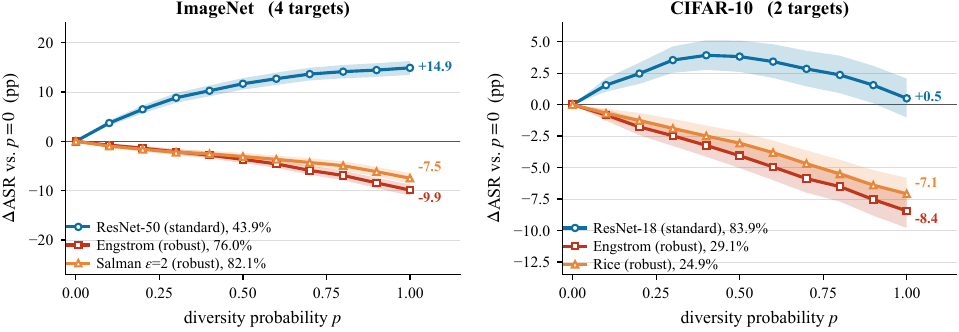}
\caption{The Scissors Effect. Change in transfer ASR relative to $p{=}0$; legend
entries give each surrogate's ASR at $p{=}0$. Both panels share one DI
implementation, $N{=}1{,}000$ images, 5 seeds, $r{=}0.9$, the all-images basis,
and sources disjoint from every target. Bands are 95\% image-paired bootstrap
intervals. \emph{Left:} ImageNet, four standard targets (InceptionV3, ViT-B/16,
Swin-B, ConvNeXt-B). \emph{Right:} CIFAR-10, two standard targets (VGG-16,
DenseNet-121); the pool's robust target is excluded and reported in
\cref{tab:blade_gap}. This sweep is a separate run from \cref{tab:imagenet}; the
two agree to within $0.4$ pp.}
\label{fig:scissors}
\end{figure}

\textbf{Why this matters.}
Robust pretrained models are increasingly used as surrogates because of their
stronger, perceptually aligned features~\citep{salman2020adversarially}. In that
setting, leaving DI on by default quietly weakens the attack: on ImageNet it
costs $10.3$ pp of ASR on average for the robust source. The consequence is not
confined to attackers. A defense evaluated with a robust surrogate and DI left on
is credited with robustness that the attack, not the defense, gave away.

That robustness changes a surrogate's gradient regime, and hence its transfer
behavior, is itself known: \citet{springer2021little} show that ``a little
robustness'' improves targeted transfer, and \citet{zhang2024why} explain how
robustness strength trades off model smoothness against gradient similarity. Our
contribution is not to re-discover that robustness matters, but to characterize
its DI-specific manifestation: which component of input diversity is responsible,
how the effect emerges as robustness increases, and a single gradient-geometry
probe that indicates when DI should be disabled.

\noindent\textbf{Three claims, held to different standards.} This paper makes an
empirical claim, proposes a mechanism for it, and derives a decision rule from
that mechanism. The three are not equally well supported, and we separate them
rather than let the first carry the other two.

\begin{description}
\item[C1 (empirical).] Resize-based input diversity improves transfer from
standard surrogates and degrades it from adversarially trained ones. We establish
this on ImageNet and CIFAR-10; on CIFAR-100, registered in advance as a scope
test, the effect is within $1.5$ pp of zero and the direction does not hold
(\cref{sec:heldout}). This is the paper's primary claim.
\item[C2 (mechanism).] The degradation arises because DI's resize step displaces
the gradient on every surrogate while buying a variance reduction only where the
gradient is noisy; robust gradients have little noise left to average away.
We offer this as a bounded candidate, not as an established mechanism.
\item[C3 (decision rule).] Local Gradient Consistency (LGC), a cheap probe
measured on the clean input, separates the two regimes well enough to decide
whether to enable DI. We implement it as CG-DI, a
training-free switch, and treat it as a falsification test of C2's regime
picture, reporting where it fails.
\end{description}

\Cref{tab:claims} states, for each claim, which result supports it directly,
which is only consistent with it, and what remains assumed. The scope of C1 is
the $L_\infty$ adversarially trained surrogates, two datasets and DI
implementation described in \cref{sec:setup}; we do not claim it for randomized
smoothing, $L_2$ robustness, or transform families we did not test.

\begin{table}[t!]
\caption{The three claims and the status of the evidence for each. ``Directly
supports'' means the experiment measures the quantity the claim is about;
``consistent with'' means it would look the same under other explanations;
``assumed or open'' is what the claim needs that we have not established.}
\label{tab:claims}
\centering
\small
\renewcommand{\arraystretch}{1.15}
\begin{tabular}{@{}>{\raggedright\arraybackslash}p{0.10\linewidth}>{\raggedright\arraybackslash}p{0.28\linewidth}>{\raggedright\arraybackslash}p{0.26\linewidth}>{\raggedright\arraybackslash}p{0.26\linewidth}@{}}
\toprule
& Directly supports & Consistent with & Assumed or open \\
\midrule
C1\newline\emph{empirical} &
Fig.~\ref{fig:scissors} and \cref{tab:imagenet} (8/8 source--target intervals
exclude zero); the ten-attack panel under dependence-aware statistics
(\cref{tab:panel_stats}); the controlled $\epsilon$-sweep
(\cref{tab:eps_sweep}); the cross-recipe panel (\cref{sec:recipe_panel}) &
The blade-gap ordering across datasets (\cref{tab:blade_gap}), which does not
isolate resolution from target type or model pool &
Generality beyond $L_\infty$ AT surrogates and the two datasets tested;
on CIFAR-100 only the ordering of the group means survives, not a sign reversal:
both are positive at $r{=}0.9$ \\
\midrule
C2\newline\emph{mechanism} &
The bias--variance decomposition measured per surrogate
(\cref{tab:biasvar}); the $2\times2$ factorial separating resize from
translation, with resize about four times the larger (\cref{tab:transform}) &
Clean-point gradient alignment (\cref{tab:alignment}), whose robust column is a
consistent direction rather than a per-pair effect; the frequency analysis; the
trajectory association (\cref{tab:trajectory}) &
That a fixed linear operator describes the DI gradient. It does not: the
linearization residual is $1.04$--$1.28$, so \cref{prop:scissors} shows a
crossover is \emph{possible} under a simplified model, not that it is the
operating mechanism \\
\midrule
C3\newline\emph{rule} &
The held-out test on seven unseen ImageNet surrogates spanning two unseen
training recipes, with the predictions frozen and locally hashed before the runs
(\cref{tab:heldout}) &
The LGC--effect correlation (\cref{tab:correlation}), which is a two-group
separation rather than a graded relation &
Known failures: on CIFAR-100, where the effect itself is at most $1.5$ pp, the rule
is correct on 3 of 7 and we claim no dataset transfer; per-image routing costs
$6.4$--$7.4$ pp on two surrogates; the CIFAR-10 threshold classifies 2 of 4
standard surrogates as robust-like; Salman $\epsilon{=}8$; and no measured
advantage over reading training provenance \\
\bottomrule
\end{tabular}
\end{table}

\section{Related Work}
\label{sec:related}

\subsection{Transfer-Based Attacks and Input Diversity}
\label{sec:rw_transfer}
Adversarial attacks generate perturbations $x_{adv}=x+\delta$ that maximize a
loss $\mathcal{L}(f(x_{adv}),y)$ subject to $\|\delta\|_\infty\le\epsilon$. Basic
gradient-based attacks such as FGSM and PGD~\citep{goodfellow2015explaining,
madry2018towards} achieve high white-box success but overfit to the surrogate's
local curvature, limiting transferability. To address this, MI-FGSM
~\citep{dong2018boosting} adds momentum to stabilize update directions, and
NI-FGSM~\citep{lin2020nesterov} couples Nesterov acceleration with scale
invariance. A second, highly successful family augments the input at each
iteration. Input Diversity (DI-FGSM)~\citep{xie2019improving} applies, with
probability $p$, a random \emph{resize} to a fraction $r\in(0,1]$ of the original
side length, followed by a random padding back to the original size that acts as
a \emph{translation}. Smaller $r$ is more aggressive, so the attack
optimizes the expected loss over a distribution of transformed inputs.
\citet{xie2019improving} themselves sweep $p$ and report white-box success
falling as black-box success rises. That is a trade-off between two evaluation
settings for one attack, whereas ours is between two surrogate regimes with both
branches measured as black-box transfer ASR, and their sweep does not group
surrogates by training regime. TI-FGSM
~\citep{dong2019evading} convolves the gradient with a translation-invariance
kernel. Building on DI, Admix~\citep{wang2021admix} mixes in other images,
Spectrum Simulation (SSA)~\citep{long2022frequency} augments in the frequency
domain, and SIA~\citep{wang2023structure}, GRA~\citep{zhu2023boosting},
PGN~\citep{ge2023boosting}, BSR~\citep{wang2024boosting}, and
AdaMSI~\citep{long2024convergence} stack progressively richer structural
transformations. Recent work further optimizes transferability via Bayesian
priors~\citep{fan2025transferable}, model quantization~\citep{yang2024quantization},
or logit calibration with truncated feature mixing~\citep{weng2025improving}.
Across this literature, input transformations are treated as broadly beneficial
regardless of the surrogate's training regime; we show that this breaks down once
the surrogate is robust, with \emph{resize} the dominant component of the harm.

\subsection{Input-Transformation Defenses}
\label{sec:rw_defense}
Input transformations also appear on the defense side: methods such as JPEG
compression and resizing aim to purify adversarial perturbations
~\citep{guo2018countering}. Paradoxically, attacking such defenses motivated
making the attack itself transformation-robust, which is part of why DI and its
descendants were adopted so widely. Our findings caution that this default is
surrogate-dependent: when a robust surrogate is used to evaluate a
defense, leaving DI on can understate attack strength.

\subsection{Robustness, Gradient Geometry, and Transferability}
\label{sec:rw_robust}
Adversarial Training (AT)~\citep{madry2018towards} and variants such as TRADES
~\citep{zhang2019theoretically} and ensemble AT~\citep{tramer2018ensemble} are
the standard route to robustness. AT models learn perceptually aligned features
and ``interpretable'' gradients~\citep{ilyas2019adversarial,
salman2020adversarially}. A line of work directly studies how robustness affects
transferability. \citet{springer2021little} show that surrogates with a small
amount of robustness yield better targeted transfer, and
\citet{zhang2024why} give a more complete account: robustness strength trades off
model \emph{smoothness} against \emph{gradient similarity}, and they conjecture that
the data-distribution shift induced by stronger adversarial training degrades
gradient similarity.

Our work operates in the same conceptual space, in that robustness changes surrogate
gradient properties and those properties govern transfer behavior, but makes a
different, complementary cut. (i)~\textbf{Object of study.} Prior work explains
\emph{base} transferability as a function of robustness; we study the
orthogonal axis of how a specific transfer-enhancing transformation (DI,
and within it resize) interacts with that regime. In our controlled
$\epsilon$-sweep (\cref{sec:eps_sweep}) the base MI-FGSM transferability
independently reproduces the ``little-robustness'' peak of
\citet{springer2021little, zhang2024why}, and the DI effect rides on top of it.
(ii)~\textbf{Metric.} LGC measures gradient stability under \emph{input-space}
perturbations at the scale of DI's transformation magnitude; it is closest to a
localized form of gradient similarity, and is complementary to Zhang
\etal's parameter-space smoothness rather than a replacement. We therefore frame
the Scissors Effect as a DI-specific specialization of the broader
robustness--transferability trade-off, not a standalone re-explanation of it.
While prior work~\citep{wang2020high, xu2019frequency} links neural networks to
frequency-dependent behavior, we explicitly measure the frequency content and
local consistency of gradients and tie them to the efficacy of input
diversity.

\section{The Scissors Effect: Discovery and Verification}
\label{sec:scissors}

\subsection{Experimental Setup}
\label{sec:setup}

\textbf{Threat model.} We study the standard black-box \emph{transfer} setting:
the attacker has white-box access to a \emph{surrogate} (source) model but
no access to the target: no parameters, no gradients, and no queries. The
attacker crafts an $L_\infty$-bounded perturbation on the surrogate and transfers
it to the unseen target; success is measured by the target's misclassification.
Attacks are untargeted unless stated otherwise (the targeted case is in
\cref{sec:targeted}). We ask whether, given such a surrogate, adding input
diversity to the surrogate-side optimization helps or hurts the transferred
attack. The surrogate's training regime, which the attacker may not even know, is
therefore the central variable.

\textbf{Datasets.} We use 5{,}000 test images from CIFAR-10~\citep{krizhevsky2009learning} and
5{,}000 validation images from ImageNet~\citep{deng2009imagenet}. Sample sizes vary by experiment and are stated in each
table/figure caption; analyses use $N{=}500$--$1{,}000$ images with multiple
seeds, drawn from those pools. In every table, bold marks the best value within
its comparison group, and a caption says so explicitly wherever it marks
something else.

\textbf{Models.} We consider Standard and Robust source (surrogate) models on
each dataset. \emph{CIFAR-10:} 4 standard (ResNet-18/50~\citep{he2016deep},
VGG16~\citep{simonyan2014very}, \mbox{DenseNet121~\citep{huang2017densely}}), all
trained by us on CIFAR-10 and reaching $93.8$--$95.4\%$ clean accuracy, and 9 robust (Engstrom
~\citep{engstrom2019robustness}, Rice~\citep{rice2020overfitting},
Gowal~\citep{gowal2020uncovering}, Carmon~\citep{carmon2019unlabeled},
\mbox{Peng~\citep{peng2023robust}}, \mbox{Wang~\citep{wang2023better}}, TRADES/Zhang
~\citep{zhang2019theoretically}, Sehwag$\times2$~\citep{sehwag2022proxy}).
\emph{ImageNet:} 6 standard (ResNet50, ViT-B/16~\citep{dosovitskiy2021image},
DenseNet121, InceptionV3,
Swin-B, ConvNeXt-B~\citep{szegedy2016rethinking, liu2021swin, liu2022convnet})
and 3 robust (Engstrom, Salman~\citep{salman2020adversarially},
Mo~\citep{mo2022adversarial}).

\textbf{Targets, and their separation from sources.} Every result in this paper
uses a target that does not appear in its own source pool, which on CIFAR-10 is
easy to violate because the standard RobustBench defended models are natural
choices for both roles. On ImageNet the targets are
four architectures held out of the source pool (InceptionV3, ViT-B/16, Swin-B,
ConvNeXt-B). On CIFAR-10 the correlation
pool of 13 surrogates transfers to RobustBench's naturally trained
WRN-28-10~\citep{zagoruyko2016wide, croce2021robustbench},
\cref{fig:scissors} transfers to our VGG-16 and DenseNet-121, and the CG-DI
evaluation transfers to that WRN-28-10 and to Carmon, neither of which appears in
that evaluation's own surrogate pool. Our released scripts abort on a source--target overlap rather than
reporting the resulting white-box row as transfer.
The full per-model architecture and training-recipe
table is given in \cref{tab:arch} (\cref{sec:arch}). Our robust
sources span two architecture families (ResNet-50 and ViT-B) and two training
recipes (PGD-AT and ViT-AT), so the effect cannot be attributed to a single
architecture. Throughout, ``robust'' denotes specifically $L_\infty$
adversarially trained surrogates (PGD-AT, TRADES, and their variants); we do not
test randomized-smoothing or $L_2$-robust models, whose gradient geometry could
sit elsewhere on the LGC axis, and we scope our claims to the $L_\infty$-AT family
accordingly.

\textbf{Source--target overlap.} Our ImageNet analyses deliberately use disjoint
architecture families for source and target (\eg, ResNet50 $\to$
ViT-B/Swin-B/ConvNeXt-B; Mo2022 ViT-B $\to$ ConvNeXt-B), so the effect is not an
artifact of family similarity. We verify this directly in
\cref{sec:self_transfer} with a same-family ($\mathrm{RN50}\to\mathrm{RN50}$)
control. Ensemble surrogates are out of scope and left to future work.

\textbf{Metric and attack.} We report Attack Success Rate
$\mathrm{ASR}=1-\text{robust accuracy}$, and write
$D=\mathrm{ASR}(\text{with DI})-\mathrm{ASR}(\text{without DI})$ in percentage
points for the DI effect. Unless otherwise stated we use MI-FGSM
~\citep{dong2018boosting} with $T{=}10$ iterations, step size $\alpha{=}2/255$,
and budget $\epsilon{=}8/255$ (CIFAR-10) or $\epsilon{=}16/255$ (ImageNet).

\textbf{How to read the tables.} ASR can be computed over all evaluated images,
over the subset the target classifies correctly, or over the subset both source
and target classify correctly. These differ by whole percentage points, so every
table names the basis it uses and no claim mixes them. Which basis is
appropriate depends on the question rather than on the kind of table: a claim
about how an attack performs is stated over all images, while one about gradient
geometry excludes images the models get wrong anyway, since there is nothing
there to compare. Where a number appears on a basis other than its table's, as
in the uncertainty analysis for \cref{tab:imagenet}, the text says so at the
point of use. Throughout, $N$ is the number of images
and ``seeds'' the number of end-to-end re-runs with fresh transform draws; seeds
capture transform randomness only, so we report uncertainty over images
separately rather than reading it off the seed spread. Intervals are 95\%
bootstrap intervals whose resampling unit each caption states, and where a claim
is about targets in general we resample targets, which gives the wider interval.
Tables with no interval column report point estimates, and we do not read
significance into differences within them.

\subsection{The Scissors Effect}
\label{sec:scissors_discovery}

We vary the input diversity probability $p\in[0,1]$ and measure transfer ASR.
\Cref{fig:scissors} shows the two surrogate types separating in opposite
directions on both datasets, under one DI implementation, $N{=}1{,}000$ images
and 5 seeds. On ImageNet the standard source (ResNet-50, $43.9\%$ at $p{=}0$)
gains $+14.9$ pp by $p{=}1$, while the robust sources lose $-9.9$ (Engstrom,
$76.0\%$) and $-7.5$ (Salman $\epsilon{=}2$, $82.1\%$). On CIFAR-10 the robust
sources lose $-8.4$ (Engstrom, $29.1\%$) and $-7.1$ (Rice, $24.9\%$), while the
standard source (ResNet-18, $83.9\%$) ends flat at $+0.5$. The CIFAR-10 standard
curve is the one case where the endpoint misleads: it rises to $+3.9$ at
$p{=}0.4$ and comes back down, so a measurement taken only at $p{=}1$ reads as no
effect where the conventional operating point shows a real one. The ImageNet
curves are monotone and have no such turn.

\Cref{tab:imagenet} breaks the ImageNet result down by target. For the robust source
(Engstrom), DI degrades transfer to every target architecture: ViT ($-6.8$ pp),
Swin ($-13.1$ pp), and ConvNeXt ($-16.2$ pp). For the standard source
(ResNet50) DI yields large gains: ViT ($+14.3$ pp), Swin ($+13.2$ pp), ConvNeXt
($+16.3$ pp). On average, blind DI costs the robust source $10.3$ pp of ASR.

To keep the magnitude in perspective: robust sources transfer so well to begin
with (avg.\ $76.0\%$ vs.\ $44.0\%$ for the standard source) that even after this
harm they remain the stronger surrogate ($65.7\%$ with blind DI vs.\ the
standard source's $58.6\%$). DI does not make a robust surrogate weak; it erodes
the lead of the strongest surrogate. The consequence for defense evaluation is
still real: leaving DI on by default underestimates the attack a
robust surrogate can mount, and thus overestimates the target's robustness.

\begin{table}[t!]
\caption{Transfer ASR (\%) on ImageNet ($N{=}1{,}000$, 5 seeds, all-images basis).
DI-FGSM is at $p{=}1$ and reported as mean$\pm$std over seeds; MI-FGSM
($p{=}0$) is deterministic.}
\label{tab:imagenet}
\centering
\small
\begin{tabular}{llccccc}
\toprule
Source & Method & Avg & IncV3 & ViT & Swin & ConvNeXt \\
\midrule
Robust & MI-FGSM & \textbf{76.0} & \textbf{86.0} & \textbf{82.3} & \textbf{68.6} & \textbf{66.9} \\
 & DI-FGSM & 65.7$\pm$0.3 & 80.9$\pm$0.1 & 75.5$\pm$0.3 & 55.5$\pm$0.8 & 50.7$\pm$0.9 \\
\midrule
Standard & MI-FGSM & 44.0 & 51.9 & 34.9 & 40.9 & 48.3 \\
 & DI-FGSM & \textbf{58.6$\pm$0.5} & \textbf{66.5$\pm$0.8} & \textbf{49.2$\pm$1.1} & \textbf{54.1$\pm$0.7} & \textbf{64.6$\pm$0.7} \\
\bottomrule
\end{tabular}
\end{table}

The pattern is not limited to CNN targets: ViTs and hybrid architectures (Swin,
ConvNeXt) exhibit it too. One interpretation is that robustness aligns gradients
with semantic directions that transfer across inductive biases, making DI's
spatial distortion consistently harmful for robust surrogates; we offer this as a
reading, not a measured claim, since our experiments establish the consistent
\emph{direction} across targets but not a shared-feature mechanism across
architectures.

\textbf{Uncertainty.} \Cref{tab:imagenet} reports means over seeds, which says
nothing about whether an individual source--target effect is resolved. We
therefore logged per-image outcomes for this table. The effects below are
computed on the images each target classifies correctly, so their magnitudes run
larger than the all-images differences in \cref{tab:imagenet} and should be read
against each other rather than against the table. Resampling images, all eight
source--target intervals exclude zero: the robust source ranges from $-6.13$
$[-8.13,-4.21]$ on InceptionV3 to $-18.97$ $[-21.56,-16.43]$ on ConvNeXt-B, and
the standard source from $+15.98$ $[+13.62,+18.45]$ to $+19.26$ $[+16.53,+22.05]$.
Treating targets rather than images as the resampling unit, which is the right
unit for a claim about targets in general, a cluster bootstrap over the four gives
$-12.07$ $[-16.94,-7.20]$ and $+17.96$ $[+16.69,+19.06]$. The robust-side interval
is much the wider of the two because the effect varies by target from $-6$ to
$-19$, so we claim a consistent direction across targets and not a common
magnitude.

\textbf{Scope on CIFAR-10.} On CIFAR-10 the effect is smaller and depends on how
much transform is applied. At the $p{=}1$ used in this table the standard side is
flat, which is why a measurement taken only there reads as a one-sided effect. At
DI's conventional $p{=}0.5$ both sides move and every interval excludes zero
(\cref{tab:convp}). The magnitudes are a few points rather than the ten to twenty
seen on ImageNet, but the sign structure is the same.

\begin{table}[t!]
\caption{CIFAR-10 at DI's conventional transform probability ($p{=}0.5$,
$r{=}0.9$, $N{=}1{,}000$, 5 seeds, all-images basis). Each surrogate is
transferred to three targets it does not appear among, and $D$ is in pp, averaged over
them; intervals are 95\% image-paired bootstrap. Compare $p{=}1$, where the
standard side ends flat (\cref{fig:scissors}).}
\label{tab:convp}
\centering
\small
\begin{tabular}{llcc}
\toprule
Surrogate & Type & $D$ ($p{=}0.5$) & 95\% CI \\
\midrule
ResNet-18 & Standard & $+2.83$ & $[+1.79, +3.87]$ \\
ResNet-50 & Standard & $+2.14$ & $[+1.38, +2.91]$ \\
VGG-16 & Standard & $+3.85$ & $[+2.53, +5.20]$ \\
DenseNet-121 & Standard & $+3.54$ & $[+2.61, +4.51]$ \\
\midrule
Engstrom & Robust & $-4.32$ & $[-5.23, -3.43]$ \\
Rice & Robust & $-3.63$ & $[-4.49, -2.78]$ \\
\bottomrule
\end{tabular}
\end{table}

\subsection{How far apart the blades are, and what that ordering does not settle}
\label{sec:blade_gap}

The natural summary statistic is the distance between the two blades, the mean
effect on standard surrogates minus the mean effect on robust ones.
\Cref{tab:blade_gap} collects it for the three datasets on a single evaluation
basis. The ordering is the same in every combination of transform probability,
surrogate pool and resize rate we measured: ImageNet, then CIFAR-10, then
CIFAR-100. Its \emph{size} is not. The ImageNet-to-CIFAR-10 ratio runs from
$2.0\times$ at $p{=}0.5$ on matched pools to $3.2\times$ when CIFAR-10 is scored
on the thirteen-surrogate pool at $p{=}0.8$, so we report an ordering and not a
scaling law. The ImageNet row differs from \cref{tab:imagenet}'s $24.9$ for the
same reason: that table averages the robust side over Engstrom alone, and
$23.6$ here is the same quantity measured on the pool of \cref{fig:scissors}.

\begin{table}[t!]
\caption{Distance between the two blades. $D=\mathrm{ASR}(\mathrm{DI})-\mathrm{ASR}(\text{no DI})$
in percentage points, averaged over targets within a surrogate and then over
surrogates within a type; gap $=\bar D_{\text{std}}-\bar D_{\text{rob}}$.
All entries use the all-images basis, $N{=}1{,}000$, 5 seeds and $r{=}0.9$
unless stated. Rows 1--2 share a script and operator, so they are directly
comparable; rows 3--5 use their own pools and are not.}
\label{tab:blade_gap}
\centering
\small
\setlength{\tabcolsep}{4pt}
\begin{tabular}{llllcc|ccc}
\toprule
& & & & \multicolumn{2}{c|}{$p{=}0.8$} & \multicolumn{3}{c}{gap} \\
Dataset & Res. & Pool (std/rob) & Targets & $\bar D_{\text{std}}$ & $\bar D_{\text{rob}}$ & $p{=}0.5$ & $p{=}0.8$ & $p{=}1$ \\
\midrule
ImageNet & 224 & 1 / 2 & 4 standard & $+14.1$ & $-5.9$ & 15.0 & 20.0 & 23.6 \\
CIFAR-10 & 32 & 1 / 2 & 2 standard & $+2.4$ & $-6.0$ & 7.4 & 8.4 & 8.3 \\
\midrule
CIFAR-10$^{\dagger}$ & 32 & 4 / 9 & 1 standard & $-0.7$ & $-7.0$ & --- & 6.3 & --- \\
CIFAR-100 & 32 & 1 / 6 & 3 robust & $+0.9$ & $+0.3$ & --- & 0.6 & 0.7 \\
CIFAR-100, $r{=}0.6$ & 32 & 1 / 6 & 3 robust & $+1.6$ & $-0.2$ & --- & 1.9 & 2.6 \\
\bottomrule
\end{tabular}

\vspace{2pt}
{\footnotesize $^{\dagger}$The thirteen-surrogate pool of \cref{tab:correlation},
transferred to a naturally trained WRN-28-10 outside the pool.}
\end{table}

Two features of the table block a resolution-only reading. First, the two
$32{\times}32$ datasets do not agree, and which one is larger depends on the
resize rate: at $r{=}0.9$ CIFAR-10 has the wider gap (6.3 against 0.6), at
$r{=}0.6$ CIFAR-100 does (1.9 against 1.2). Second, the CIFAR-100 pool
transfers only to \emph{robust} targets, while the ImageNet and CIFAR-10 rows
transfer to standard ones, and that difference is not incidental: within the
CIFAR-10 pool the one robust target is exactly where the harm disappears
(at $p{=}0.8$ Engstrom scores $-6.8$ to VGG-16 and $+0.1$ to Carmon), and
including it shrinks the CIFAR-10 gap from 8.4 to 5.9. Target robustness is therefore a candidate
explanation for CIFAR-100's small gap on the same footing as resolution, and our
design does not separate the two. We read the ordering as consistent with a
resolution account, since $32{\times}32$ images leave little room for
resize-induced frequency distortion, and state that the settings differ in
dataset, model pool, target type and transform at once and so do not isolate it.

The gap is also non-monotone in the resize rate. On the thirteen-surrogate pool
it rises from 1.2 at $r{=}0.6$ to a maximum of 6.7 at $r{=}0.95$ and falls to
3.8 at $r{=}0.97$, because the standard side is harmed at both extremes while the
robust side stays near $-7$ throughout. A single rate cannot summarize the
effect, which is why \cref{tab:blade_gap} names one.

\textbf{Robust surrogates are not one class.} The paper's framing treats
adversarially trained surrogates as a group, and the group mean hides a spread
wider than the standard side's entire range. On the thirteen-surrogate pool at
$r{=}0.9$ the nine robust surrogates run from $-4.24$ (Gowal) to $-10.00$
(Sehwag, proxy distribution), mean $-7.02$, s.d.\ $2.27$: a factor of $2.4$
between the smallest and largest harm. The four standard surrogates span
$-3.16$ to $+1.64$ over the same measurement. Every robust surrogate is harmed
and no standard one is harmed by more than the mildest robust one, so the
separation survives; what does not survive is any account of the spread. The
natural one attributes it to the training recipe, with classical PGD-AT harmed
most and extra-data methods least. Against a target outside the pool that
ordering breaks: the largest effect belongs to a proxy-distribution model and the
second largest to a proxy-distribution ResNet-18, while the extra-data models
occupy the bottom four places. We report the spread and offer no account of it.

\subsection{Generalization to Modern Attacks}
\label{sec:modern_attacks}

Does the Scissors Effect extend beyond MI-FGSM? We evaluate 10 methods spanning
2018--2024: 3 classical momentum-based (MI-FGSM, NI-FGSM~\citep{lin2020nesterov},
VMI-FGSM~\citep{wang2021enhancing}) and 7 modern input-transformation methods
(Admix~\citep{wang2021admix}, SSA~\citep{long2022frequency},
SIA~\citep{wang2023structure}, GRA~\citep{zhu2023boosting},
PGN~\citep{ge2023boosting}, BSR~\citep{wang2024boosting},
AdaMSI~\citep{long2024convergence}), implemented via
TransferAttack~\citep{transferattack2023}.\footnote{Our core DI/MI-FGSM
experiments (\cref{tab:imagenet}) use \texttt{torchattacks}
~\citep{torchattacks2020}; TransferAttack is used only for the modern attacks
here. The two libraries differ in default iteration count, step size, and input
pre-processing, so \emph{absolute} ASR is not comparable across them: the same
nominal Engstrom$\to$Swin-B MI-FGSM reads $68.6\%$ in \cref{tab:imagenet}
(torchattacks) but $42.2\%$ here (TransferAttack), and the standard source
$40.9\%$ vs.\ $23.9\%$. Only \emph{within}-table $\Delta$ columns, which hold the
library and setup fixed, should be compared across rows; the Scissors direction
is what each table establishes, and it is consistent in both.}

\begin{table}[t!]
\caption{Generalization to 10 attacks (Engstrom / ResNet50 $\to$ Swin-B;
$N{=}1{,}000$, 5 seeds, all-images basis). $\Delta$ is the DI effect $D$ in
percentage points, computed from unrounded values, so it can differ by $0.1$ from
the difference of the two rounded columns. The ten attacks share
images, surrogate and target, so this tally is not ten independent observations;
\cref{tab:panel_stats} analyses it as a dependent panel on a different basis,
and its levels are not comparable to this table's.}
\label{tab:modern}
\centering
\small
\setlength{\tabcolsep}{3pt}
\begin{tabular}{ll|ccc|ccc}
\toprule
& & \multicolumn{3}{c|}{\textbf{Robust}} & \multicolumn{3}{c}{\textbf{Standard}} \\
Method & Venue & Base & +DI & $\Delta$ & Base & +DI & $\Delta$ \\
\midrule
MI-FGSM & CVPR'18 & 42.2 & 40.5 & $-1.7$ & 23.9 & 24.1 & $+0.2$ \\
NI-FGSM & ICLR'20 & 30.7 & 30.0 & $-0.7$ & 23.2 & 23.6 & $+0.4$ \\
VMI-FGSM & CVPR'21 & 42.6 & 42.0 & $-0.6$ & 22.8 & 24.7 & $+1.9$ \\
\midrule
Admix & ICCV'21 & 44.3 & 41.1 & $-3.2$ & 25.6 & 27.3 & $+1.7$ \\
SSA & ECCV'22 & 41.8 & 26.2 & $\mathbf{-15.6}$ & 22.9 & 22.3 & $-0.6$ \\
SIA & ICCV'23 & 46.1 & 40.4 & $-5.6$ & 25.2 & 26.3 & $+1.1$ \\
GRA & ICCV'23 & 37.2 & 36.6 & $-0.6$ & 22.6 & 23.6 & $+1.0$ \\
PGN & NeurIPS'23 & 33.9 & 32.2 & $-1.8$ & 21.8 & 23.2 & $+1.5$ \\
BSR & CVPR'24 & 41.2 & 36.1 & $-5.1$ & 25.5 & 25.8 & $+0.4$ \\
AdaMSI & AAAI'24 & 43.1 & 41.3 & $-1.8$ & 23.7 & 24.1 & $+0.4$ \\
\bottomrule
\end{tabular}
\end{table}

\Cref{tab:modern} shows that DI degrades all 10 methods on the robust surrogate
and benefits 9/10 on the standard one. This is a dependent panel, not a sign
test: the ten attacks share a surrogate, a target and an image set, and overlap
heavily in algorithm family, so they are not ten independent observations and a
binomial null does not apply to the tally. We therefore log per-image outcomes,
repeat the panel against a second target, and analyse it with statistics that
respect the dependence: an image-paired bootstrap per attack, a random-effects
pooling across attacks, and an image-block sign-flip permutation test.

\begin{table}[t!]
\caption{The ten-attack panel with dependence taken into account
($N{=}1{,}000$, 5 seeds; ASR over the source-and-target clean-correct subset, effects in pp).
Seeds are averaged per image, so the resampling unit is the image throughout.
Per-attack intervals are a $10{,}000$-replicate image-paired bootstrap; the
pooled $D$ is a DerSimonian--Laird random-effects mean weighted by inverse
bootstrap variance plus the estimated between-attack variance. $\bar\rho$ is the
mean pairwise correlation of the per-image effect vectors, and the effective
count is $10/(1+9\bar\rho)$. The permutation test ($10{,}000$ draws) assigns one
sign per image, shared across all ten attacks.}
\label{tab:panel_stats}
\centering
\small
\begin{tabular}{llcccc}
\toprule
Target & Source & pooled $D$ & 95\% CI & attacks resolved & $\bar\rho$ (eff.\ $n$) \\
\midrule
Swin-B & Engstrom & $-3.92$ & $[-4.81, -3.04]$ & 4/10 & $0.116$ ($4.9$) \\
Swin-B & ResNet-50 & $+0.83$ & $[+0.27, +1.40]$ & 4/10 & $0.127$ ($4.7$) \\
\midrule
ConvNeXt-B & Engstrom & $-6.25$ & $[-7.30, -5.23]$ & 9/10 & $0.150$ ($4.3$) \\
ConvNeXt-B & ResNet-50 & $+1.38$ & $[+0.68, +2.06]$ & 5/10 & $0.152$ ($4.2$) \\
\bottomrule
\end{tabular}
\end{table}

\Cref{tab:panel_stats} gives the dependence-aware result. The two-sided conclusion holds on both
targets, with the permutation test giving $p<10^{-4}$ on three of the four cells
and $p{=}0.003$ on the fourth. The correlation structure costs about half the
nominal sample: ten attacks behave like four to five independent ones. At the
level of individual attacks the picture is weaker and we state it: on the target
reported in \cref{tab:modern} only 4 of 10 are individually resolved on each
side, so the smallest effects ($-0.7$, $-0.6$) are not replications; against
ConvNeXt-B, 9 of 10 are resolved on the robust side.

SSA is the largest robust-side effect and is individually resolved. Our theory
suggests reading it as a frequency-domain regime shift, but a pre-registered test
measured the effective gradient regime under SSA directly and found it moves in
the opposite direction (\cref{sec:ssa}). We therefore report SSA as an unresolved
interaction between two transform families rather than as a prediction of the
model.

\subsection{How the Effect Depends on Robustness: a Controlled \texorpdfstring{$\epsilon$}{epsilon}-Sweep}
\label{sec:eps_sweep}

The contrast above pits ``standard'' against ``robust'' as two clean categories,
but the scientific question is not binary: \emph{how does the DI effect evolve as
surrogate robustness increases?} Multiple robust models differ along many axes at
once (architecture, recipe, budget), so we isolate robustness strength with a
controlled sweep. We fix the architecture (ResNet-50) and training recipe
(PGD-AT) and vary only the training budget $\epsilon_{\text{train}}$, using
Salman \etal's $L_\infty$ checkpoints~\citep{salman2020adversarially} for
$\epsilon_{\text{train}}\in\{0.5,1,2,4,8\}/255$ plus the standard ($\epsilon{=}0$)
model. Target is Swin-B, $N{=}500$, 3 seeds, $\epsilon_{\text{attack}}{=}16/255$.

\begin{table}[t!]
\caption{Controlled robustness-strength sweep (fixed ResNet-50 + PGD-AT, varying
only $\epsilon_{\text{train}}$; target Swin-B, $N{=}500$, 3 seeds, all-images
basis). $D$ is the DI
effect (DI$-$MI) in pp. DI flips from beneficial to harmful at the smallest non-zero
robustness; harm peaks near $\epsilon_{\text{train}}{=}2/255$. The MI-FGSM column
independently reproduces the little-robustness transferability peak of
\citet{springer2021little, zhang2024why}.}
\label{tab:eps_sweep}
\centering
\small
\begin{tabular}{lccc}
\toprule
$\epsilon_{\text{train}}$ ($\times255$) & MI-FGSM & DI-FGSM & $D=$ DI$-$MI \\
\midrule
0 (Standard) & 41.2 & 54.2 & $\mathbf{+13.0}$ \\
0.5 & \textbf{88.4} & 82.9 & $-5.5$ \ \ ($\leftarrow$ crossover) \\
1 & 86.0 & 79.1 & $-6.9$ \\
2 & 78.4 & 65.9 & $\mathbf{-12.5}$ \ (max harm) \\
4 & 62.2 & 51.3 & $-10.9$ \\
8 & 44.4 & 37.8 & $-6.6$ \\
\bottomrule
\end{tabular}
\end{table}

\Cref{tab:eps_sweep} shows the harm is not a binary switch but emerges
gradually: $D$ crosses from $+13.0$ pp to negative at
$\epsilon_{\text{train}}\in(0,0.5]/255$, so DI becomes harmful already in the
``little-robustness'' regime, then deepens to a peak near
$\epsilon_{\text{train}}{=}2/255$ before tapering as overall transferability falls. The MI-FGSM curve
itself peaks at $\epsilon_{\text{train}}\in[0.5,1]/255$ ($88.4\%/86.0\%$),
independently reproducing the little-robustness transferability peak of
\citet{springer2021little, zhang2024why}: our setup operates in the same regime
they study, and the DI effect $D$ is the orthogonal axis we contribute. The same
sweep against a second target (ViT-B) gives the same qualitative picture, with
one difference worth stating: there the sign change is not sharply located. $D$
is $-0.7$ at $\epsilon_{\text{train}}{=}0.5$ and $+0.1$ at $1$, both smaller than
the image-level sampling uncertainty at $N{=}500$ ($\approx\!1.3$ pp for an ASR
near $90\%$), and only from $\epsilon_{\text{train}}{=}2$ onward is $D$ clearly
negative, deepening to $-9.7$ at $8$ (\cref{tab:eps_sweep_vitb},
\cref{sec:eps_vitb}). We therefore locate the transition as ``somewhere at or
below $\epsilon_{\text{train}}{=}2$'' rather than in a specific interval.

\textbf{Across training methods, not just strength.} The sweep isolates
robustness strength at a fixed recipe (Salman ResNet-50, PGD-AT); to check
the effect is not specific to that family, we run a cross-recipe panel of five
robust ImageNet surrogates spanning four adversarial-training recipes and four
architectures: PGD-AT (Engstrom, Salman; ResNet-50), ViT-aware AT (Mo2022;
ViT-B), ARES adversarial training (ConvNeXt-B), and ConvStem AT (Singh; ViT-B).
Each is attacked against two targets (Swin-B and ConvNeXt-B), with
each surrogate's LGC measured alongside ($N{=}500$, 3 seeds;
\cref{tab:recipe_panel}, \cref{sec:recipe_panel}). DI harms every robust
surrogate on all nine cross-family source--target pairs ($D$ from $-4.5$ to
$-18.7$), while the standard surrogate benefits ($+13.0$, $+15.9$); the
\emph{sign} of $D$ tracks the LGC regime, not the recipe or architecture, with every
LGC${>}0.92$ pair harmed and both LGC${<}0.92$ pairs benefiting. The
CIFAR-10 robust pool additionally covers TRADES, semi-supervised,
diffusion-augmented, and proxy-distribution training (\cref{tab:arch}), all
high-LGC and all DI-harmed (\cref{tab:di_ablation}). The Scissors Effect thus
tracks the resulting gradient regime, not the particular objective that
produced it. We read this as a replication of the \emph{sign} and its LGC-tracking
across recipes and a second target, not as a second controlled strength sweep (no
public fixed-backbone $\epsilon$-spectrum exists outside Salman's).

\subsection{Source--Target Overlap: a Same-Family Control}
\label{sec:self_transfer}

To rule out that the Scissors Effect is driven by architecture-family similarity
between source and target, we run a same-family control in which both source and
target are ResNet-50 (3 seeds, $N{=}500$, target ASR on held-out ResNet-50
checkpoints). The direction is governed by the source regime, not by
family overlap. A standard ResNet-50 source benefits from DI even against
ResNet-50 robust targets ($D$ between $+1.1$ and $+1.3$ pp), while a robust
ResNet-50 source is harmed even against a standard ResNet-50 target ($-11.6$ pp).
These mirror the cross-family baseline ($+13.0$ and $-12.5$ pp against Swin-B),
so the effect is a property of the surrogate's gradient geometry rather than of
source--target similarity. Full
numbers are in \cref{tab:self_transfer} (\cref{sec:self_transfer_app}).

\section{Mechanism Analysis: Why Does This Happen?}
\label{sec:mechanism}

Having established the Scissors Effect, we now investigate its mechanism. We
propose the \emph{Gradient Consistency Hypothesis}: the divergent effects of DI
stem from differences in gradient geometry between standard and robust models.

\subsection{The Gradient Consistency Hypothesis}
\label{sec:lgc}

\textbf{Intuition.} A model's input gradient can be either locally \emph{stable},
pointing in essentially the same direction under tiny input perturbations, or
\emph{noisy}, shifting under small perturbations. Standard models, trained on
clean data, tend to have noisy, high-frequency gradients
~\citep{ilyas2019adversarial, wang2020high}. Robust models, forced to learn features stable
under perturbation, have locally consistent gradients aligned with semantically
meaningful directions. DI averages the gradient over randomly transformed views;
this denoises a noisy gradient but perturbs an already-stable one.

We formalize stability through Local Gradient Consistency (LGC), the average
cosine similarity between the gradient at a clean input $x$ and at slightly
perturbed neighbors $x'_k=x+\xi_k$:
\begin{equation}
    \text{LGC}(x) = \frac{1}{K} \sum_{k=1}^{K} \cos\!\big(\nabla_x \mathcal{L}(f(x), y),\ \nabla_{x'_k} \mathcal{L}(f(x'_k), y)\big),
    \label{eq:lgc}
\end{equation}
where $\xi_k \sim \mathcal{U}(-\epsilon_{chk}, \epsilon_{chk})$ and
$\epsilon_{chk}{=}1/255$ (smaller than the attack step, probing the local
landscape). LGC is itself a sampled quantity, so a surrogate's value moves by
about $\pm0.02$ between runs that draw different images and probes; ResNet-50
appears at $0.63$--$0.65$ across the tables below for that reason, and we quote
each table's own measurement rather than a single canonical number.

\noindent\textbf{Empirical Validation.} \Cref{fig:grads} visualizes gradient
patterns. Robust models (\eg, Engstrom) have LGC $\approx 0.98$--$1.00$ with
spatially smooth, semantically aligned gradients; standard models vary by
architecture and dataset (ResNet18: $0.85$ on CIFAR-10; ResNet50: $0.64$ on
ImageNet) and exhibit high-frequency gradient noise that benefits from DI's
smoothing.

\begin{figure}[t!]
\centering
\begin{minipage}[t]{0.48\linewidth}
\centering
\includegraphics[width=\linewidth]{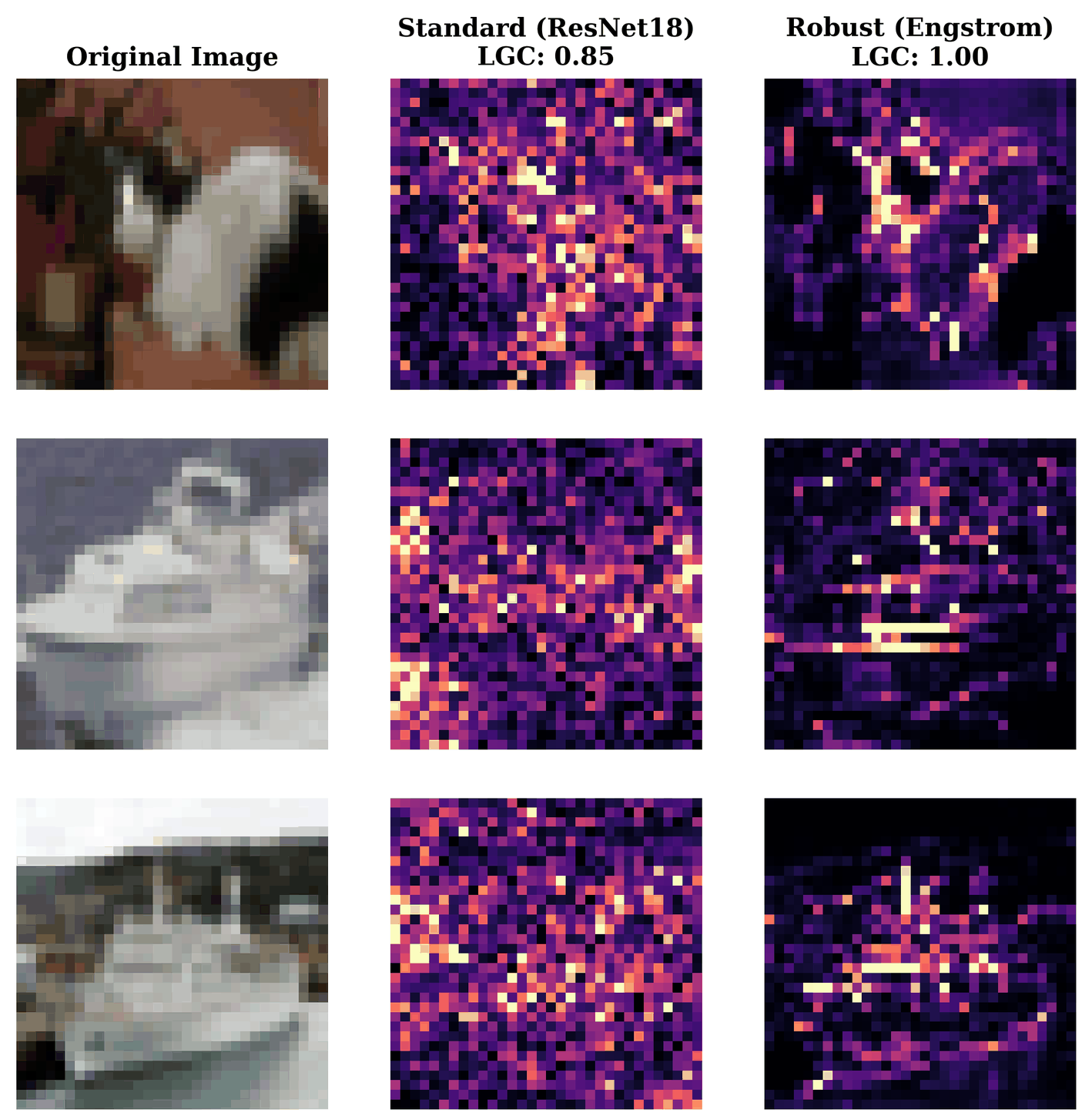}
\end{minipage}
\hfill
\begin{minipage}[t]{0.48\linewidth}
\centering
\includegraphics[width=\linewidth]{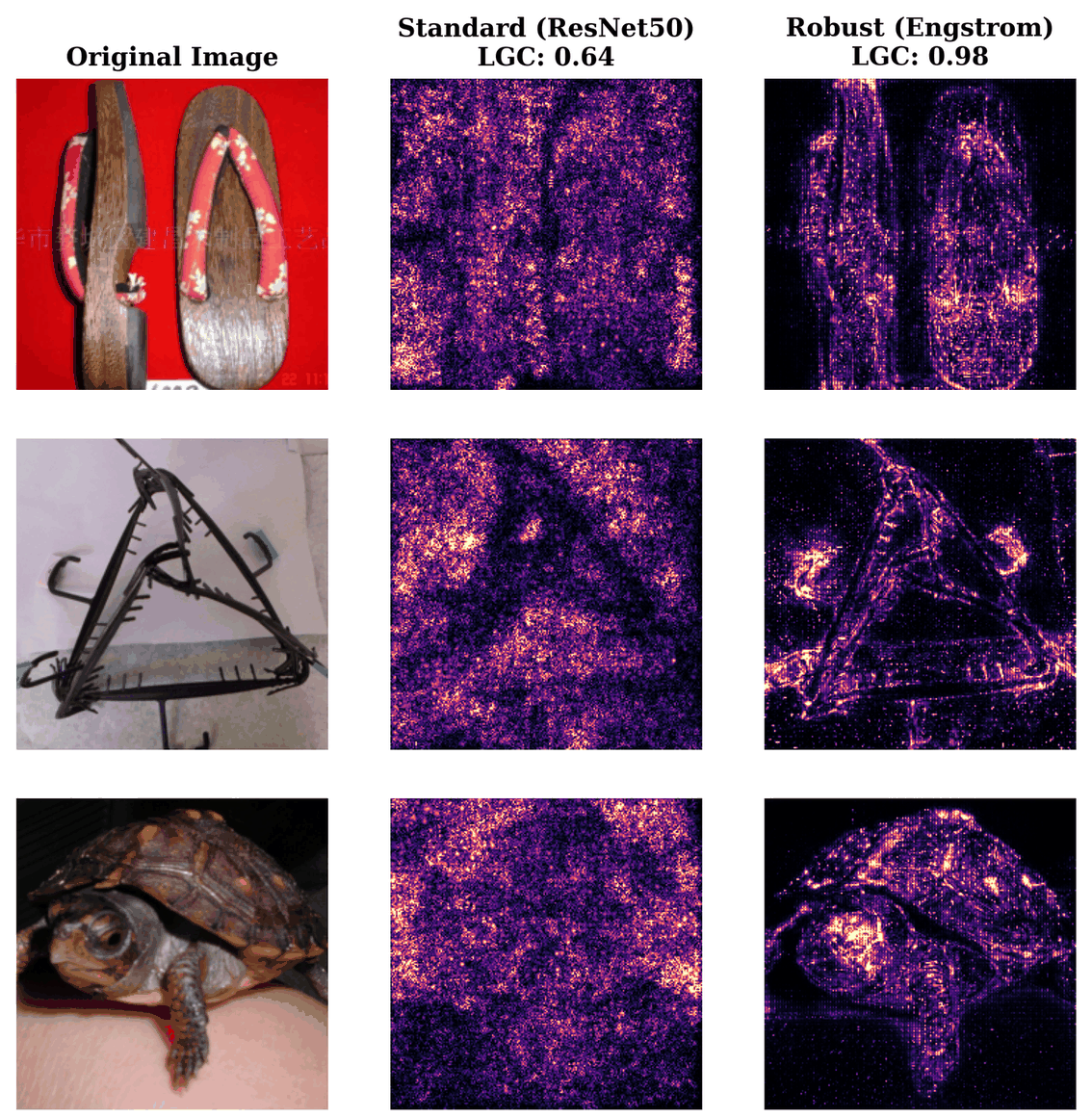}
\end{minipage}
\caption{Gradient consistency across datasets. (\emph{Left}) CIFAR-10: standard
(ResNet18) gradients show scattered high-frequency noise; robust (Engstrom)
gradients are spatially consistent. (\emph{Right}) ImageNet: the same pattern
holds at higher resolution.}
\label{fig:grads}
\end{figure}

\noindent\textbf{What Eq.~\ref{eq:lgc} measures, and what the theory uses.} These
are not the same quantity, and keeping them apart matters for how far the theory
can be pushed.
\Cref{eq:lgc} compares the clean gradient against perturbed ones, so the clean
point appears on both sides and is not itself a draw from any distribution. The
theorem's quantity is the gradient consistency ratio
$\mathrm{GCR}=\rho/(1+\rho)$, with $\rho$ the gradient SNR. GCR is directly
measurable without any model, but it is a ratio of moments and not an average of
cosines: for \emph{two independent} probes it is
$\mathbb{E}\langle g,g'\rangle$ normalized by
$\sqrt{\mathbb{E}\|g\|^2\,\mathbb{E}\|g'\|^2}$ (\cref{lem:lgc}), which equals
$\rho/(1+\rho)$ by construction. A mean cosine differs from it by a Jensen gap,
which is one reason we do not read \cref{eq:lgc} as an estimator of it. We write
$\mathrm{LGC}$ only for the Eq.~\ref{eq:lgc} statistic that CG-DI computes, and
$\mathrm{GCR}$ for the theorem's.

Measured on 15 surrogates at $\epsilon_{chk}{=}1/255$, the two-probe estimate and
an independent moment estimator of $\rho/(1+\rho)$ agree to within $0.005$ on 12
of 15. Eq.~\ref{eq:lgc} sits above GCR on 14 of the 15, by a margin that shrinks
as SNR grows: $+0.152$ for ResNet-50 ($\rho{=}1.0$), $+0.008$ for Engstrom
($\rho{=}37$), $+0.0001$ for Mo2022 ($\rho{=}6500$). The exception is ViT-B/16,
where Eq.~\ref{eq:lgc} falls $0.019$ \emph{below} GCR, so the ordering is a
measured tendency and not a bound: the two differ by a Jensen gap, which carries
no sign in general. We report the measured relation and assert no approximation
formula. The practical consequence is that
CG-DI's threshold is calibrated on Eq.~\ref{eq:lgc} and cannot be read as a
threshold on GCR, which is why the three thresholds in this paper carry distinct
symbols (\cref{tab:thresholds}).

\begin{table}[t!]
\caption{The three thresholds appearing in this paper. They apply to different
quantities and only one of them is a claim about real surrogates; the numerical
closeness of the first two is a coincidence, not a prediction.}
\label{tab:thresholds}
\centering
\small
\begin{tabular}{@{}llp{0.24\linewidth}p{0.36\linewidth}@{}}
\toprule
Symbol & Value & Applies to & Status \\
\midrule
$\tau_{\text{op}}$ & $0.92$ & Eq.~\ref{eq:lgc} statistic & operational; frozen, held out \\
$\tau_{\text{sim}}$ & $0.905$ & GCR, fixed-operator toy & illustrative; a simulation constant \\
$\widehat\tau_{\text{E6}}$ & $0.011$--$0.586$ & GCR, plug-in estimate & 6 of 7 finite (Swin-B none); not attack-matched; correct on 2 of 7 \\
\bottomrule
\end{tabular}
\end{table}

The closeness of $\tau_{\text{op}}$ and $\tau_{\text{sim}}$ is a coincidence
between a value calibrated on one statistic and a constant chosen to illustrate
another.

We further validate this in the frequency domain. Using the 2D FFT of the
batch-averaged gradient, we compute a High-Frequency (HF) Ratio (inspired
by~\citealp{wang2020high}), the fraction of log-magnitude spectral content above
50\% of the maximum spatial frequency (formal definition in
\cref{sec:spectral}). \Cref{tab:frequency} reports both metrics for 12 ImageNet
models.

\begin{table}[t!]
\caption{Gradient geometry of 12 ImageNet models ($N{=}200$ images, unfiltered;
$K{=}5$ probes at $\sigma{=}1/255$). HF is the share of the radially averaged
\emph{log-magnitude} spectrum above the mid-frequency cutoff
(\cref{sec:spectral}): a ratio of log-magnitudes, not of energies. No target
enters, so no basis applies.}
\label{tab:frequency}
\centering
\small
\begin{tabular}{llcc}
\toprule
Model & Type & HF Ratio & LGC \\
\midrule
ResNet18 & Standard & 0.48 & 0.83 \\
VGG16 & Standard & 0.50 & 0.84 \\
ResNet50 & Standard & 0.55 & 0.64 \\
DenseNet121 & Standard & 0.42 & 0.74 \\
InceptionV3 & Standard & 0.39 & 0.81 \\
ViT-B/16 & Standard & 0.37 & 0.90 \\
Swin-B & Standard & 0.55 & 0.36 \\
ConvNeXt-B & Standard & 0.57 & 0.87 \\
\midrule
Engstrom & Robust & 0.31 & 0.98 \\
Salman2020 & Robust & 0.35 & 0.98 \\
Mo2022 & Robust & \textbf{0.17} & \textbf{1.00} \\
\midrule
CLIP~\citep{radford2021learning} & VLM & 0.47 & 0.72 \\
\bottomrule
\end{tabular}
\end{table}

Standard models show HF Ratio $\approx 0.37$--$0.57$ versus robust $\approx
0.17$--$0.35$ ($\sim$1.7$\times$ lower), confirming the spectral gap. Mo2022
(ViT-B + AT) achieves the lowest HF ratio ($0.17$) and near-perfect LGC ($1.00$).
Swin-B has low LGC ($0.36$) despite standard training, likely because shifted-window
attention creates local gradient discontinuities, yet CG-DI still correctly
assigns $p{=}0.8$ since $0.36 \ll \tau$.

\subsection{A Population-Level Signature: Gradient Alignment Under DI}
\label{sec:alignment}

LGC and the frequency analysis characterize the surrogate in isolation. To test
the mechanism directly, namely that the same DI operation moves a standard
surrogate's gradient \emph{toward} transferable target directions but a robust
surrogate's \emph{away}, we measure source--target gradient alignment with and
without DI ($N{=}500$ ImageNet images, 3 seeds, 20 Expectation-over-Transformation
(EOT)~\citep{athalye2018synthesizing} samples, on source-and-target clean-correct
subsets). The design is one standard and three robust sources against three
ImageNet targets, giving 3 standard and 9 robust source--target pairs. The
attack-relevant metric is the
change in sign-alignment $D_{\text{sign}} := \Delta\cos\!\big(\mathrm{sign}(g_{\text{src}}),\,
\mathrm{sign}(g_{\text{tgt}})\big)$ induced by DI, which directly tracks the
sign-update step of (MI/DI)-FGSM.

\begin{table}[t!]
\caption{Change in source--target sign-alignment induced by DI ($N{=}500$
ImageNet images, 3 seeds, 20 EOT draws, source and target clean-correct basis).
$D_{\text{sign}}$ is averaged over
images within a pair and then over pairs; ``resolved'' counts pairs whose
per-image mean exceeds twice its standard error over images. The magnitude column
is the median over pairs of the per-image median
$\|g_{\text{DI}}\|/\|g_{\text{no-DI}}\|$, aggregated by median because the mean
of a ratio is dominated by near-zero denominators.}
\label{tab:alignment}
\centering
\small
\begin{tabular}{lcccccc}
\toprule
& & \multicolumn{2}{c}{$r{=}0.9$} & \multicolumn{2}{c}{$r{=}0.6$} & median \\
Source type & pairs & $D_{\text{sign}}$ & resolved & $D_{\text{sign}}$ & resolved & ratio ($r{=}0.6$) \\
\midrule
Standard (ResNet50) & 3 & $+0.0053$ & 3/3 & $+0.0056$ & 3/3 & $0.24$ \\
Robust (Engstrom, Salman, Mo2022) & 9 & $-0.0004$ & 2/9 & $-0.0009$ & 4/9 & $1.29$ \\
\bottomrule
\end{tabular}
\end{table}

\Cref{tab:alignment} shows an asymmetry, and the two columns carry very different
evidential weight. On the standard side all three pairs move the same way and each
one is individually resolved, from $+0.0036$ to $+0.0085$ (8.5 to 15.5 standard
errors). On the robust side the sign is consistent but the magnitudes largely are
not: 8 of the 9 pairs are negative at both resize rates, yet only 2 of 9 clear
twice their standard error at $r{=}0.9$ and 4 of 9 at $r{=}0.6$. The one positive
robust pair, Engstrom$\to$Swin-B at $+0.00007$, is $0.3$ standard errors from
zero rather than a counter-example of any size. We therefore report the robust
column as a consistent direction across pairs and not as a per-pair effect, and
we do not read $-0.0004$ as an account of a ten-to-sixteen-point ASR swing.

The magnitude column is the stronger of the two. DI shrinks the standard gradient
to $0.24$ of its untransformed norm at $r{=}0.6$, consistent with removing
high-frequency content that averaging can cancel. Robust gradients are unchanged
or slightly larger ($1.29$): already low-frequency, they give the resize little
noise to remove, so it adds its own transform variance instead. The
candidate driver is the asymmetric variance reduction: DI's averaging buys a
large variance gain on noisy standard gradients and almost none on robust ones,
while displacing the gradient by a comparable amount in both. \Cref{tab:biasvar} measures
those two quantities directly rather than inferring them from a norm ratio, which
can also move through cancellation or resize-Jacobian scaling.

\textbf{Along the trajectory, not at the clean point.} The measurement above is
taken at the clean input, and at that point the signal does not reach individual
images: images DI flips have no distinguishable sign-alignment shift from those
it leaves alone (\cref{sec:alignment_mediation}). That null is a timescale
mismatch rather than an absence, since a clean-point statistic is being asked
about a ten-step outcome. Recording sign-alignment, target loss and target margin
at every step instead, the picture changes. At the population level the sign of
the DI-induced change in trajectory-mean alignment matches the sign of the change
in target loss and of the ASR change on all eight source--target pairs, with no
exceptions. Per image, three trajectory statistics separate the images DI flips
from those it does not (\cref{tab:trajectory}):

\begin{table}[t!]
\caption{Per-image association between trajectory statistics and whether DI flips
that image's outcome, over 8 source--target pairs ($N{=}500$, 3 seeds, 10 attack
steps, source and target clean-correct basis). The unit of analysis is the image;
``significant'' counts pairs at $p<0.05$, and $|r|$ is the point-biserial
correlation between the statistic and the flip indicator, computed within a pair
and summarized across pairs. The clean-point control of
\cref{sec:alignment_mediation} is included for comparison.}
\label{tab:trajectory}
\centering
\small
\begin{tabular}{lccc}
\toprule
Statistic & significant & median $|r|$ & range \\
\midrule
Clean-point sign alignment & 0/8 & $0.02$ & --- \\
\midrule
Trajectory-mean alignment & 8/8 & $0.22$ & $0.11$--$0.31$ \\
Target-loss integral & 7/8 & $0.20$ & $0.08$--$0.35$ \\
Target-margin integral & \textbf{8/8} & $\mathbf{0.34}$ & $0.20$--$0.47$ \\
\bottomrule
\end{tabular}
\end{table}

The strongest separation is on Engstrom~$\to$~ConvNeXt-B, where the probability
that DI hurts an image rises from $0.04$ in the lowest quintile of the
target-margin integral to $0.34$ in the highest. An $|r|$ of $0.34$ is $12\%$ of
the variance, so this is a partial association and we describe it as one: the
mechanism is distributional, and the quantitative claim rests on the
variance-reduction asymmetry rather than on the sub-percent mean alignment change.

\subsection{Component Decomposition: Resize vs.\ Translation}
\label{sec:decomposition}

To isolate the harmful part of DI, we decompose it into \emph{resize} (which
alters spatial frequency content via interpolation) and \emph{translation} (which
shifts the image while preserving frequency content).

\begin{table}[t!]
\caption{Two-factor decomposition of the DI operator ($N{=}1{,}000$, 5 seeds,
$p{=}1$; 6 sources $\times$ 3 targets, so 6 standard and 12 robust source--target
pairs; all-images basis, effects in pp). ME is a main effect and the interaction
is $(\text{both}-\text{resize})-(\text{translation}-\text{none})$, averaged over
pairs within a source type. Intervals are 95\% image-paired bootstraps with the
four cells paired within an image. The ``both'' cell is the composition of the
two factors, not the DI operator. Restricted to $r>1$; the $r{=}0.9$ operator is
bridged in the text.}
\label{tab:transform}
\centering
\small
\begin{tabular}{lcccc}
\toprule
\multirow{2}{*}{Source type} & \multicolumn{4}{c}{ASR (\%) in the cells} \\
\cmidrule(l){2-5}
& none & resize & transl. & both \\
\midrule
Standard (6 pairs) & 47.3 & 53.0 & 57.1 & 60.2 \\
Robust (12 pairs) & 73.5 & 64.0 & 71.4 & 61.5 \\
\midrule
\multirow{2}{*}{Effect} & \multicolumn{2}{c}{Standard} & \multicolumn{2}{c}{Robust} \\
\cmidrule(lr){2-3}\cmidrule(l){4-5}
& estimate & 95\% CI & estimate & 95\% CI \\
\midrule
resize (ME) & $+4.38$ & $[+3.77,+4.99]$ & $\mathbf{-9.71}$ & $[-10.43,-9.01]$ \\
translation (ME) & $+8.47$ & $[+7.71,+9.24]$ & $-2.32$ & $[-2.84,-1.84]$ \\
interaction & $-2.65$ & $[-3.54,-1.77]$ & $-0.47$ & $[-1.07,+0.13]$ \\
\cmidrule(l){1-5}
both cells & $+12.85$ & $[+11.67,+14.06]$ & $-12.03$ & $[-12.96,-11.14]$ \\
DI operator & $+11.49$ & $[+10.35,+12.63]$ & $-14.19$ & $[-15.22,-13.22]$ \\
\bottomrule
\end{tabular}
\end{table}

\Cref{tab:transform} shows an asymmetry between the two factors on robust
surrogates. Resize carries a main effect of $-9.71$ pp $[-10.43,-9.01]$ against
translation's $-2.32$ $[-2.84,-1.84]$, a factor of four, and the interaction is
the one term whose interval contains zero: $-0.47$ $[-1.07,+0.13]$, with
$|\mathrm{INT}|<2$ on 11 of the 12 robust pairs.\footnote{This is not
TI-FGSM's kernel-based translation invariance, which smooths gradients
aggressively and causes larger harm ($-18.4$ pp); see
\cref{sec:translation_impact}.} We state the resize effect as a
\emph{magnitude} rather than as an attribution: it equals 81\% of the two-factor
composition ($-12.03$) and 68\% of the full operator's effect ($-14.19$). These
two denominators are different quantities and a single percentage cannot stand
for both, since the full DI operator is not the composition of the two factors we
manipulate.

The two identities that define the design,
$\mathrm{ME}_{\text{res}}+\mathrm{ME}_{\text{tr}}=\Delta_{\text{both}}$ and
$\mathrm{SE}_{\text{res}}+\mathrm{SE}_{\text{tr}}+\mathrm{INT}=\Delta_{\text{both}}$,
hold to machine precision at every level of the aggregation, which is what lets
the effects be bootstrapped image by image with the four cells paired.

On the standard side both factors help and the interaction is larger relative to
them, and unlike the robust side it is resolved away from zero: $-2.65$
$[-3.54,-1.77]$ against main effects of $+4.38$ and $+8.47$, with
$|\mathrm{INT}|<2$ on only 3 of 6 pairs. We therefore do not read the standard
column as separable.

The factorial is defined only for $r>1$: when $r<1$ the pad offset exists solely
as a consequence of the shrink, so translation cannot be varied on its own. We
therefore ran the $r{=}0.9$ operator of \cref{tab:imagenet} as a bridge cell on
the same sources, targets and images. It gives $-9.07$ pp $[-9.89,-8.27]$ on
robust surrogates against $-14.19$ $[-15.22,-13.22]$ for the $r{=}1.1$ operator
decomposed here. The two intervals are disjoint, so that gap is an operator
difference and not sampling.
The bilinear/bicubic interpolation in resize acts as a low-pass filter: for
standard models with noisy gradients this smoothing helps, but for robust models
whose gradients are already low-frequency and aligned it introduces
bias without variance reduction. An interpolation-mode ablation
(\cref{sec:interp_ablation}) confirms the \emph{direction} of the effect is invariant
across bilinear, bicubic, and antialiased resize, while its \emph{magnitude} scales
with the filter's low-pass strength (bicubic, a gentler rolloff, roughly halves the
harm): the dependence the low-pass-bias account predicts, and the opposite of what a
sign-agnostic interpolation artifact would produce. This
is also where the frequency analysis (\cref{tab:frequency,sec:spectral}) connects
to the theory of \cref{sec:bias_variance}: the spectra are what establish that
resize is a genuine low-pass contraction (so the theorem's bias factors $c,g_r<1$),
whereas a small centered translation acts as $R\approx I$ (so $c,g_r\approx1$ and
no bias). The frequency story thus explains why the bias term is large for
resize and negligible for translation.

\subsection{SSA: An Unresolved Interaction}
\label{sec:ssa}

SSA is the largest robust-side effect in \cref{tab:modern} ($-15.6$ pp) and the
only method whose standard-side effect is not clearly positive ($-0.6$ pp). A
natural account suggests itself: SSA is the only frequency-domain method in the
panel, and modelled as a low-pass spectral pre-filter it should raise a
surrogate's \emph{effective} gradient consistency, pushing it rightward along the
crossover and deeper into the DI-harmful regime. We tested that account
prospectively and it does not survive.

The test was pre-registered in two stages. Stage one measured the effective
gradient regime under each method directly, as the consistency of two independent
probes of the method's own gradient estimate; the prediction was written down and
hashed before stage two ran. The prediction was that SSA raises effective GCR
relative to no pre-method, and that the spatial methods do not.

Neither part holds. Across three sources and eight pre-methods, all 24
measurements \emph{lower} effective GCR, none raise it. On Engstrom the
no-pre-method value is $0.974$ and every method falls below it: Admix $0.583$,
BSR $0.278$, SIA $0.266$, and SSA between $0.418$ and $0.133$ depending on its
spectral-scaling strength. The pattern repeats on Salman $\epsilon{=}2$ and on
ResNet-50. So the direction is opposite to what the account required, and SSA is
not distinguished from the spatial methods by this measurement at all: it sits
inside their range, and where it sits is set by its own strength parameter.

That parameter is also why this account cannot be offered as support even where
it appears to fit. It is free, so the model can accommodate either sign of a
near-zero standard-side effect after the fact, which makes the agreement
uninformative. We therefore do not apply the corollary to SSA, and report SSA as
an unresolved interaction between two transform families. Our own sweep has a
limitation worth stating: its DI arm averages 20 draws of the composed transform
per step, so it is not the single-draw DI operator used elsewhere in the paper.
That is why the falsification rests on the stage-one gradient measurement rather
than on the sweep's ASR.

What does survive is the descriptive part. An SSA-based transform decomposition
analogous to \cref{tab:transform} (target Swin-B, $N{=}500$, 3 seeds; the SSA base
here is MI-FGSM with a 20-sample SSA ensemble, which is stronger than the
TransferAttack default and so reaches higher absolute ASR) shows the same
resize-driven asymmetry as the DI operator alone (\cref{tab:ssa}).

\begin{table}[t!]
\caption{SSA $\times$ transform decomposition (target Swin-B, $N{=}500$, 3 seeds,
all-images basis). Resize is the dominant harm on the robust
surrogate and translation the smaller one, matching \cref{tab:transform}. The
per-step gradient is averaged over 20 draws of the composed transform, SSA's own
ensemble size, so this is not the single-draw operator used elsewhere.
ASR (\%).}
\label{tab:ssa}
\centering
\small
\begin{tabular}{llcccc}
\toprule
Dataset & Surrogate & SSA-only & +Resize & +Translation & +Full DI \\
\midrule
\multirow{2}{*}{ImageNet} & Engstrom (Rob) & 54.1 & \textbf{44.5} ($-9.6$) & 51.7 ($-2.4$) & 47.5 ($-6.6$) \\
 & ResNet50 (Std) & 73.8 & 79.8 ($+6.0$) & 79.3 ($+5.5$) & 81.6 ($+7.8$) \\
\midrule
\multirow{2}{*}{CIFAR-10} & Engstrom (Rob) & 23.2 & \textbf{20.3} ($-2.9$) & 23.7 ($+0.5$) & 20.8 ($-2.5$) \\
 & Standard (Std) & 12.1 & 13.5 ($+1.5$) & 12.5 ($+0.5$) & 13.7 ($+1.6$) \\
\bottomrule
\end{tabular}
\end{table}

On the robust surrogate, SSA combined with resize alone causes the dominant harm
($-9.6$ pp on ImageNet; $-2.9$ pp on CIFAR-10, where it exceeds full DI's $-2.5$ pp),
while SSA with translation is approximately neutral. That is the same
resize-driven asymmetry as \cref{tab:transform}, now under a different base
attack, which is what makes it worth reporting. We do not offer a mechanism for
why the two compound: the natural spectral-overlap account is the one the
stage-one measurement of \cref{sec:ssa} already contradicts.

\subsection{A Bias--Variance Perspective}
\label{sec:bias_variance}

The effect can be read through a bias--variance lens. DI averages gradients over
transformed views, reducing the variance of the gradient estimate at the cost of
bias (the averaged direction may deviate from the original optimum). For standard
models (low LGC), the estimate has high variance from noisy, high-frequency loss
landscapes, so $\mathbb{E}_{T}[\nabla\mathcal{L}(T(x))]$ acts as beneficial
variance reduction. For robust models (high LGC), the landscape is smooth with
intrinsically low gradient variance ($\mathrm{Var}[\nabla\mathcal{L}]\approx0$),
so variance reduction is unnecessary; meanwhile resize shifts the spatial
frequency content and introduces systematic bias that degrades an already
well-aligned direction. LGC thus proxies the tradeoff: high LGC indicates low
variance where avoiding bias matters; low LGC indicates high variance where
averaging helps.

That is an argument, so we measured it. For each surrogate we draw $m{=}20$ EOT
samples of the DI gradient and estimate, per image, the variance across draws,
the variance of the averaged direction $V/m$, the signal $\|\mu\|^2$ and the
per-coordinate noise $\sigma^2$ from independent probes. Two quantities follow:
the variance of the DI-averaged direction relative to that of a single clean
gradient, $(V/m)/(n\sigma^2)$, and the bias, $\|\bar g-g\|/\|g\|$. Constants are
formed as ratios of expectations over images rather than as means of per-image
ratios; the raw second moments are released.

\begin{table}[t!]
\caption{Direct bias--variance measurement at $r{=}0.9$, $m{=}20$ EOT draws, on
the source clean-correct subset of a $500$-image ImageNet pool; $N$ is the number
retained per surrogate. The $m$ draws take the place of seeds here, and no target
enters, so no transfer basis applies. Constants are formed as ratios of
expectations over images rather than means of per-image ratios, which a near-zero
denominator would dominate; intervals are a $10{,}000$-replicate bootstrap that
resamples images and recomputes the ratio.
$\rho=\|\mu\|^2/(n\sigma^2)$ is the measured gradient SNR.}
\label{tab:biasvar}
\centering
\small
\begin{tabular}{lrrcc}
\toprule
Surrogate & $N$ & SNR $\rho$ & variance ratio [95\% CI] & bias [95\% CI] \\
\midrule
\multicolumn{5}{l}{\textit{standard}} \\
Swin-B & $409$ & $0.2$ & $0.036$ $[0.022,0.061]$ & $0.98$ $[0.96,1.01]$ \\
ResNet-50 & $404$ & $1.0$ & $0.057$ $[0.049,0.066]$ & $0.93$ $[0.93,0.94]$ \\
DenseNet-121 & $370$ & $1.7$ & $0.131$ $[0.101,0.179]$ & $0.94$ $[0.92,0.98]$ \\
InceptionV3 & $380$ & $3.1$ & $0.434$ $[0.215,0.731]$ & $1.23$ $[0.97,1.61]$ \\
ConvNeXt-B & $422$ & $4.7$ & $0.229$ $[0.175,0.305]$ & $0.90$ $[0.87,0.93]$ \\
ViT-B/16 & $404$ & $10.1$ & $0.516$ $[0.352,0.771]$ & $0.90$ $[0.88,0.92]$ \\
\midrule
\multicolumn{5}{l}{\textit{robust}} \\
Salman $\epsilon{=}8$ & $290$ & $0.4$ & $0.027$ $[0.017,0.057]$ & $0.95$ $[0.94,0.97]$ \\
Engstrom & $325$ & $37.3$ & $1.384$ $[1.045,1.870]$ & $0.92$ $[0.88,0.99]$ \\
Salman $\epsilon{=}4$ & $334$ & $39.6$ & $1.433$ $[1.089,1.935]$ & $0.96$ $[0.94,1.00]$ \\
Salman $\epsilon{=}2$ & $353$ & $51.6$ & $1.788$ $[1.424,2.319]$ & $0.96$ $[0.91,1.03]$ \\
Salman $\epsilon{=}1$ & $365$ & $62.6$ & $1.841$ $[1.520,2.278]$ & $0.89$ $[0.86,0.92]$ \\
Salman $\epsilon{=}0.5$ & $367$ & $68.6$ & $3.165$ $[2.433,4.217]$ & $1.03$ $[0.94,1.15]$ \\
ARES ConvNeXt-AT & $379$ & $815.3$ & $11.39$ $[8.74,15.55]$ & $0.71$ $[0.67,0.77]$ \\
Singh ConvStem-AT & $380$ & $1517.4$ & $17.41$ $[12.26,25.77]$ & $0.61$ $[0.56,0.66]$ \\
Mo2022 & $350$ & $6501.9$ & $134.5$ $[118.1,158.3]$ & $0.59$ $[0.57,0.62]$ \\
\bottomrule
\end{tabular}
\end{table}

\Cref{tab:biasvar} supports the reading more directly than a norm ratio can. DI
reduces the variance of the attack direction on all six standard surrogates and
increases it on eight of the nine robust ones, while the bias it pays is
comparable across the two groups (mean $0.98$ against $0.85$). Each of those
fourteen variance ratios is individually resolved: the six standard intervals lie
entirely below $1$ and the eight robust ones entirely above it. The single
unresolved case is Salman $\epsilon{=}8/255$, whose interval $[0.017,0.057]$ sits
below $1$ with the standard surrogates rather than with its own group, which is
the corner case discussed at the end of this subsection. At the group level
the asymmetry therefore sits on the variance side and not the displacement side.
These constants are measured with $m{=}20$ draws while the attack averages one
per step, so this is a statement about the two groups' gradient geometry and not
a causal decomposition of the attack outcome. The
variance ratio rises with the measured SNR across three orders of magnitude,
with local inversions among the low-SNR surrogates, which is the mechanism stated
quantitatively: a surrogate with little gradient noise has nothing for averaging
to remove, so the transform's own randomness dominates. Mo2022 is the limiting
case, with an SNR of $6500$ and essentially no noise to average.

The single robust surrogate that behaves like a standard one, Salman
$\epsilon{=}8/255$, does so for a reason the table makes visible: its gradient
SNR is $0.4$, in the range of the standard surrogates rather than of the other
robust ones, which run from $37$ upward. Adversarial training at that budget on a
ResNet-50 does not produce the gradient regime the rest of the robust column
shares. It is also the one checkpoint where CG-DI's fixed threshold fails
(\cref{sec:corner_case}), and this is why.

That checkpoint also marks the boundary of the bias--variance reading itself, and
we would rather state that than leave it for a reader to assemble. Its measured
constants, a variance ratio of $0.027$ and a bias of $0.95$, are those of a
surrogate DI should help; DI instead costs it $-6.6$ pp
(\cref{tab:eps_sweep}). Whatever decides the outcome there is not captured by the
two quantities this section measures. \Cref{prop:general_target} offers one
candidate, since a surrogate whose resize moves its signal away from the target's
vulnerable direction ($\Gamma<1$) is harmed regardless of its variance, but we
have not measured $\Gamma$ for this checkpoint and do not claim it as the
explanation.

This intuition can be made precise. Modeling the surrogate gradient as signal
plus isotropic noise and resize-DI as a linear operator $R$ with $m$-fold
averaging (assumptions in \cref{sec:theory}), the alignment between the attack
direction and the transferable target direction admits a closed form, and DI
helps exactly below a threshold on the gradient regime.

\begin{proposition}[Scissors crossover; proof in \cref{sec:theory}]
\label{prop:scissors}
Under \cref{ass:model,ass:di,ass:fom}, let $\rho=\|\mu\|^2/(n\sigma^2)$ be the
gradient signal-to-noise ratio, so that $\mathrm{GCR}=\rho/(1+\rho)$. Write
$c=\cos(R\mu,\mu)\in(0,1]$, $g_r=\|R\mu\|^2/\|\mu\|^2$, and
$\kappa=\mathrm{tr}(RR^{\top})/(nm)$, and assume $\kappa/g_r<c^2<1$. Then resize-based DI
strictly improves the signal-to-RMS alignment if and only if
$\mathrm{GCR}<\tau^\star$, and strictly degrades it if and only if
$\mathrm{GCR}>\tau^\star$, where
\begin{equation}
\tau^\star=\frac{\rho^\star}{1+\rho^\star},\qquad
\rho^\star=\frac{c^2-\kappa/g_r}{1-c^2}.
\label{eq:threshold}
\end{equation}
In particular, as $\mathrm{GCR}\to1$ (a fully robust surrogate), DI necessarily
hurts.
\end{proposition}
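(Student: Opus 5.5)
Under \cref{ass:model} we model the clean surrogate gradient as $g=\mu+\eta$ with isotropic noise, $\mathbb{E}\eta=0$ and $\mathrm{Cov}(\eta)=\sigma^2 I_n$. Under \cref{ass:di} the DI direction is the $m$-fold average $\bar g=\frac1m\sum_{j=1}^m R(\mu+\eta_j)$ with independent $\eta_j$. The figure of merit of \cref{ass:fom} is the signal-to-RMS alignment between the attack direction $h$ and the transferable direction $\mu$:
\[
A(h)=\frac{\langle \mathbb{E}h,\mu\rangle}{\|\mu\|\sqrt{\mathbb{E}\|h\|^2}} .
\]
The plan is to compute $A$ in closed form for both directions, compare the squares (both numerators are positive since $c>0$), and solve the resulting inequality for $\rho$.

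\textbf{Step 1 (no DI).} Here $\mathbb{E}g=\mu$ and $\mathbb{E}\|g\|^2=\|\mu\|^2+n\sigma^2$. Hence
\[
A_0^2=\frac{\|\mu\|^2}{\|\mu\|^2+n\sigma^2}=\frac{\rho}{1+\rho}=\mathrm{GCR}.
\]
This identity is what lets the threshold be stated in terms of GCR.

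\textbf{Step 2 (DI).} Here $\mathbb{E}\bar g=R\mu$, so the numerator is $\langle R\mu,\mu\rangle=c\,\|R\mu\|\,\|\mu\|$. For the second moment, the cross terms vanish and the noise of the average contributes $\frac1m\mathrm{tr}(R\,\sigma^2 I\,R^\top)$. This gives
\[
\mathbb{E}\|\bar g\|^2=\|R\mu\|^2+\sigma^2\,\mathrm{tr}(RR^\top)/m=g_r\|\mu\|^2+n\sigma^2\kappa .
\]
Dividing through by $\|\mu\|^2$ yields
\[
A_{\mathrm{DI}}^2=\frac{c^2 g_r}{g_r+\kappa/\rho}=\frac{c^2\rho}{\rho+\kappa/g_r}.
\]

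\textbf{Step 3 (comparison).} The condition $A_{\mathrm{DI}}^2>A_0^2$ becomes
\[
c^2(1+\rho)>\rho+\kappa/g_r ,
\]
after cancelling $\rho>0$ and cross-multiplying the positive denominators. This rearranges to $\rho(1-c^2)<c^2-\kappa/g_r$. Since $c^2<1$, dividing by $1-c^2$ preserves the direction and gives $\rho<\rho^\star$. The assumption $\kappa/g_r<c^2$ ensures $\rho^\star>0$, so the threshold is nontrivial. Every step is an equivalence, so equality holds exactly at $\rho=\rho^\star$ and the reverse inequality gives strict degradation. Finally, $\rho\mapsto\rho/(1+\rho)$ is strictly increasing on $(0,\infty)$, so the conditions translate to $\mathrm{GCR}<\tau^\star$ and $\mathrm{GCR}>\tau^\star$ respectively. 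The limiting claim follows because $\mathrm{GCR}\to1$ forces $\rho\to\infty>\rho^\star$. Equivalently, $A_{\mathrm{DI}}^2\to c^2<1=\lim A_0^2$.

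\textbf{Main obstacle.} The algebra is routine. The real work is pinning the figure of merit in \cref{ass:fom} so that $A_0^2$ equals GCR exactly, which requires squared alignment with the RMS norm in the denominator rather than a mean cosine. The other delicate point is confirming that the $m$ DI draws share the fixed operator $R$ but have independent noise, so the noise term scales as $\mathrm{tr}(RR^\top)/m$. I would check both against the assumptions before writing out Step 2.
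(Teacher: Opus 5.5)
Your proof is correct and follows essentially the paper's route. It uses the same closed forms $A_{\mathrm{noDI}}^2=\rho/(1+\rho)$ and $A_{\mathrm{DI}}^2=c^2\rho/(\rho+\kappa/g_r)$, and the threshold comes from the same equation $c^2(1+\rho)=\rho+\kappa/g_r$. The only difference is the last step: you solve the inequality directly as a chain of equivalences, whereas the paper shows $\varphi=(A_{\mathrm{DI}}/A_{\mathrm{noDI}})^2$ is strictly decreasing with $\varphi(0^+)>1>\varphi(\infty)$ and locates its unique root; both yield the two strict iff statements.
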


\noindent The proposition is a
\emph{surrogate-side} bias--variance crossover: a single threshold in the gradient
regime separates where averaging (DI) helps from where its resize bias dominates.
The bias enters only through $c,g_r<1$, \ie\ through $R\neq I$, the resize
component; a small, centered translation corresponds to $R\approx I$, leaving the
bias factor near unity, which matches translation being the smaller of the two
main effects in \cref{tab:transform}. The bridge from this surrogate-side statement to
\emph{transfer} is \cref{ass:fom}, which takes the target's vulnerable direction to
be aligned with the surrogate signal $\mu$. We do not prove that assumption; it is
the standard robust-features-transfer premise~\citep{salman2020adversarially} and
is what our gradient-alignment measurement (\cref{tab:alignment}) tests directly.
In fact the theorem does not even require exact alignment: \cref{prop:general_target}
generalizes it to an \emph{arbitrary} target direction $u$ and shows DI still
necessarily hurts as $\mathrm{GCR}\to1$ precisely when resize reduces alignment with
$u$ ($\gamma_R<\gamma_\mu$), a condition needing no architectural match and matching
the sign of the $D_{\text{sign}}$ that \cref{sec:alignment} measures. The proposition
therefore makes the bias--variance intuition explicit and testable, though its two
parts carry different weight. The fully robust limit ($\mathrm{GCR}\to1$, i.e.\
$\sigma\to0$) is close to definitional: with no gradient variance left to reduce,
any direction-altering operator can only hurt. The non-trivial content is therefore the
existence and uniqueness of the interior crossover, whose location the theorem
does not fix. What the proposition does not do is settle, from first principles,
whether a surrogate's gradient aligns with a target's transferable directions, which
is an empirical matter (\cref{sec:alignment}). All algebra is symbolically verified,
and a numerical simulation (\cref{fig:theory_mc}) exhibits the crossover near
$\mathrm{GCR}\approx0.9$ for the illustrative constants used there
(\cref{sec:theory}). The fixed-operator constants $c,g_r,\kappa$ can be estimated
on real gradients: of the 15 surrogates in \cref{tab:biasvar}, 13 admit a resize
geometry with $g_r\le1$, and on 12 of those the crossover condition
$\kappa/g_r<c^2$ also holds. The exception is Swin-B, whose clean gradients are
the least aligned in the set. The model can therefore be instantiated on real
surrogates, though not on every one.

The estimator we froze is a different one, and the two should not be
read as a single predictor. DI draws a fresh transform at each step, so the
matching statement is \cref{prop:random_transform} rather than
\cref{prop:scissors}, and it is that proposition's constants $c,g_r,a,b$ from
which the pre-registered plug-in threshold $\widehat\tau_{\text{E6}}$
(\cref{tab:thresholds}) was computed. That
predictor reproduces the DI direction on the eight surrogates it was formed from,
but that is a fit rather than a test: on seven surrogates it had not seen it is
correct on 2. Its summaries were also formed at $m{=}20$ with the transform
always applied, which is not the single-draw, $p{=}0.8$ operator the attack uses,
so what failed is the bridge from the model to attack outcomes rather than either
proposition (\cref{sec:measured_constants}). Either way we offer no per-surrogate
predictor, and CG-DI's $\tau$ stays empirical.

\begin{figure}[t]
\centering
\includegraphics[width=0.56\linewidth]{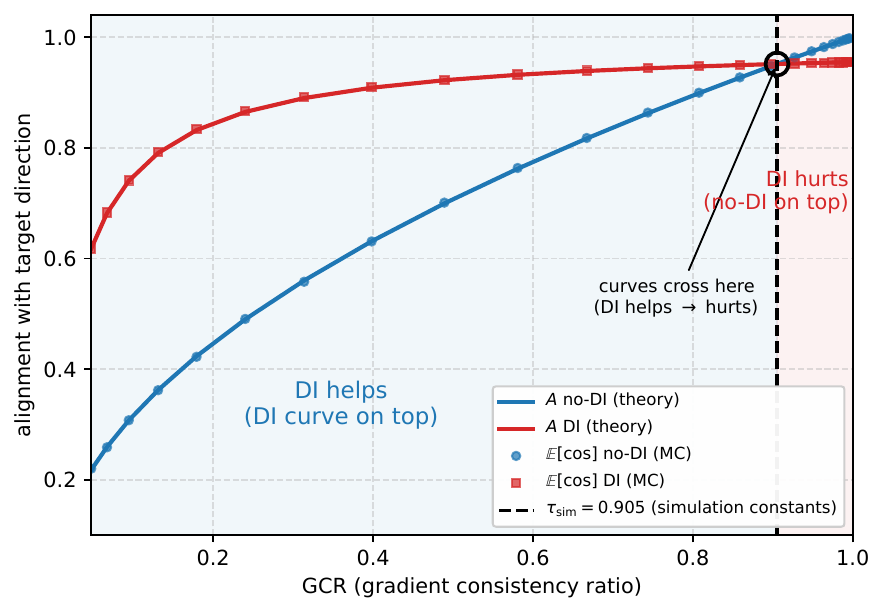}
\caption{The theorem, visualized. As the gradient regime sweeps from standard
(low GCR) to robust (high GCR), the DI effect on alignment changes sign at a single
crossover, and DI necessarily hurts as $\mathrm{GCR}\to1$. The crossover location
($\mathrm{GCR}\approx0.9$ here) is set by the constants $c,g_r,\kappa$, which are
illustrative simulation values rather than measurements. Lines are the
$A$-metric, markers the measured $\mathbb{E}[\cos]$. Setup in \cref{sec:theory}.}
\label{fig:theory_mc}
\end{figure}

\subsection{From Explanation to Indication: LGC as a Regime Identifier}
\label{sec:prediction}

Beyond the binary direction, does LGC quantify DI sensitivity? We find a
clear two-tier picture.
LGC is a reliable \textbf{coarse} regime indicator: on ImageNet the standard
surrogates span $0.36$--$0.90$ and the robust ones sit at $0.98$ or above, a
separation with a wide margin at $\tau_{\text{op}}{=}0.92$, and that separation is
all CG-DI uses. The margin is narrower elsewhere. On CIFAR-10 the standard
surrogates run to $0.937$, close enough to the threshold to misroute
(\cref{sec:heldout}), and among the robust ImageNet models Salman $\epsilon{=}8/255$ sits
at $0.80$, below both $\tau_{\text{op}}$ and $\tau_{\text{sim}}$. Whether LGC
also gives \emph{fine-grained} resolution within a
regime is a weaker claim we do not make: the correlation analysis below shows LGC
saturates across the robust region, so we read the quantitative correlations as
\emph{suggestive} of this regime picture rather than a per-surrogate predictor of
harm.

\textbf{How the two sides depend on the resize rate.} Sweeping $r$ over the full
thirteen-surrogate pool (\cref{tab:di_ablation}) separates two behaviors. The
robust side barely moves: its mean effect stays
between $-7.0$ and $-8.1$ across the whole range, and there is \emph{no ``safe''
near-identity resize}, since even $r{=}0.97$, a single-pixel shrink on
$32{\times}32$ images, already costs $-4.7$ to $-10.8$ depending on the
surrogate. Any interpolation perturbs an already-aligned gradient, and making the
interpolation gentler does not help.

The standard side is where $r$ matters, and it is harmed at both extremes: mean
$-3.85$ at $r{=}0.97$, $-0.68$ at $r{=}0.9$, and $-6.92$ at $r{=}0.6$. The
consequence is that the separation between the two groups is not monotone in $r$
either. It is widest near $r{=}0.95$ and collapses to $1.2$ pp at $r{=}0.6$, where
resize is aggressive enough to harm everything (\cref{sec:blade_gap}). The harm
therefore does not simply amplify with aggressiveness: the robust side's minimum
sits at the default rate, and both smaller and larger distortions cost more.

\begin{table}[t!]
\caption{Resize rate vs.\ harm on CIFAR-10 ($p{=}0.8$; $\Delta$ASR in pp vs.\ $p{=}0$,
$N{=}1{,}000$, 5 seeds, all-images basis), transferred to a naturally trained
WRN-28-10 held out of the surrogate pool.}
\label{tab:di_ablation}
\centering
\small
\begin{tabular}{lcccccc}
\toprule
Surrogate & $r$=0.97 & $r$=0.95 & $r$=0.9 & $r$=0.8 & $r$=0.7 & $r$=0.6 \\
\midrule
\multicolumn{7}{l}{\emph{Robust}} \\
Rice & $-10.80$ & $-9.40$ & $-9.32$ & $-9.86$ & $-10.38$ & $-11.00$ \\
Sehwag (proxy) & $-10.68$ & $-10.06$ & $-10.00$ & $-9.82$ & $-10.00$ & $-10.46$ \\
Engstrom & $-9.54$ & $-9.32$ & $-9.06$ & $-9.30$ & $-9.90$ & $-10.54$ \\
Sehwag-R18 (proxy) & $-9.38$ & $-8.24$ & $-8.62$ & $-8.92$ & $-9.10$ & $-9.08$ \\
Zhang (TRADES) & $-7.24$ & $-6.42$ & $-6.72$ & $-7.22$ & $-7.80$ & $-8.42$ \\
Carmon & $-6.68$ & $-6.56$ & $-5.80$ & $-5.90$ & $-6.32$ & $-6.84$ \\
Peng & $-5.04$ & $-4.86$ & $-4.54$ & $-4.38$ & $-5.06$ & $-5.58$ \\
Gowal & $-5.02$ & $-5.20$ & $-4.24$ & $-4.86$ & $-5.80$ & $-5.82$ \\
Wang & $-4.66$ & $-4.68$ & $-4.84$ & $-4.58$ & $-5.34$ & $-5.30$ \\
\emph{mean} & $-7.67$ & $-7.19$ & $\mathbf{-7.02}$ & $-7.20$ & $-7.74$ & $-8.12$ \\
\midrule
\multicolumn{7}{l}{\emph{Standard}} \\
DenseNet121 & $-2.08$ & $+1.14$ & $+1.64$ & $-0.76$ & $-3.22$ & $-4.66$ \\
VGG16 & $-4.12$ & $-0.28$ & $+0.06$ & $-1.26$ & $-3.82$ & $-7.10$ \\
ResNet50 & $-5.10$ & $-1.12$ & $-1.24$ & $-2.24$ & $-4.18$ & $-5.70$ \\
ResNet18 & $-4.10$ & $-1.54$ & $-3.16$ & $-6.48$ & $-9.08$ & $-10.22$ \\
\emph{mean} & $-3.85$ & $\mathbf{-0.45}$ & $-0.68$ & $-2.68$ & $-5.08$ & $-6.92$ \\
\midrule
gap & 3.82 & \textbf{6.74} & 6.34 & 4.52 & 2.67 & 1.20 \\
\bottomrule
\end{tabular}
\end{table}
\begin{figure}[t!]
\centering
\includegraphics[width=1.0\linewidth]{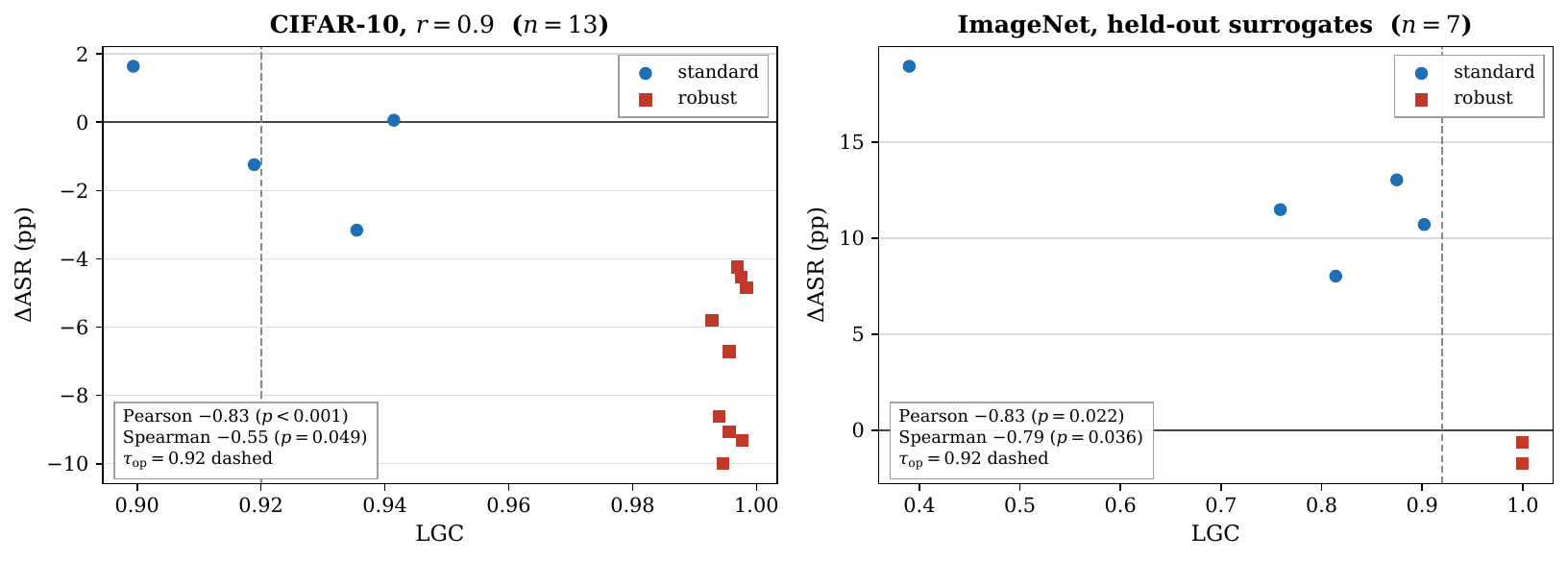}
\caption{LGC against the DI effect. Both panels use DI at $p{=}0.8$, the
all-images basis, and a target set disjoint from the surrogate pool, so they are
directly comparable. (\emph{Left}) CIFAR-10, 13 surrogates at the default
$r{=}0.9$, transferred to a naturally trained WRN-28-10 ($N{=}1{,}000$, 5 seeds).
(\emph{Right}) ImageNet, the 7 held-out surrogates of \cref{tab:heldout},
each transferred to the cross-family targets other than itself ($N{=}1{,}000$, 5
seeds). The controlled $\epsilon$-spectrum is not used here; the text says why.}
\label{fig:lgc_harm}
\end{figure}

\textbf{Evidence: primary and supporting.} The robust core of the empirical case
is \emph{directional}: DI harms the robust surrogate on all 10 attacks and helps
the standard one on 9 of 10 (\cref{tab:imagenet,tab:modern}). Because those ten
attacks are not independent, the strength of that pattern is what
\cref{tab:panel_stats} reports, through an image-block permutation test and a
pooled estimate over attacks, rather than a count.
That is the level at which CG-DI acts. The correlations
(\cref{fig:lgc_harm,tab:correlation}) are consistent with it but we read them only
as \emph{supporting}. On CIFAR-10, over 13 surrogates transferring to a target
outside the pool, LGC against the DI effect gives Pearson $-0.83$ ($p<0.001$) at
the default $r{=}0.9$; on the seven held-out ImageNet surrogates it gives
$-0.83$ ($p{=}0.022$) with Spearman $-0.79$ ($p{=}0.036$). Three caveats temper
these.

\begin{table}[t!]
\caption{LGC against DI sensitivity. The unit of the correlation is the
surrogate and $n$ is the number of surrogates; ASR is on the all-images basis,
with the target set disjoint from the surrogate pool in every row. ``LOO'' is the range of Pearson under
leave-one-out, which separates a relation from a single influential point. Every
row is a two-group separation rather than a graded relation, so Spearman is the
more conservative summary.}
\label{tab:correlation}
\centering
\small
\begin{tabular}{llcccc}
\toprule
Setting & $n$ & std / rob & Pearson & Spearman & LOO Pearson \\
\midrule
CIFAR-10, $r{=}0.9$ & 13 & 4 / 9 & $-0.83$ ($p{<}0.001$) & $-0.55$ ($p{=}0.049$) & $-0.86$ to $-0.75$ \\
CIFAR-10, $r{=}0.6$ & 13 & 4 / 9 & $-0.33$ ($p{=}0.27$) & $-0.13$ ($p{=}0.67$) & $-0.49$ to $-0.12$ \\
ImageNet, held out & 7 & 5 / 2 & $-0.83$ ($p{=}0.022$) & $-0.79$ ($p{=}0.036$) & $-0.90$ to $-0.80$ \\
ImageNet $\epsilon$-sweep & 6 & 1 / 5 & $-0.87$ ($p{=}0.025$) & $-0.26$ ($p{=}0.62$) & $-0.96$ to $\mathbf{-0.33}$ \\
\bottomrule
\end{tabular}
\end{table}

First, how much either coefficient rests on a single point differs sharply
between the two panels of \cref{fig:lgc_harm}, and this is the reason we use the held-out surrogates
rather than the controlled $\epsilon$-spectrum for the ImageNet panel. That sweep
contains only one standard model, and its correlation is carried by that one
point: $-0.87$ over all six, but $-0.33$ ($p{=}0.59$) with the standard point
removed, while removing any other point leaves it between $-0.85$ and $-0.96$.
Six points with a single high-leverage observation cannot support a claim about a
relation. The held-out panel has five standard surrogates spanning LGC
$0.39$--$0.90$, and leave-one-out keeps Pearson within $-0.80$ to $-0.90$.

That panel is not free of leverage either, and the difference is worth stating
precisely. Swin-B sits at LGC $0.39$, isolated from the next surrogate by $0.37$
on the axis, which gives it a leverage of $0.85$ and a Cook's distance of $4.2$.
Removing it barely moves the coefficient, from $-0.83$ to $-0.80$, but it does
move the $p$-value from $0.022$ to $0.054$. With seven points that is what one
should expect: the estimate is stable, the significance is not, and we therefore
rest the argument on the sign and the ordering rather than on a calibrated
$p$-value. Swin-B's low LGC is not a measurement artifact: it reappears at
$0.36$ in the independent measurement of
\cref{tab:frequency}; we read it as shifted-window attention producing locally
discontinuous gradients, which is a conjecture we do not test.

Second, both coefficients still reflect the \emph{coarse} standard-versus-robust
separation more than fine structure. The predictor is bimodal in both panels,
with nothing between the clusters, so a linear coefficient largely measures the
gap between them. Spearman is the more conservative summary, and on CIFAR-10 it
only just clears the line in \cref{tab:correlation} ($-0.55$, $p{=}0.049$).

Third, the CIFAR-10 coefficient depends on the transform: it is significant for
$r\in[0.8,0.97]$ and gone at $r{=}0.6$ ($-0.33$, $p{=}0.27$), where every
surrogate in the pool is harmed and there is no separation left to detect. We
report the default operator and the aggressive one side by side rather than
choosing between them. Within the robust region LGC also saturates
(\cref{tab:lgc_spectrum}) while the harm still varies several-fold, so LGC does
not resolve harm inside that regime.

In practice this supports a simple a-priori check from a single
gradient-consistency measurement: a surrogate with LGC $>0.95$ should be treated
as robust, and disabling diversity is the safer default for it. We attach no
typical loss to that check: the harm runs from $-0.8$ pp on a held-out ImageNet
source to $-8.1$ averaged over nine CIFAR-10 robust surrogates at $r{=}0.6$, so
any single headline number would misstate most settings.

\section{Application: Consistency-Guided Diversity Input}
\label{sec:application}

As a direct, falsifiable consequence of our findings, we propose \textbf{CG-DI}
(Consistency-Guided Diversity Input), a training-free rule that uses LGC to choose
the diversity setting per surrogate. It is a falsification test of LGC's
predictive claim: if an LGC-only switch, with its threshold, probe scale and
sample count fixed in advance, recovers the right regime
decision, LGC's utility extends beyond post-hoc explanation; if it fails, the
claim is refuted. Its simplicity is deliberate.

An attacker usually knows how their own surrogate was trained, so the probe has
to earn its place against simply reading that provenance. Over 27
model--dataset cases the two are each correct on 21 (\cref{sec:ablations}).

\subsection{Algorithm}
\label{sec:algorithm}

CG-DI detects the gradient regime with a lightweight LGC probe and makes a binary
decision: enable diversity ($p{=}0.8$) for standard-like models, disable it
($p{=}0$) for robust-like ones. \Cref{alg:cgdi} is a two-step process: (1) probe
LGC with $K$ local gradient queries; (2) compare against a threshold $\tau$.

\begin{algorithm}[!ht]
\caption{CG-DI: Consistency-Guided Diversity Input}
\label{alg:cgdi}
\small
\begin{algorithmic}[1]
\REQUIRE Surrogate $f$, input $x$, label $y$, threshold $\tau{=}0.92$
\STATE $g \leftarrow \nabla_x \mathcal{L}(f(x), y)$; $S \leftarrow 0$
\FOR{$k = 1$ to $K{=}5$}
\STATE $x'_k \leftarrow x + \mathcal{U}(-1/255, 1/255)$
\STATE $S \leftarrow S + \cos(g, \nabla_{x'_k} \mathcal{L}(f(x'_k), y))$
\ENDFOR
\STATE $p^* \leftarrow 0$ if $S/K > \tau$ else $0.8$
\RETURN Attack with diversity $p^*$
\end{algorithmic}
\end{algorithm}

\textbf{Two granularities, and which one we recommend.} \Cref{alg:cgdi} is stated
per image: it probes $x$ and chooses $p$ for that image. A \emph{model-level}
variant probes a calibration batch once, compares the mean LGC against
$\tau_{\text{op}}$, and uses the resulting $p$ for the whole attack. The two
coincide whenever a surrogate's per-image LGC sits entirely on one side of the
threshold, which is the usual case, and diverge for surrogates near it. We
recommend the model-level variant, and every table below states which one it
reports. The reason is \cref{sec:heldout}: on the two held-out surrogates closest
to $\tau_{\text{op}}$ the per-image variant splits its decisions and loses $7.4$
and $6.4$ pp against blind DI, while the model-level variant does not. The
model-level variant is also the cheaper one, since the probe runs once per
surrogate rather than once per image.

\textbf{What the binary choice costs: a continuous-$p$ sweep.} A binary
$p\in\{0,0.8\}$ is a coarse choice, and the natural check is to sweep $p$ and ask
what it gives up against the best grid value. We
sweep $p\in\{0,0.1,\dots,1.0\}$ for three sources against the four held-out
ImageNet targets (\cref{tab:continuous_p}).

The first thing the sweep shows is that there is no interior optimum to find. All
three curves are strictly monotone in $p$, with a \emph{sign-flipped slope}
between the low-LGC and the high-LGC sources, so the argmax is an endpoint in
every case and the quantity LGC predicts is the slope's sign. A smaller pilot
($N{=}500$, three seeds, one target) cannot separate the argmax from its
runner-up at all; at $N{=}1{,}000$ with five seeds and four targets it can, and
monotonicity is why the smaller sweep could not.

The useful question is therefore not where the optimum sits but what the binary
projection costs against it. On both robust sources CG-DI selects the argmax
exactly, at zero cost. On ResNet-50 it selects $p{=}0.8$ where the best grid
value is $p{=}1$, costing $0.95$ pp $[0.50, 1.42]$. Proximity to ``the continuous
optimum'' would presuppose an interior optimum that these curves do not have, so
the measured cost and its interval are the meaningful comparison.

\begin{table}[t!]
\caption{Continuous-$p$ sweep on ImageNet ($N{=}1{,}000$, 5 seeds, $r{=}0.9$),
pooled over four targets held out of the source pool, on the target
clean-correct basis; the CG-DI column is the model-level decision. Every curve is
monotone in $p$, so $p^*$ is an endpoint and LGC predicts the slope's sign rather
than an interior optimum. ``Cost'' is
ASR$(p^*)-$ASR$(p_{\text{CG-DI}})$ with a 95\% image-paired bootstrap interval.
ASR (\%).}
\label{tab:continuous_p}
\centering
\small
\begin{tabular}{lcccccccc}
\toprule
Source & LGC & $p{=}0$ & $p{=}0.5$ & $p{=}0.8$ & $p{=}1.0$ & $p^*$ & CG-DI & cost \\
\midrule
Standard RN-50 & 0.654 & 31.2 & 45.3 & 48.2 & \textbf{49.2} & $1.0$ & $0.8$ & $0.95$ $[0.50, 1.42]$ \\
Engstrom $\epsilon{=}4$ & 0.980 & \textbf{70.5} & 65.9 & 61.9 & 58.2 & $0.0$ & $0.0$ & $\mathbf{0.00}$ \\
Salman $\epsilon{=}2$ & 0.987 & \textbf{77.8} & 74.0 & 71.7 & 68.6 & $0.0$ & $0.0$ & $\mathbf{0.00}$ \\
\bottomrule
\end{tabular}
\end{table}

\textbf{Computational cost.} These $K{=}5$ extra gradient queries are on the
\emph{surrogate}, which the attacker controls and runs locally; CG-DI adds
zero target-model queries (a transfer attack never queries the target while
crafting), so it costs nothing against a query-limited black-box budget, the
constrained resource in API attacks. The only cost is surrogate compute: in
per-image mode CG-DI adds $\sim$75\% wall-clock overhead on a $T{=}10$ attack
(\cref{tab:runtime_cost}); and since LGC is consistent within a model
(\cref{sec:per_image}), pre-computing it once on a small calibration batch
($N{=}50$) and reusing the binary decision amortizes this to $\sim$5\% (and to
essentially zero across many attacks with the same surrogate).

\subsection{Main Results}
\label{sec:results}

CG-DI is a guardrail, so the right baseline to beat is \emph{the better of}
MI-FGSM ($p{=}0$) and DI-FGSM: it should match the stronger one per surrogate
rather than lose to blind DI on robust sources.

\begin{table}[t!]
\caption{CG-DI on CIFAR-10 ($N{=}1{,}000$, 5 seeds, $r{=}0.9$, blind DI at
$p{=}0.8$, source and target clean-correct basis), transferred to a naturally
trained WRN-28-10 and to a robust target, neither of which appears in the
surrogate pool. ``Off'' is the fraction of images the per-image rule disables DI
on; the last two columns are CG-DI minus the better of MI-FGSM and blind DI in
pp, averaged over the two targets. Against the robust target the standard
sources reach only $1.7$--$2.9\%$, so it says little about them.}
\label{tab:cifar_cgdi}
\centering
\small
\begin{tabular}{lcc|cccc|cc}
\toprule
& & & \multicolumn{4}{c|}{$\to$ WRN-28-10 (standard)} & \multicolumn{2}{c}{vs.\ best} \\
Source & LGC & off & MI-FGSM & blind DI & per-img & model & per-img & model \\
\midrule
Engstrom & 0.995 & 100\% & \textbf{20.93} & 13.63 & 20.93 & \textbf{20.93} & $+0.00$ & $+0.00$ \\
Rice & 0.997 & 100\% & \textbf{17.68} & 10.39 & 17.68 & \textbf{17.68} & $+0.00$ & $+0.00$ \\
\midrule
DenseNet-121 & 0.872 & 36\% & 89.74 & \textbf{92.31} & 90.92 & \textbf{92.31} & $-0.76$ & $+0.00$ \\
ResNet-50 & 0.904 & 52\% & 92.68 & 92.94 & 93.15 & \textbf{92.94} & $-0.15$ & $+0.00$ \\
ResNet-18 & 0.923 & 65\% & \textbf{90.79} & 89.21 & 89.89 & \textbf{90.79} & $-0.53$ & $-0.09$ \\
VGG-16 & 0.937 & 83\% & 75.27 & \textbf{78.32} & 75.85 & 75.27 & $-1.52$ & $-1.83$ \\
\midrule
\multicolumn{7}{r|}{\emph{mean over all six sources}} & $-0.49$ & $\mathbf{-0.32}$ \\
\bottomrule
\end{tabular}
\end{table}

\textbf{CIFAR-10.} \Cref{tab:cifar_cgdi} shows the guardrail behaving as designed
on the side it is for and costing something on the other. On both robust
surrogates it recovers the full $7.3$ pp that blind DI would have given away. The
standard side is where the fixed threshold is paid for, because
$\tau_{\text{op}}{=}0.92$ falls inside the standard surrogates' LGC range rather
than below it. The recommended model-level rule picks the better fixed setting
outright on two of the four and gives up $0.09$ and $1.83$ pp on the other two,
$-0.32$ pp on average. The per-image variant splits its decisions on 36--83\% of
the images and does slightly worse at $-0.49$ pp; \cref{sec:heldout} gives the
sharper reason for preferring the model-level rule. This is the price of a
fixed threshold on this dataset.

\textbf{ImageNet.} On ImageNet (\cref{tab:imagenet}; per-target breakdown in
\cref{sec:imagenet_extended}), CG-DI prevents the $10.3{\pm}0.3$ pp average ASR
degradation that blind DI causes on the robust source: for Engstrom it sets
$p{=}0$ and matches MI-FGSM's $76.0\%$; for ResNet50 it enables $p{=}0.8$ and
captures the majority of DI's gains. On the robust surrogates we evaluate, CG-DI
matches MI-FGSM and so does not lose to blind DI.

\subsection{A Held-Out Test}
\label{sec:heldout}

The results above are on surrogates the rule was developed against. To ask whether
it transfers, we froze $\tau_{\text{op}}{=}0.92$, the probe scale
$\epsilon_{chk}{=}1/255$, $K{=}5$ and the binary choice $p\in\{0,0.8\}$ at their
published values, wrote each surrogate's predicted sign to disk, took a local
SHA-256 of the prediction payload, and only then ran the attacks. The record
covers fifteen surrogates; the seven marked held out are the ones whose signs
were specified prospectively, and only those are scored here. There is no
external timestamp, so the guarantee is that the payload was fixed before the
source-side outcomes existed, not that a third party witnessed it. Seven
surrogates were used as sources for the first
time: five standard architectures and two adversarial-training recipes not in the
development set, ARES ConvNeXt-AT~\citep{liu2025comprehensive} and Singh
ConvStem-AT~\citep{singh2023revisiting}.

\begin{table}[t!]
\caption{Held-out evaluation of the CG-DI rule ($N{=}1{,}000$, 5 seeds, $r{=}0.9$,
four cross-family ImageNet targets, a surrogate never scored against itself). $D$
is the DI effect in pp on the source and target clean-correct basis, pooled over images
and targets; intervals are image-paired bootstrap. The CG-DI column is the
model-level decision; on these seven surrogates it coincides with the per-image
rule except on ViT-B/16 and ConvNeXt-B, discussed in the text. Predictions were
hashed before these runs.}
\label{tab:heldout}
\centering
\small
\begin{tabular}{llccrcc}
\toprule
Source & Type & LGC & CG-DI & $D$ ($p{=}0.8$) & 95\% CI & $D$ ($p{=}1$) \\
\midrule
ARES ConvNeXt-AT & Rob & 0.999 & $p{=}0$ & $-2.17$ & $[-3.16, -1.16]$ & $-3.49$ \\
Singh ConvStem-AT & Rob & 1.000 & $p{=}0$ & $-0.81$ & $[-1.74, +0.15]$ & $-1.85$ \\
\midrule
Swin-B & Std & 0.390 & $p{=}0.8$ & $+23.30$ & $[+21.26, +25.27]$ & $+24.10$ \\
DenseNet-121 & Std & 0.759 & $p{=}0.8$ & $+14.15$ & $[+12.73, +15.52]$ & $+13.62$ \\
InceptionV3 & Std & 0.814 & $p{=}0.8$ & $+9.76$ & $[+8.36, +11.18]$ & $+9.20$ \\
ConvNeXt-B & Std & 0.875 & $p{=}0.8$ & $+16.23$ & $[+14.78, +17.78]$ & $+16.96$ \\
ViT-B/16 & Std & 0.902 & $p{=}0.8$ & $+13.23$ & $[+11.56, +14.97]$ & $+13.16$ \\
\bottomrule
\end{tabular}
\end{table}

\Cref{tab:heldout} is the result: the rule matches the sign of the effect on all
seven surrogates, and six of the seven effects are individually resolved. The one
that is not is Singh ConvStem-AT, whose interval contains zero; we count it as a
correct sign and not as a confirmed harm. Two things the table does not show
belong here. First, the same frozen rule applied
\emph{per image} rather than per model disagrees with itself on the two standard
surrogates closest to the threshold: it disables DI on $61.7\%$ of images for
ViT-B/16 and $42.7\%$ for ConvNeXt-B, costing $7.4$ and $6.4$ pp against blind DI.
We evaluate and recommend the model-level rule, and every table states which rule
it reports. Second, the same protocol on CIFAR-10 puts $\tau_{\text{op}}$
inside the standard surrogates' LGC range ($0.872$--$0.937$ in
\cref{tab:cifar_cgdi}), so it classifies two of the four as robust-like and
disables DI on them. Only one of those two is an error: against the standard
target VGG-16 gains $+3.05$ with DI ($78.32$ against $75.27$), so switching it off
costs something, whereas ResNet-18 is harmed at $-1.58$ ($89.21$ against $90.79$)
despite standard training, and disabling DI there is the right call for a reason
training metadata would miss. We leave the threshold where it was frozen;
re-fitting it per dataset would turn a held-out test into a fitted one.

\textbf{The unseen dataset, reported in full.} CIFAR-100 was registered as a
third dataset before these runs, and it is where the rule does worst. At the
default $r{=}0.9$ the DI effect on
CIFAR-100 lies between $-0.10$ and $+1.45$ pp across the seven surrogates, that
is, the Scissors Effect has essentially vanished at that operating point, and the
rule is correct on 3 of 7, with only 4 of the 7 effects resolved away from zero
at all. Under aggressive resize ($r{=}0.6$) the effect
reappears weakly and the rule is correct on 5 of 7. \Cref{tab:cifar100_full}
gives all seven surrogates, their predictions and their intervals at both rates.
We do not count CIFAR-100 as
a successful dataset transfer. What it shows is a scope boundary: where the
effect is a percentage point or less on five of seven surrogates and never more
than $1.5$, a rule that predicts its sign has little to predict, and the probe's regime split carries no information about an
outcome that is itself indistinguishable from zero.

\subsection{Ablations}
\label{sec:ablations}

\textbf{Threshold sensitivity.} We ablate $\tau\in[0.80,0.98]$ on ImageNet
(target Swin-B, $N{=}500$, 3 seeds; \cref{sec:sensitivity}). The robust source
(Engstrom) holds $\sim$67.1\% across $\tau\in[0.80,0.96]$, dropping only at the
extreme $\tau{=}0.98$; the standard source (ResNet50) plateaus at $\sim$53.3\% for
$\tau\ge0.86$. This gives a broad safe zone $\tau\in[0.86,0.96]$ with our default
$\tau{=}0.92$ centered in it. The ablation routes each image separately, so it
measures the per-image variant; the model-level rule we recommend changes its
decision only when the mean LGC crosses $\tau$, and on ImageNet that gap is wide
(\cref{sec:prediction}). As long as a substantial LGC gap separates source
types, the exact $\tau$ has little practical impact.

\textbf{Why a probe rather than provenance.} We do not claim a case where the
probe beats provenance, and we did not find one. We have measured 27 model--dataset pairs: the thirteen CIFAR-10
surrogates, the seven held-out ImageNet sources and the seven CIFAR-100
sources. Every case is scored at $r{=}0.9$, with CIFAR-10 transferred to a
target outside its own surrogate pool. The two rules are each correct on 21. Each dataset is scored on
its own run's basis, all-images for CIFAR-10 and source-and-target clean-correct
for the other two, so this tally is the one place in the paper that pools
bases; it counts signs, which the basis does not change except where the effect
is already within noise. Five of each rule's six
misses are the same cases: CIFAR-10's ResNet-50, mildly harmed at $-1.24$ despite
standard training and with an LGC just below the threshold, and the four
CIFAR-100 surrogates where the effect is at most $1.5$ pp and three of the four
are small resolved positives. That leaves two
disagreements, once in each direction. Both are CIFAR-10 standard surrogates
whose LGC sits near $\tau_{\text{op}}$: the probe is right about ResNet-18, which
is harmed at $-3.16$ despite standard training, and wrong about VGG-16, which is
flat at $+0.06$. The tally is also setting-dependent, tying at 23 apiece if
CIFAR-100 is scored at $r{=}0.6$ instead. What remains is the narrower reason for
preferring a measurement: provenance is unverifiable for fine-tuned, distilled or
third-party checkpoints, so a rule that reads it cannot be applied where it is
missing. We have no measurement from such a setting and do not claim one.

\textbf{Smaller perturbation budget.} The Scissors Effect persists at
$\epsilon{=}4/255$ (\cref{tab:small_eps}), supporting that it is a property of
gradient geometry rather than of a specific budget.

\begin{table}[t!]
\caption{Small perturbation budget ($\epsilon{=}4/255$, ImageNet, DI-FGSM at
$p{=}1$, $N{=}1{,}000$, 5 seeds, all-images basis). The Scissors direction is
unchanged at a quarter of the main budget.}
\label{tab:small_eps}
\centering
\small
\begin{tabular}{lccc}
\toprule
Source & MI-FGSM & DI-FGSM & CG-DI \\
\midrule
Robust & \textbf{34.9$\pm$0.0\%} & 32.0$\pm$0.4\% & \textbf{34.9$\pm$0.0\%} \\
Standard & 32.5$\pm$0.0\% & \textbf{47.1$\pm$0.5\%} & 46.0$\pm$0.7\% \\
\bottomrule
\end{tabular}
\end{table}

\textbf{When does LGC alone mislead? A corner case.} The robustness spectrum of \cref{sec:eps_sweep,sec:recipe_panel} holds seven
robust sources: the five Salman $\epsilon$-sweep checkpoints plus Engstrom and
Mo2022. CG-DI classifies 6 of them correctly, along with the standard
reference. The single
miss is Salman $\epsilon{=}8/255$ \emph{ResNet-50} (LGC $0.80$, so CG-DI would
enable DI). This is a failure of the rule. Capacity
strain is our reading of why this checkpoint's LGC is atypical, not a controlled
finding: at the
same training $\epsilon{=}8/255$, Mo2022's \emph{ViT-B} yields LGC $1.00$
(correct), and Salman \etal's own clean accuracy for this checkpoint drops to
$54.5\%$ (a $21.6$-point fall from $\epsilon{=}0$), a documented capacity strain;
RobustBench and \citet{mo2022adversarial} likewise reserve higher-capacity
backbones for $\epsilon\ge8/255$. We report this case in \cref{sec:corner_case}.

\section{Discussion}
\label{sec:discussion}

\textbf{Implications.} Many modern attacks (\eg, Admix, SSA, SIA) stack
transformations expecting extra diversity to help. Our results show this is
unreliable once the surrogate is robust: a standard recipe can become weaker
simply because the source model changed. The point matters most for
\emph{defense evaluation}. A robustness number obtained with a robust surrogate
and DI left on is partly a measurement of the attack's own handicap, and the
error runs in the direction that flatters the defense.

Our controlled sweep also connects the effect to the
robustness--transferability trade-off studied by \citet{springer2021little} and
\citet{zhang2024why}: base MI-FGSM transferability reproduces their
little-robustness peak, and the DI effect rides on top of it as a separate axis
that changes sign while the base curve is still rising.

\textbf{Limitations.} (1)~The effect is several times smaller at $32{\times}32$
than on ImageNet, and its two sides behave differently there: the harm to robust
surrogates is present at every resize rate we tested, while the standard side's
gain appears only at moderate transform probability and vanishes by $p{=}1$. Our
central claim is therefore strongest on ImageNet, and any $32{\times}32$
statement has to name the transform setting it holds at. CIFAR-100 marks the
lower edge of that: at the default resize rate the effect spans $-0.1$ to
$+1.5$ pp across the seven surrogates and the probe is right about only 3 of
them, which we read as the effect running out rather than as the rule
transferring.
(2)~ImageNet uses three
robust surrogates spanning two architectures and two recipes; CIFAR-10 covers
nine including TRADES and proxy-distribution training, but broader coverage is
limited by public checkpoint availability. (3)~SSA interacts with DI on both
source types, and a probe that reads only the surrogate's gradient geometry
cannot detect a method-specific interference of that kind. (4)~A binary
$p\in\{0,0.8\}$ is a guardrail, not an optimizer: the ASR$(p)$ curves are
monotone, so the rule chooses between two endpoints rather than locating an
optimum, and it gives up $0.95$ pp $[0.50, 1.42]$ against the better endpoint on
the one standard surrogate where the two differ (\cref{tab:continuous_p}).
(5)~Ensemble surrogates mixing gradient regimes are unexplored. (6)~The probe can
mislead when capacity rather than robustness drives low gradient consistency, as
in the Salman $\epsilon{=}8$ ResNet-50 corner case.

\textbf{On theory.} Under an additive signal--noise gradient model with resize as
a linear operator, \cref{prop:scissors} (proved in \cref{sec:theory}, with
machine-checked algebra) shows DI helps iff $\mathrm{GCR}<\tau^\star$ and
necessarily hurts as $\mathrm{GCR}\to1$, so a crossover of the observed kind is
possible under a simplified model. Two things stop it from being more than that.
The linearization it assumes does not hold on real gradients, and the constants
it needs do not admit an estimator that survives a held-out test
(\cref{sec:measured_constants}). The model also extends formally to methods that
condition the gradient before DI (\cref{cor:ssa}), but a pre-registered
measurement shows its premise fails for SSA, whose effective regime shifts the
wrong way (\cref{sec:ssa}). Why SSA and DI compound as they do is open.

\textbf{Future work.} Two directions seem natural. First, a method-aware version
of the probe, able to recognise cases like SSA where a second transform family
is already acting on the gradient. Second, whether the effect appears in
multimodal systems; our preliminary CLIP results (\cref{sec:clip}) suggest the
probe remains informative beyond standard supervised vision models. Continuous
$p$-selection, ensemble surrogates, and a tighter link between gradient
consistency and loss-surface curvature are also left open.

\section{Conclusion}
\label{sec:conclusion}

We revisited an assumption largely taken for granted in transfer attacks: that
more input diversity always helps. It does not. For adversarially trained
surrogates, resize-based DI can substantially hurt transferability, producing the
opposite trend from standard models, a mismatch we call the Scissors Effect. The
scope is specific. The effect is strong and consistent on ImageNet and several
times smaller at $32{\times}32$, where the standard side's gain appears only at
DI's conventional transform probability; resize is its dominant component, about four
times the translation effect, though both are harmful; and a controlled robustness-strength sweep shows
the harm is graded rather than binary, appearing already in the little-robustness
regime. That places the phenomenon within the broader
robustness--transferability trade-off as a DI-specific specialization of it.

For why it happens, the measurement that carries the most weight is the
bias--variance decomposition: DI's averaging buys a large variance reduction on
noisy standard gradients and, with one exception, none on robust ones, while both pay a comparable
gradient displacement for the resize. The crossover theorem shows such a reversal is
possible under a simplified gradient model, and we report the distance between
that model and measured gradients rather than presenting it as the mechanism.
What remains unexplained is why the harm does not simply grow with transform
strength, and why SSA compounds with DI in the way it does.

The regime difference is compact enough to act on. A gradient-consistency probe,
frozen and locally hashed before the runs,
predicts the sign of the DI effect on seven unseen
ImageNet surrogates spanning two unseen training recipes, and selecting the
diversity setting from it avoids the $10.3$ pp average ASR drop that blind DI
causes on robust surrogates while keeping most of DI's benefit on standard ones.
It is a guardrail and not an optimizer, and it does not transfer as far as we
would like: it fails at per-image granularity, its threshold does not carry to
CIFAR-10, on CIFAR-100 it is correct on 3 of 7 in a setting where the effect
itself has largely vanished, and where training provenance is known and correct
it offers no measured advantage over reading it.

The practical takeaway does not depend on any of the mechanistic questions
staying open. DI should not be a default-on component of a transfer attack. When
it is left on with a robust surrogate, the attack that gets reported is weaker
than the attack that was available, and the model under evaluation looks more
robust than it is.

\subsubsection*{Broader Impact Statement}
This work studies when a standard component of transfer-based adversarial
attacks (input diversity) helps or hurts. Like all research on adversarial
transferability, the techniques could in principle be used to attack deployed
black-box models. However, our central message is cautionary and primarily
benefits \emph{defenders} and \emph{evaluators}: we show that leaving input
diversity on by default can make attacks from robust surrogates look weaker than
they actually are, leading to over-optimistic robustness estimates. Reporting
this effect, and providing a cheap gradient-geometry probe to detect it, helps
practitioners avoid systematically under-estimating the strength of transfer
attacks during defense evaluation. We use only publicly available models and
datasets and introduce no new attack capability beyond a guideline for setting an
existing hyper-parameter.

\bibliographystyle{tmlr}
\bibliography{references}

\begin{thebibliography}{54}
\providecommand{\natexlab}[1]{#1}
\providecommand{\url}[1]{\texttt{#1}}
\expandafter\ifx\csname urlstyle\endcsname\relax
  \providecommand{\doi}[1]{doi: #1}\else
  \providecommand{\doi}{doi: \begingroup \urlstyle{rm}\Url}\fi

\bibitem[Athalye et~al.(2018)Athalye, Engstrom, Ilyas, and
  Kwok]{athalye2018synthesizing}
Anish Athalye, Logan Engstrom, Andrew Ilyas, and Kevin Kwok.
\newblock Synthesizing robust adversarial examples.
\newblock In \emph{International Conference on Machine Learning (ICML)}, pp.\
  284--293. PMLR, 2018.

\bibitem[Carmon et~al.(2019)Carmon, Raghunathan, Schmidt, Duchi, and
  Liang]{carmon2019unlabeled}
Yair Carmon, Aditi Raghunathan, Ludwig Schmidt, John~C. Duchi, and Percy~S.
  Liang.
\newblock Unlabeled data improves adversarial robustness.
\newblock In \emph{Advances in Neural Information Processing Systems
  (NeurIPS)}, volume~32, 2019.

\bibitem[Croce \& Hein(2020)Croce and Hein]{croce2020reliable}
Francesco Croce and Matthias Hein.
\newblock Reliable evaluation of adversarial robustness with an ensemble of
  diverse parameter-free attacks.
\newblock In \emph{International Conference on Machine Learning (ICML)}, pp.\
  2206--2216, 2020.

\bibitem[Croce et~al.(2021)Croce, Andriushchenko, Sehwag, Debenedetti,
  Flammarion, Chiang, Mittal, and Hein]{croce2021robustbench}
Francesco Croce, Maksym Andriushchenko, Vikash Sehwag, Edoardo Debenedetti,
  Nicolas Flammarion, Mung Chiang, Prateek Mittal, and Matthias Hein.
\newblock {RobustBench}: A standardized adversarial robustness benchmark.
\newblock In \emph{Advances in Neural Information Processing Systems (NeurIPS),
  Datasets and Benchmarks Track}, 2021.

\bibitem[Deng et~al.(2009)Deng, Dong, Socher, Li, Li, and
  Fei-Fei]{deng2009imagenet}
Jia Deng, Wei Dong, Richard Socher, Li-Jia Li, Kai Li, and Li~Fei-Fei.
\newblock {ImageNet}: A large-scale hierarchical image database.
\newblock In \emph{IEEE Conference on Computer Vision and Pattern Recognition
  (CVPR)}, pp.\  248--255, 2009.

\bibitem[Dong et~al.(2018)Dong, Liao, Pang, Su, Zhu, Hu, and
  Li]{dong2018boosting}
Yinpeng Dong, Fangzhou Liao, Tianyu Pang, Hang Su, Jun Zhu, Xiaolin Hu, and
  Jianguo Li.
\newblock Boosting adversarial attacks with momentum.
\newblock In \emph{Proceedings of the IEEE/CVF Conference on Computer Vision
  and Pattern Recognition (CVPR)}, pp.\  9185--9193, 2018.

\bibitem[Dong et~al.(2019)Dong, Pang, Su, and Zhu]{dong2019evading}
Yinpeng Dong, Tianyu Pang, Hang Su, and Jun Zhu.
\newblock Evading defenses to transferable adversarial examples by
  translation-invariant attacks.
\newblock In \emph{Proceedings of the IEEE/CVF Conference on Computer Vision
  and Pattern Recognition (CVPR)}, pp.\  4312--4321, 2019.

\bibitem[Dosovitskiy et~al.(2021)Dosovitskiy, Beyer, Kolesnikov, Weissenborn,
  Zhai, Unterthiner, Dehghani, Minderer, Heigold, Gelly, Uszkoreit, and
  Houlsby]{dosovitskiy2021image}
Alexey Dosovitskiy, Lucas Beyer, Alexander Kolesnikov, Dirk Weissenborn,
  Xiaohua Zhai, Thomas Unterthiner, Mostafa Dehghani, Matthias Minderer, Georg
  Heigold, Sylvain Gelly, Jakob Uszkoreit, and Neil Houlsby.
\newblock An image is worth 16x16 words: Transformers for image recognition at
  scale.
\newblock In \emph{International Conference on Learning Representations
  (ICLR)}, 2021.

\bibitem[Engstrom et~al.(2019)Engstrom, Ilyas, Salman, Santurkar, and
  Tsipras]{engstrom2019robustness}
Logan Engstrom, Andrew Ilyas, Hadi Salman, Shibani Santurkar, and Dimitris
  Tsipras.
\newblock Robustness ({Python Library}), 2019.
\newblock URL \url{https://github.com/MadryLab/robustness}.

\bibitem[Fan et~al.(2025)Fan, Chen, Zhou, and Wang]{fan2025transferable}
Mingyuan Fan, Cen Chen, Wenmeng Zhou, and Yinggui Wang.
\newblock Transferable adversarial examples with {Bayesian} approach.
\newblock In \emph{Proceedings of the 20th ACM Asia Conference on Computer and
  Communications Security (ASIACCS)}, pp.\  517--529, 2025.

\bibitem[Ge et~al.(2023)Ge, Liu, Wang, Shang, and Liu]{ge2023boosting}
Zhijin Ge, Hongying Liu, Xiaosen Wang, Fanhua Shang, and Yuanyuan Liu.
\newblock Boosting adversarial transferability by achieving flat local maxima.
\newblock In \emph{Advances in Neural Information Processing Systems
  (NeurIPS)}, volume~36, pp.\  70141--70161, 2023.

\bibitem[Goodfellow et~al.(2015)Goodfellow, Shlens, and
  Szegedy]{goodfellow2015explaining}
Ian~J. Goodfellow, Jonathon Shlens, and Christian Szegedy.
\newblock Explaining and harnessing adversarial examples.
\newblock In \emph{International Conference on Learning Representations
  (ICLR)}, 2015.

\bibitem[Gowal et~al.(2020)Gowal, Qin, Uesato, Mann, and
  Kohli]{gowal2020uncovering}
Sven Gowal, Chongli Qin, Jonathan Uesato, Timothy Mann, and Pushmeet Kohli.
\newblock Uncovering the limits of adversarial training against norm-bounded
  adversarial examples.
\newblock \emph{arXiv preprint arXiv:2010.03593}, 2020.

\bibitem[Guo et~al.(2018)Guo, Rana, Ciss{\'e}, and van~der
  Maaten]{guo2018countering}
Chuan Guo, Mayank Rana, Moustapha Ciss{\'e}, and Laurens van~der Maaten.
\newblock Countering adversarial images using input transformations.
\newblock In \emph{International Conference on Learning Representations
  (ICLR)}, 2018.

\bibitem[He et~al.(2016)He, Zhang, Ren, and Sun]{he2016deep}
Kaiming He, Xiangyu Zhang, Shaoqing Ren, and Jian Sun.
\newblock Deep residual learning for image recognition.
\newblock In \emph{Proceedings of the IEEE/CVF Conference on Computer Vision
  and Pattern Recognition (CVPR)}, pp.\  770--778, 2016.

\bibitem[Huang et~al.(2017)Huang, Liu, Van Der~Maaten, and
  Weinberger]{huang2017densely}
Gao Huang, Zhuang Liu, Laurens Van Der~Maaten, and Kilian~Q. Weinberger.
\newblock Densely connected convolutional networks.
\newblock In \emph{Proceedings of the IEEE/CVF Conference on Computer Vision
  and Pattern Recognition (CVPR)}, pp.\  4700--4708, 2017.

\bibitem[Ilyas et~al.(2019)Ilyas, Santurkar, Tsipras, Engstrom, Tran, and
  Madry]{ilyas2019adversarial}
Andrew Ilyas, Shibani Santurkar, Dimitris Tsipras, Logan Engstrom, Brandon
  Tran, and Aleksander Madry.
\newblock Adversarial examples are not bugs, they are features.
\newblock In \emph{Advances in Neural Information Processing Systems
  (NeurIPS)}, volume~32, 2019.

\bibitem[Kim(2020)]{torchattacks2020}
Hoki Kim.
\newblock Torchattacks: A {PyTorch} repository for adversarial attacks.
\newblock \emph{arXiv preprint arXiv:2010.01950}, 2020.

\bibitem[Krizhevsky(2009)]{krizhevsky2009learning}
Alex Krizhevsky.
\newblock Learning multiple layers of features from tiny images.
\newblock Technical report, University of Toronto, 2009.
\newblock URL
  \url{https://www.cs.toronto.edu/~kriz/learning-features-2009-TR.pdf}.

\bibitem[Lin et~al.(2020)Lin, Song, He, Wang, and Hopcroft]{lin2020nesterov}
Jiadong Lin, Chuanbiao Song, Kun He, Liwei Wang, and John~E. Hopcroft.
\newblock Nesterov accelerated gradient and scale invariance for adversarial
  attacks.
\newblock In \emph{International Conference on Learning Representations
  (ICLR)}, 2020.

\bibitem[Liu et~al.(2025)Liu, Dong, Xiang, Yang, Su, Zhu, Chen, He, Xue, and
  Zheng]{liu2025comprehensive}
Chang Liu, Yinpeng Dong, Wenzhao Xiang, Xiao Yang, Hang Su, Jun Zhu, Yuefeng
  Chen, Yuan He, Hui Xue, and Shibao Zheng.
\newblock A comprehensive study on robustness of image classification models:
  Benchmarking and rethinking.
\newblock \emph{International Journal of Computer Vision}, 133:\penalty0
  567--589, 2025.

\bibitem[Liu et~al.(2021)Liu, Lin, Cao, Hu, Wei, Zhang, Lin, and
  Guo]{liu2021swin}
Ze~Liu, Yutong Lin, Yue Cao, Han Hu, Yixuan Wei, Zheng Zhang, Stephen Lin, and
  Baining Guo.
\newblock {Swin Transformer}: Hierarchical vision transformer using shifted
  windows.
\newblock In \emph{Proceedings of the IEEE/CVF International Conference on
  Computer Vision (ICCV)}, pp.\  10012--10022, 2021.

\bibitem[Liu et~al.(2022)Liu, Mao, Wu, Feichtenhofer, Darrell, and
  Xie]{liu2022convnet}
Zhuang Liu, Hanzi Mao, Chao-Yuan Wu, Christoph Feichtenhofer, Trevor Darrell,
  and Saining Xie.
\newblock A {ConvNet} for the 2020s.
\newblock In \emph{Proceedings of the IEEE/CVF Conference on Computer Vision
  and Pattern Recognition (CVPR)}, pp.\  11966--11976, 2022.

\bibitem[Long et~al.(2024)Long, Tao, Li, Lei, and Zhang]{long2024convergence}
Sheng Long, Wei Tao, Shuohao Li, Jun Lei, and Jun Zhang.
\newblock On the convergence of an adaptive momentum method for adversarial
  attacks.
\newblock In \emph{Proceedings of the AAAI Conference on Artificial
  Intelligence}, volume~38, pp.\  14132--14140, 2024.

\bibitem[Long et~al.(2022)Long, Zhang, Zeng, Gao, Liu, Zhang, and
  Song]{long2022frequency}
Yuyang Long, Qilong Zhang, Boheng Zeng, Lianli Gao, Xianglong Liu, Jian Zhang,
  and Jingkuan Song.
\newblock Frequency domain model augmentation for adversarial attack.
\newblock In \emph{European Conference on Computer Vision (ECCV)}, pp.\
  549--566. Springer, 2022.

\bibitem[Madry et~al.(2018)Madry, Makelov, Schmidt, Tsipras, and
  Vladu]{madry2018towards}
Aleksander Madry, Aleksandar Makelov, Ludwig Schmidt, Dimitris Tsipras, and
  Adrian Vladu.
\newblock Towards deep learning models resistant to adversarial attacks.
\newblock In \emph{International Conference on Learning Representations
  (ICLR)}, 2018.

\bibitem[Mo et~al.(2022)Mo, Wu, Wang, Guo, and Wang]{mo2022adversarial}
Yichuan Mo, Dongxian Wu, Yifei Wang, Yiwen Guo, and Yisen Wang.
\newblock When adversarial training meets vision transformers: Recipes from
  training to architecture.
\newblock In \emph{Advances in Neural Information Processing Systems
  (NeurIPS)}, volume~35, pp.\  18599--18611. Curran Associates, Inc., 2022.

\bibitem[Papernot et~al.(2016)Papernot, McDaniel, and
  Goodfellow]{papernot2016transferability}
Nicolas Papernot, Patrick McDaniel, and Ian Goodfellow.
\newblock Transferability in machine learning: from phenomena to black-box
  attacks using adversarial samples.
\newblock \emph{arXiv preprint arXiv:1605.07277}, 2016.

\bibitem[Peng et~al.(2023)Peng, Xu, Cornelius, Hull, Li, Duggal, Phute, Martin,
  and Chau]{peng2023robust}
ShengYun Peng, Weilin Xu, Cory Cornelius, Matthew Hull, Kevin Li, Rahul Duggal,
  Mansi Phute, Jason Martin, and Duen~Horng Chau.
\newblock Robust principles: Architectural design principles for adversarially
  robust {CNN}s.
\newblock In \emph{34th British Machine Vision Conference (BMVC)}. BMVA, 2023.

\bibitem[Radford et~al.(2021)Radford, Kim, Hallacy, Ramesh, Goh, Agarwal,
  Sastry, Askell, Mishkin, Clark, Krueger, and Sutskever]{radford2021learning}
Alec Radford, Jong~Wook Kim, Chris Hallacy, Aditya Ramesh, Gabriel Goh,
  Sandhini Agarwal, Girish Sastry, Amanda Askell, Pamela Mishkin, Jack Clark,
  Gretchen Krueger, and Ilya Sutskever.
\newblock Learning transferable visual models from natural language
  supervision.
\newblock In \emph{International Conference on Machine Learning (ICML)}, pp.\
  8748--8763. PMLR, 2021.

\bibitem[Rice et~al.(2020)Rice, Wong, and Kolter]{rice2020overfitting}
Leslie Rice, Eric Wong, and Zico Kolter.
\newblock Overfitting in adversarially robust deep learning.
\newblock In \emph{International Conference on Machine Learning (ICML)}, pp.\
  8093--8104. PMLR, 2020.

\bibitem[Salman et~al.(2020)Salman, Ilyas, Engstrom, Kapoor, and
  Madry]{salman2020adversarially}
Hadi Salman, Andrew Ilyas, Logan Engstrom, Ashish Kapoor, and Aleksander Madry.
\newblock Do adversarially robust {ImageNet} models transfer better?
\newblock In \emph{Advances in Neural Information Processing Systems
  (NeurIPS)}, volume~33, pp.\  3533--3545, 2020.

\bibitem[Sehwag et~al.(2022)Sehwag, Mahloujifar, Handina, Dai, Xiang, Chiang,
  and Mittal]{sehwag2022proxy}
Vikash Sehwag, Saeed Mahloujifar, Tinashe Handina, Sihui Dai, Chong Xiang, Mung
  Chiang, and Prateek Mittal.
\newblock Robust learning meets generative models: Can proxy distributions
  improve adversarial robustness?
\newblock In \emph{International Conference on Learning Representations
  (ICLR)}, 2022.
\newblock URL \url{https://openreview.net/forum?id=WVX0NNVBBkV}.

\bibitem[Simonyan \& Zisserman(2015)Simonyan and Zisserman]{simonyan2014very}
Karen Simonyan and Andrew Zisserman.
\newblock Very deep convolutional networks for large-scale image recognition.
\newblock In \emph{International Conference on Learning Representations
  (ICLR)}, 2015.

\bibitem[Singh et~al.(2023)Singh, Croce, and Hein]{singh2023revisiting}
Naman~D Singh, Francesco Croce, and Matthias Hein.
\newblock Revisiting adversarial training for {ImageNet}: Architectures,
  training and generalization across threat models.
\newblock In \emph{Advances in Neural Information Processing Systems
  (NeurIPS)}, volume~36, 2023.

\bibitem[Springer et~al.(2021)Springer, Mitchell, and
  Kenyon]{springer2021little}
Jacob~M. Springer, Melanie Mitchell, and Garrett~T. Kenyon.
\newblock A little robustness goes a long way: Leveraging robust features for
  targeted transfer attacks.
\newblock In \emph{Advances in Neural Information Processing Systems
  (NeurIPS)}, volume~34, pp.\  9759--9773, 2021.

\bibitem[Szegedy et~al.(2014)Szegedy, Zaremba, Sutskever, Bruna, Erhan,
  Goodfellow, and Fergus]{szegedy2013intriguing}
Christian Szegedy, Wojciech Zaremba, Ilya Sutskever, Joan Bruna, Dumitru Erhan,
  Ian Goodfellow, and Rob Fergus.
\newblock Intriguing properties of neural networks.
\newblock In \emph{International Conference on Learning Representations
  (ICLR)}, 2014.

\bibitem[Szegedy et~al.(2016)Szegedy, Vanhoucke, Ioffe, Shlens, and
  Wojna]{szegedy2016rethinking}
Christian Szegedy, Vincent Vanhoucke, Sergey Ioffe, Jonathon Shlens, and
  Zbigniew Wojna.
\newblock Rethinking the {Inception} architecture for computer vision.
\newblock In \emph{Proceedings of the IEEE/CVF Conference on Computer Vision
  and Pattern Recognition (CVPR)}, pp.\  2818--2826, 2016.

\bibitem[Tram{\`e}r et~al.(2018)Tram{\`e}r, Kurakin, Papernot, Goodfellow,
  Boneh, and McDaniel]{tramer2018ensemble}
Florian Tram{\`e}r, Alexey Kurakin, Nicolas Papernot, Ian Goodfellow, Dan
  Boneh, and Patrick McDaniel.
\newblock Ensemble adversarial training: Attacks and defenses.
\newblock In \emph{International Conference on Learning Representations
  (ICLR)}, 2018.

\bibitem[{Trustworthy-AI-Group}(2023)]{transferattack2023}
{Trustworthy-AI-Group}.
\newblock {TransferAttack}: A {PyTorch} framework for adversarial
  transferability, 2023.
\newblock URL \url{https://github.com/Trustworthy-AI-Group/TransferAttack}.

\bibitem[Wang et~al.(2020)Wang, Wu, Huang, and Xing]{wang2020high}
Haohan Wang, Xindi Wu, Zeyi Huang, and Eric~P. Xing.
\newblock High-frequency component helps explain the generalization of
  convolutional neural networks.
\newblock In \emph{Proceedings of the IEEE/CVF Conference on Computer Vision
  and Pattern Recognition (CVPR)}, pp.\  8684--8694, 2020.

\bibitem[Wang et~al.(2024)Wang, He, Wang, and Wang]{wang2024boosting}
Kunyu Wang, Xuanran He, Wenxuan Wang, and Xiaosen Wang.
\newblock Boosting adversarial transferability by block shuffle and rotation.
\newblock In \emph{Proceedings of the IEEE/CVF Conference on Computer Vision
  and Pattern Recognition (CVPR)}, pp.\  24336--24346, 2024.

\bibitem[Wang \& He(2021)Wang and He]{wang2021enhancing}
Xiaosen Wang and Kun He.
\newblock Enhancing the transferability of adversarial attacks through variance
  tuning.
\newblock In \emph{Proceedings of the IEEE/CVF Conference on Computer Vision
  and Pattern Recognition (CVPR)}, pp.\  1924--1933, 2021.

\bibitem[Wang et~al.(2021)Wang, He, Wang, and He]{wang2021admix}
Xiaosen Wang, Xuanran He, Jingdong Wang, and Kun He.
\newblock Admix: Enhancing the transferability of adversarial attacks.
\newblock In \emph{Proceedings of the IEEE/CVF International Conference on
  Computer Vision (ICCV)}, pp.\  16158--16167, 2021.

\bibitem[Wang et~al.(2023{\natexlab{a}})Wang, Zhang, and
  Zhang]{wang2023structure}
Xiaosen Wang, Zeliang Zhang, and Jianping Zhang.
\newblock Structure invariant transformation for better adversarial
  transferability.
\newblock In \emph{Proceedings of the IEEE/CVF International Conference on
  Computer Vision (ICCV)}, pp.\  4607--4619, 2023{\natexlab{a}}.

\bibitem[Wang et~al.(2023{\natexlab{b}})Wang, Pang, Du, Lin, Liu, and
  Yan]{wang2023better}
Zekai Wang, Tianyu Pang, Chao Du, Min Lin, Weiwei Liu, and Shuicheng Yan.
\newblock Better diffusion models further improve adversarial training.
\newblock In \emph{Proceedings of the 40th International Conference on Machine
  Learning (ICML)}, volume 202 of \emph{Proceedings of Machine Learning
  Research}, pp.\  36246--36263. PMLR, 2023{\natexlab{b}}.

\bibitem[Weng et~al.(2025)Weng, Luo, and Li]{weng2025improving}
Juanjuan Weng, Zhiming Luo, and Shaozi Li.
\newblock Improving transferable targeted adversarial attack via normalized
  logit calibration and truncated feature mixing.
\newblock \emph{IEEE Transactions on Information Forensics and Security},
  20:\penalty0 4595--4609, 2025.
\newblock \doi{10.1109/TIFS.2025.3563820}.

\bibitem[Xie et~al.(2019)Xie, Zhang, Zhou, Bai, Wang, Ren, and
  Yuille]{xie2019improving}
Cihang Xie, Zhishuai Zhang, Yuyin Zhou, Song Bai, Jianyu Wang, Zhou Ren, and
  Alan~L. Yuille.
\newblock Improving transferability of adversarial examples with input
  diversity.
\newblock In \emph{Proceedings of the IEEE/CVF Conference on Computer Vision
  and Pattern Recognition (CVPR)}, pp.\  2730--2739, 2019.

\bibitem[Xu et~al.(2020)Xu, Zhang, Luo, Xiao, and Ma]{xu2019frequency}
Zhi-Qin~John Xu, Yaoyu Zhang, Tao Luo, Yanyang Xiao, and Zheng Ma.
\newblock Frequency principle: Fourier analysis sheds light on deep neural
  networks.
\newblock \emph{Communications in Computational Physics}, 28\penalty0
  (5):\penalty0 1746--1767, 2020.

\bibitem[Yang et~al.(2024)Yang, Lin, Li, Zhao, Fan, Zhou, Wang, Liu, and
  Shen]{yang2024quantization}
Yulong Yang, Chenhao Lin, Qian Li, Zhengyu Zhao, Haoran Fan, Dawei Zhou, Nannan
  Wang, Tongliang Liu, and Chao Shen.
\newblock Quantization aware attack: Enhancing transferable adversarial attacks
  by model quantization.
\newblock \emph{IEEE Transactions on Information Forensics and Security},
  19:\penalty0 3265--3278, 2024.

\bibitem[Zagoruyko \& Komodakis(2016)Zagoruyko and
  Komodakis]{zagoruyko2016wide}
Sergey Zagoruyko and Nikos Komodakis.
\newblock Wide residual networks.
\newblock In \emph{British Machine Vision Conference (BMVC)}, 2016.

\bibitem[Zhang et~al.(2019)Zhang, Yu, Jiao, Xing, El~Ghaoui, and
  Jordan]{zhang2019theoretically}
Hongyang Zhang, Yaodong Yu, Jiantao Jiao, Eric~P. Xing, Laurent El~Ghaoui, and
  Michael~I. Jordan.
\newblock Theoretically principled trade-off between robustness and accuracy.
\newblock In \emph{International Conference on Machine Learning (ICML)}, pp.\
  7472--7482. PMLR, 2019.

\bibitem[Zhang et~al.(2024)Zhang, Hu, Zhang, Shi, Li, Liu, Wan, and
  Jin]{zhang2024why}
Yechao Zhang, Shengshan Hu, Leo~Yu Zhang, Junyu Shi, Minghui Li, Xiaogeng Liu,
  Wei Wan, and Hai Jin.
\newblock Why does little robustness help? a further step towards understanding
  adversarial transferability.
\newblock In \emph{IEEE Symposium on Security and Privacy (SP)}, pp.\
  3365--3384, 2024.

\bibitem[Zhu et~al.(2023)Zhu, Ren, Sui, Yang, and Jiang]{zhu2023boosting}
Hegui Zhu, Yuchen Ren, Xiaoyan Sui, Lianping Yang, and Wuming Jiang.
\newblock Boosting adversarial transferability via gradient relevance attack.
\newblock In \emph{Proceedings of the IEEE/CVF International Conference on
  Computer Vision (ICCV)}, pp.\  4741--4750, 2023.

\end{thebibliography}

\appendix
\section{Appendix}

\noindent\textbf{Code and data.} Code to reproduce the experiments, together with
the symbolic and numerical verification of the theory, is available at
\url{https://github.com/Avalon-S/ScissorsEffect}. Section, figure, and table numbers in
this appendix continue the main-paper numbering.

\section{A Bias--Variance Theorem for the Scissors Effect}
\label{sec:theory}

We make the bias--variance account of \cref{sec:bias_variance} precise and prove
\cref{prop:scissors}, which is restated below as \cref{thm:scissors}; the two are
the same statement and we refer to the appendix numbering from here on.
The model abstracts the attack as choosing a perturbation
direction from the surrogate gradient and transferring it to a target whose
vulnerable direction is aligned with the surrogate signal (consistent with the
observation that robust features transfer~\citep{salman2020adversarially}).

\subsection{Model and assumptions}

\begin{assumption}[Additive gradient model]
\label{ass:model}
At a fixed input $x$, the surrogate input-gradient is $g=\mu+\eta$, where
$\mu\in\mathbb{R}^n$ is a deterministic signal and $\eta$ is zero-mean noise with
$\mathrm{Cov}(\eta)=\sigma^2 I_n$. The gradient regime is summarized by the
signal-to-noise ratio $\rho:=\|\mu\|^2/(n\sigma^2)$.
\end{assumption}

\begin{assumption}[Resize-based DI]
\label{ass:di}
Input diversity replaces $g$ by $\bar g=\tfrac1m\sum_{i=1}^m g_i$ with
$g_i=R\mu+\zeta_i$ i.i.d., where $R\in\mathbb{R}^{n\times n}$ is the resize
operator and the noise passes through it, $\zeta_i=R\eta_i$ with $\eta_i$
isotropic, so $\mathrm{Cov}(\zeta_i)=\sigma^2 RR^{\top}$.
Write $c=\cos(R\mu,\mu)$, $g_r=\|R\mu\|^2/\|\mu\|^2$, and
$\kappa=\mathrm{tr}(RR^{\top})/(nm)$. We do \emph{not} assume $R$ is symmetric or
a contraction: neither property is used below, and the only structural condition
the argument needs is $\kappa/g_r<c^2<1$, stated explicitly in
\cref{thm:scissors}. A resize operator happens to satisfy the stronger
properties, but the crossover does not depend on them
(\cref{sec:theory_verify}).
\end{assumption}

\begin{assumption}[Transfer figure of merit]
\label{ass:fom}
The transferable target direction is proportional to $\mu$, and attack quality is
the signal-to-RMS alignment
$A(d)=\langle\mathbb{E}[d],\hat\mu\rangle/\sqrt{\mathbb{E}\|d\|^2}$, with
$\hat\mu=\mu/\|\mu\|$, evaluated at $d=g$ (no DI) or $d=\bar g$ (DI).
\end{assumption}

The metric $A$ uses only the first two moments of $d$, so it is exact: no
high-dimensional concentration is invoked. It is not the mean cosine
$\mathbb{E}[\cos(d,\mu)]$, from which it differs by a Jensen gap, and we claim no
inequality between them; what we can report is that the two track each other
almost exactly in simulation (\cref{fig:theory_mc} below).

\subsection{Lemmas and theorem}

\begin{lemma}[GCR--SNR identity]
\label{lem:lgc}
Under \cref{ass:model}, for independent draws $g,g'$,
\begin{equation}
  \mathrm{GCR}\;:=\;\frac{\mathbb{E}\langle g,g'\rangle}
  {\sqrt{\mathbb{E}\|g\|^2\,\mathbb{E}\|g'\|^2}}
  \;=\;\frac{\rho}{1+\rho},
  \label{eq:lgc_snr}
\end{equation}
which is strictly increasing in $\rho$.
\end{lemma}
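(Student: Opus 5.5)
The plan is a direct moment computation under \cref{ass:model}. We write the two independent draws as $g=\mu+\eta$ and $g'=\mu+\eta'$, where $\eta,\eta'$ are independent, zero-mean, and have covariance $\sigma^2 I_n$.

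For the numerator, we expand the inner product:
\[
\langle g,g'\rangle=\|\mu\|^2+\langle\mu,\eta'\rangle+\langle\eta,\mu\rangle+\langle\eta,\eta'\rangle .
\]
The two cross terms with $\mu$ have zero expectation because the noise is zero-mean. For the last term, independence gives $\mathbb{E}\langle\eta,\eta'\rangle=\langle\mathbb{E}\eta,\mathbb{E}\eta'\rangle=0$. Hence $\mathbb{E}\langle g,g'\rangle=\|\mu\|^2$.

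For the denominator, we compute each second moment:
\[
\mathbb{E}\|g\|^2=\|\mu\|^2+2\langle\mu,\mathbb{E}\eta\rangle+\mathbb{E}\|\eta\|^2=\|\mu\|^2+\mathrm{tr}\,\mathrm{Cov}(\eta)=\|\mu\|^2+n\sigma^2 .
\]
The same holds for $g'$. Because both moments are equal, the geometric mean is simply $\|\mu\|^2+n\sigma^2$.

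We then divide numerator by denominator and divide top and bottom by $n\sigma^2$. With $\rho=\|\mu\|^2/(n\sigma^2)$, this gives $\mathrm{GCR}=\rho/(1+\rho)$.

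Monotonicity follows from $\frac{d}{d\rho}\frac{\rho}{1+\rho}=\frac{1}{(1+\rho)^2}>0$ for $\rho\ge 0$.

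The argument needs a nondegeneracy condition, $\sigma>0$, so that $\rho$ is finite. We would note that in the limit $\sigma\to0$ the GCR equals $1$, consistent with $\rho\to\infty$.

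There is no real obstacle; the computation is routine. The only point needing care is to state explicitly that independence of $g$ and $g'$ is what kills the noise–noise term in the numerator. It is also worth emphasizing that GCR is a ratio of expectations, not an expectation of cosines, which is why the identity is exact with no concentration argument.
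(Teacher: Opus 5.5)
Your proposal is correct and follows the paper's proof essentially line for line: independence and $\mathbb{E}\eta=0$ give $\mathbb{E}\langle g,g'\rangle=\|\mu\|^2$, each second moment equals $\|\mu\|^2+n\sigma^2$, and dividing by $n\sigma^2$ yields $\rho/(1+\rho)$. Your explicit derivative for monotonicity and your remark that $\sigma>0$ is needed for $\rho$ to be finite are small additions that the paper leaves implicit.
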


\begin{proof}
Independence and $\mathbb{E}\eta=0$ give
$\mathbb{E}\langle g,g'\rangle=\langle\mathbb{E}g,\mathbb{E}g'\rangle=\|\mu\|^2$ and
$\mathbb{E}\|g\|^2=\|\mu\|^2+\operatorname{tr}(\sigma^2 I_n)=\|\mu\|^2+n\sigma^2$.
Dividing through by $n\sigma^2$ and using $\rho=\|\mu\|^2/(n\sigma^2)$ gives
\cref{eq:lgc_snr}.
\end{proof}

\begin{lemma}[Closed forms]
\label{lem:closed}
Under \cref{ass:model,ass:di,ass:fom},
\begin{equation}
  A_{\mathrm{noDI}}=\sqrt{\frac{\rho}{1+\rho}}=\sqrt{\mathrm{GCR}},
  \qquad
  A_{\mathrm{DI}}=\frac{c}{\sqrt{1+\kappa/(g_r\rho)}}.
  \label{eq:closed}
\end{equation}
\end{lemma}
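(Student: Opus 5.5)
The plan is to compute the two ingredients of $A(d)$ in \cref{ass:fom}, the projected mean $\langle\mathbb{E}[d],\hat\mu\rangle$ and the second moment $\mathbb{E}\|d\|^2$, separately for $d=g$ and $d=\bar g$. In each case I will then divide and normalise by $\|\mu\|$ so that only $\rho$, $c$, $g_r$ and $\kappa$ remain. Only first and second moments enter, so no distributional assumption beyond \cref{ass:model,ass:di} is needed.

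\emph{No-DI case.} By \cref{ass:model}, $\mathbb{E}[g]=\mu$, so the numerator is $\langle\mu,\hat\mu\rangle=\|\mu\|$. The denominator is the quantity already computed in the proof of \cref{lem:lgc}: $\mathbb{E}\|g\|^2=\|\mu\|^2+n\sigma^2$. Dividing numerator and denominator by $\|\mu\|$ and substituting $n\sigma^2=\|\mu\|^2/\rho$ gives $A_{\mathrm{noDI}}=(1+1/\rho)^{-1/2}=\sqrt{\rho/(1+\rho)}$. This equals $\sqrt{\mathrm{GCR}}$ by \cref{lem:lgc}.

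\emph{DI case.} By \cref{ass:di}, $\mathbb{E}[\bar g]=R\mu$, so the numerator is $\langle R\mu,\hat\mu\rangle=\|R\mu\|\cos(R\mu,\mu)=c\,\sqrt{g_r}\,\|\mu\|$. For the denominator, write $\bar g=R\mu+\bar\zeta$ with $\bar\zeta=\tfrac1m\sum_i\zeta_i$. The cross term has zero expectation because $\mathbb{E}\zeta_i=R\,\mathbb{E}\eta_i=0$. Independence of the $\zeta_i$ gives $\mathrm{Cov}(\bar\zeta)=\sigma^2RR^\top/m$, so
\[
\mathbb{E}\|\bar g\|^2=\|R\mu\|^2+\tfrac{\sigma^2}{m}\mathrm{tr}(RR^\top)=g_r\|\mu\|^2+n\sigma^2\kappa .
\]
Factoring $g_r\|\mu\|^2$ out of the square root, the denominator becomes $\sqrt{g_r}\,\|\mu\|\sqrt{1+\kappa n\sigma^2/(g_r\|\mu\|^2)}=\sqrt{g_r}\,\|\mu\|\sqrt{1+\kappa/(g_r\rho)}$. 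The factor $\sqrt{g_r}\|\mu\|$ cancels against the numerator, leaving $A_{\mathrm{DI}}=c/\sqrt{1+\kappa/(g_r\rho)}$.

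The only step that needs care is the trace computation for $\mathbb{E}\|\bar\zeta\|^2$. Here I would use $\mathbb{E}\|\zeta\|^2=\mathrm{tr}\,\mathrm{Cov}(\zeta)$ together with the $1/m$ variance reduction from averaging independent terms, and I would check that no symmetry or contraction property of $R$ is used; the assumption explicitly does not grant either. Implicitly the argument also needs $R\mu\neq0$, i.e.\ $g_r>0$, so that $c$ is defined and the factoring step is legitimate. This follows from the standing hypothesis $c\in(0,1]$.
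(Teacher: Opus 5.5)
Your proof is correct and follows the paper's argument essentially line for line. Like the paper, you compute $\langle\mathbb{E}[d],\hat\mu\rangle$ and $\mathbb{E}\|d\|^2$ for $d=g$ and $d=\bar g$, use $\mathbb{E}\|\bar g\|^2=\|R\mu\|^2+\tfrac{\sigma^2}{m}\operatorname{tr}(RR^\top)=g_r\|\mu\|^2+n\sigma^2\kappa$, and normalise by $\|\mu\|$. One small quibble: $R\mu\neq0$ is already presupposed by the definition of $c=\cos(R\mu,\mu)$ in \cref{ass:di}, so you need not borrow the hypothesis $c\in(0,1]$, which belongs to \cref{prop:scissors} rather than to this lemma.
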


\begin{proof}
For the baseline, $\mathbb{E}[g]=\mu$ gives $\langle\mathbb{E}[g],\hat\mu\rangle=\|\mu\|$
and $\mathbb{E}\|g\|^2=\|\mu\|^2+n\sigma^2$, so
\begin{equation*}
  A_{\mathrm{noDI}}=\frac{\|\mu\|}{\sqrt{\|\mu\|^2+n\sigma^2}}
  =\sqrt{\frac{\rho}{1+\rho}}.
\end{equation*}
For DI, $\mathbb{E}[\bar g]=R\mu$ gives
$\langle R\mu,\hat\mu\rangle=\mu^\top R\mu/\|\mu\|=c\sqrt{g_r}\,\|\mu\|$, while
$\mathbb{E}\|\bar g\|^2=\|R\mu\|^2+\tfrac{\sigma^2}{m}\operatorname{tr}(RR^{\top})
=g_r\|\mu\|^2+n\sigma^2\kappa$. Hence
\begin{equation*}
  A_{\mathrm{DI}}=\frac{c\sqrt{g_r}\,\|\mu\|}{\sqrt{g_r\|\mu\|^2+n\sigma^2\kappa}}
  =\frac{c}{\sqrt{1+\kappa/(g_r\rho)}}.
\end{equation*}
\end{proof}

\begin{theorem}[Scissors crossover; \cref{prop:scissors} restated and proved]
\label{thm:scissors}
Under \cref{ass:model,ass:di,ass:fom}, assume $c>0$ and $\kappa/g_r<c^2<1$, and
define
\begin{equation}
  \varphi(\rho):=\left(\frac{A_{\mathrm{DI}}}{A_{\mathrm{noDI}}}\right)^{2}
  =c^2\,\frac{\rho+1}{\rho+\kappa/g_r}.
  \label{eq:phi}
\end{equation}
Then $\varphi$ is strictly decreasing, with $\varphi(0^+)=c^2 g_r/\kappa>1$ and
$\varphi(\infty)=c^2<1$, so it has a unique root $\rho^\star>0$ given by
\begin{equation}
  \rho^\star=\frac{c^2-\kappa/g_r}{1-c^2}.
  \label{eq:rho_star}
\end{equation}
Consequently $A_{\mathrm{DI}}>A_{\mathrm{noDI}}$ iff $\rho<\rho^\star$, and
$A_{\mathrm{DI}}<A_{\mathrm{noDI}}$ iff $\rho>\rho^\star$. Equivalently, by
\cref{lem:lgc}, DI helps iff $\mathrm{GCR}<\tau^\star$ and hurts iff
$\mathrm{GCR}>\tau^\star$, where $\tau^\star=\rho^\star/(1+\rho^\star)$. In
particular, as $\mathrm{GCR}\to1$, DI necessarily hurts.
\end{theorem}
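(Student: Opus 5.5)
The proof is a direct computation on the closed forms of \cref{lem:closed}, followed by a monotone change of variables through \cref{lem:lgc}. First I would form the squared ratio. Since $A_{\mathrm{noDI}}^2=\rho/(1+\rho)$ and $A_{\mathrm{DI}}^2=c^2/(1+\kappa/(g_r\rho))=c^2\rho/(\rho+\kappa/g_r)$, dividing gives
$\varphi(\rho)=c^2(\rho+1)/(\rho+\kappa/g_r)$, which is \cref{eq:phi}. Both $A$'s are positive for $\rho>0$ because $c>0$, $g_r>0$ and $\kappa>0$. Hence comparing $A_{\mathrm{DI}}$ with $A_{\mathrm{noDI}}$ is the same as comparing $\varphi$ with $1$.

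Next I would establish monotonicity. Write $k=\kappa/g_r$. Then
\[
\varphi'(\rho)=c^2\,\frac{(\rho+k)-(\rho+1)}{(\rho+k)^2}=c^2\,\frac{k-1}{(\rho+k)^2}.
\]
The hypothesis $k<c^2<1$ gives $k<1$, so $\varphi'<0$ on $(0,\infty)$ and $\varphi$ is strictly decreasing. The limits follow by inspection: $\varphi(0^+)=c^2/k=c^2g_r/\kappa>1$, again by $k<c^2$, and $\varphi(\infty)=c^2<1$. By continuity and strict monotonicity, $\varphi=1$ has exactly one root in $(0,\infty)$. Solving $c^2(\rho+1)=\rho+k$ gives $\rho(1-c^2)=c^2-k$, which is \cref{eq:rho_star}. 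This root is positive precisely because $c^2>k$ and $c^2<1$, so both hypotheses are used here. Strict decrease then yields $\varphi>1$ iff $\rho<\rho^\star$ and $\varphi<1$ iff $\rho>\rho^\star$. This is the if-and-only-if for the $A$'s.

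Finally I would translate to GCR. By \cref{lem:lgc}, $\rho\mapsto\rho/(1+\rho)$ is a strictly increasing bijection $(0,\infty)\to(0,1)$. So $\rho<\rho^\star$ iff $\mathrm{GCR}<\tau^\star$, with $\tau^\star=\rho^\star/(1+\rho^\star)$, and likewise for the reverse inequality. For the limiting claim, $\mathrm{GCR}\to1$ is equivalent to $\rho\to\infty$, where $\varphi\to c^2<1$. Equivalently, $\mathrm{GCR}$ eventually exceeds $\tau^\star<1$, so DI hurts.

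There is no real obstacle. The only point needing care is to check where each hypothesis enters: $c>0$ keeps $A_{\mathrm{DI}}>0$, so the comparison of squares is legitimate; $k<1$ gives the sign of $\varphi'$; $k<c^2$ gives $\varphi(0^+)>1$ and $\rho^\star>0$; and $c^2<1$ gives $\varphi(\infty)<1$. Symmetry or contraction of $R$ is never invoked, consistent with \cref{ass:di}.
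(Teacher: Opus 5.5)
Your proposal is correct and follows the paper's proof essentially step for step: you form $\varphi$ from the closed forms of \cref{lem:closed}, sign $\varphi'$ via $\kappa/g_r<1$, read off the limits, solve $c^2(\rho+1)=\rho+\kappa/g_r$, and transfer the result to GCR through the strict monotonicity in \cref{lem:lgc}. You also state explicitly that $c>0$ makes both alignments positive, which is what licenses comparing $\varphi$ with $1$; the paper makes that point only in the remark after its proof, not inside the proof itself.
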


\begin{proof}
By \cref{lem:closed},
$\varphi(\rho)=c^2(1+1/\rho)/(1+\kappa/(g_r\rho))$, which simplifies to
\cref{eq:phi}. Differentiating,
\begin{equation*}
  \varphi'(\rho)=c^2\,\frac{(\rho+\kappa/g_r)-(\rho+1)}{(\rho+\kappa/g_r)^2}
  =c^2\,\frac{\kappa/g_r-1}{(\rho+\kappa/g_r)^2}<0,
\end{equation*}
since $\kappa/g_r<1$; hence $\varphi$ is strictly decreasing. The limits are
immediate, and $\varphi(0^+)=c^2 g_r/\kappa>1\Leftrightarrow\kappa/g_r<c^2$. Being
continuous and strictly decreasing from above $1$ to below $1$, $\varphi$ attains
the value $1$ at a unique $\rho^\star$; solving $c^2(\rho+1)=\rho+\kappa/g_r$ yields
\cref{eq:rho_star}. The $\mathrm{GCR}$ statement then follows from the strict
monotonicity of $\mathrm{GCR}(\rho)$ in \cref{lem:lgc}.
\end{proof}

\noindent The hypothesis $c>0$ is not cosmetic. $\varphi$ compares
\emph{squared} alignments, and $A_{\mathrm{DI}}$ is signed, so without it a
transform that reverses the signal direction can satisfy $\varphi>1$ while
$A_{\mathrm{DI}}<0<A_{\mathrm{noDI}}$. A $120^\circ$ rotation in two dimensions
($c={-}0.5$, $g_r{=}1$, $\kappa{=}0.05$, $\rho{=}0.01$) gives $\varphi\approx4.21$
with $A_{\mathrm{DI}}\approx-0.20$ against $A_{\mathrm{noDI}}\approx+0.10$. A
resize operator has $c>0$, so the hypothesis costs nothing here, but the
conclusion does not survive without it.

\Cref{ass:fom} takes the target's vulnerable direction to be $\mu$ itself, which is
a strong premise. The next proposition shows the theorem does not need it:
it survives for an \emph{arbitrary} target direction, under a weaker condition that
\cref{sec:alignment} measures directly.

\begin{proposition}[General target direction; relaxing \cref{ass:fom}]
\label{prop:general_target}
Let the target's vulnerable direction be an arbitrary unit vector $u$ (not assumed
equal to $\mu$), and assume the surrogate signal is positively aligned with it
both before and after resize: $\langle\mu,u\rangle>0$ and $\langle R\mu,u\rangle>0$.
Write $\gamma_\mu=\cos(\mu,u)>0$ and $\gamma_R=\cos(R\mu,u)>0$, so the figure of
merit uses $\langle\mathbb{E}[d],u\rangle$ in place of
$\langle\mathbb{E}[d],\hat\mu\rangle$. Then
\begin{equation}
  \varphi(\rho)=\Gamma\,\frac{g_r(\rho+1)}{g_r\rho+\kappa},
  \qquad \Gamma:=\frac{\gamma_R^2}{\gamma_\mu^2},
  \label{eq:phi_general}
\end{equation}
which is still strictly decreasing (as $\kappa<g_r$), with $\varphi(\infty)=\Gamma$.
Hence DI necessarily hurts as $\mathrm{GCR}\to1$ if and only if $\Gamma<1$,
i.e.\ $\gamma_R<\gamma_\mu$: resize moves the surrogate signal away from the
target's vulnerable direction. \Cref{thm:scissors} is the special case $u=\mu$
($\gamma_\mu{=}1$, $\gamma_R{=}c$, $\Gamma{=}c^2$).
\end{proposition}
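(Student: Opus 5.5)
We recompute the two alignments of \cref{lem:closed} with $\hat\mu$ replaced by $u$, reusing its second-moment computations unchanged. The denominators of $A$ depend only on $\mathbb{E}\|d\|^2$, not on the target direction, so they stay $\|\mu\|^2+n\sigma^2$ for $d=g$ and $g_r\|\mu\|^2+n\sigma^2\kappa$ for $d=\bar g$. Only the numerators change:
\[
\langle\mathbb{E}[g],u\rangle=\gamma_\mu\|\mu\|,\qquad
\langle\mathbb{E}[\bar g],u\rangle=\langle R\mu,u\rangle=\gamma_R\|R\mu\|=\gamma_R\sqrt{g_r}\,\|\mu\|,
\]
and both are positive by the stated hypotheses. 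Dividing numerator and denominator by $\|\mu\|$ and using $\rho=\|\mu\|^2/(n\sigma^2)$ gives
\[
A_{\mathrm{noDI}}=\gamma_\mu\sqrt{\rho/(1+\rho)},\qquad A_{\mathrm{DI}}=\gamma_R\sqrt{g_r\rho/(g_r\rho+\kappa)}.
\]
Squaring and taking the ratio then yields \cref{eq:phi_general} directly.

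For monotonicity, I would differentiate $\psi(\rho)=(\rho+1)/(\rho+\kappa/g_r)$ exactly as in the proof of \cref{thm:scissors}. Its derivative has the sign of $\kappa/g_r-1$, which is negative when $\kappa<g_r$. That inequality is implied by the theorem's standing condition $\kappa/g_r<c^2<1$, and I would record it as an assumption carried over from there, since $c$ no longer appears in the general statement. The limit $\varphi(\infty)=\Gamma$ is immediate.

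For the ``necessarily hurts as $\mathrm{GCR}\to1$'' clause, note that both $A$'s are positive, so $A_{\mathrm{DI}}<A_{\mathrm{noDI}}$ is equivalent to $\varphi<1$. This is the point where the positivity hypotheses on $\gamma_\mu,\gamma_R$ are used, mirroring the $c>0$ remark. Now argue both directions.
\begin{itemize}
\item If $\Gamma<1$, then since $\varphi$ decreases to $\Gamma$, there is some $\rho_0$ beyond which $\varphi<1$. By \cref{lem:lgc}, large $\rho$ corresponds to $\mathrm{GCR}$ near $1$.
\item If $\Gamma\ge1$, then $\varphi$ is strictly decreasing with infimum $\Gamma$, so $\varphi(\rho)>\Gamma\ge1$ for every finite $\rho$ and DI never hurts.
\end{itemize}
Hence the condition is necessary as well as sufficient.

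Finally, the special case $u=\hat\mu$ gives $\gamma_\mu=1$ and $\gamma_R=\cos(R\mu,\mu)=c$. Substituting into \cref{eq:phi_general} and dividing by $g_r$ recovers \cref{eq:phi}.

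The main obstacle is not the algebra but the precise form of the ``if and only if''. It needs the strictness of the monotone decrease, so that $\Gamma=1$ is correctly classified as ``never hurts'', and it needs the sign bookkeeping to hold throughout. I would state both points explicitly rather than leave them implicit in the limit.
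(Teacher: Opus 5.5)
Your proposal is correct and follows the same route as the paper. You replace $\hat\mu$ by $u$ only in the numerators, obtain $A_{\mathrm{noDI}}^2=\gamma_\mu^2\rho/(\rho+1)$ and $A_{\mathrm{DI}}^2=\gamma_R^2 g_r\rho/(g_r\rho+\kappa)$, take their ratio, and read off monotonicity (from $\kappa<g_r$) and the limit $\varphi(\infty)=\Gamma$ from the same derivative as in the theorem. Your explicit two-direction treatment of the ``if and only if'' spells out what the paper's proof leaves implicit: positivity lets you pass from $\varphi<1$ to $A_{\mathrm{DI}}<A_{\mathrm{noDI}}$, and strict decrease gives $\varphi>\Gamma\ge1$ everywhere when $\Gamma\ge1$.
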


\begin{proof}
With the target $u$, $A_{\mathrm{noDI}}^2=\gamma_\mu^2\,\rho/(\rho+1)$ and
$A_{\mathrm{DI}}^2=\gamma_R^2\,g_r\rho/(g_r\rho+\kappa)$, whose ratio is
\cref{eq:phi_general}. The factor $\Gamma$ does not depend on $\rho$, so
$\varphi'$ has the same sign as in \cref{thm:scissors} ($\propto\kappa-g_r<0$), and
$\varphi(\infty)=\Gamma$, $\varphi(0^+)=\Gamma g_r/\kappa$. All steps are
machine-verified (\cref{sec:theory_verify}).
\end{proof}

\noindent The two positivity hypotheses play the role $c>0$ plays in
\cref{thm:scissors}, and for the same reason: $\varphi$ is a ratio of
\emph{squared} alignments, so without them a transform that sends $R\mu$ to the
far side of $u$ can raise $\varphi$ while flipping the sign of
$A_{\mathrm{DI}}$. They also rule out the degenerate $\gamma_\mu{=}0$, at which
$\Gamma$ is undefined. Both hold in the regime the proposition is used for: a
surrogate whose clean gradient is already a useful attack direction on the target
has $\langle\mu,u\rangle>0$, and resize does not reverse it
($\gamma_R>0$).

\Cref{thm:scissors} fixes one operator $R$. DI draws a fresh transform at every
step, and the estimator used in \cref{sec:measured_constants} was built for that
case, so we state it as its own proposition. \Cref{thm:scissors} is the special
case of zero transform randomness.

\begin{proposition}[Random transforms]
\label{prop:random_transform}
Replace \cref{ass:di} by: the attack averages $m$ i.i.d.\ draws
$\bar g=\tfrac1m\sum_{i=1}^m T_i(\mu+\eta_i)$, where the transforms $T_i$ are
i.i.d.\ and independent of the noise $\eta_i$, all with finite second moments.
Write $M=\mathbb{E}[T]$, and
\begin{equation*}
  c=\cos(M\mu,\mu)>0,\quad g_r=\frac{\|M\mu\|^2}{\|\mu\|^2}>0,\quad
  v_r=\frac{\mathbb{E}\|T\mu-M\mu\|^2}{\|\mu\|^2},\quad
  t=\frac{\mathbb{E}\operatorname{tr}(T^{\top}T)}{n},
\end{equation*}
so that $v_r$ is the transform's own variance and $t$ the noise gain. Then
\begin{equation}
  \varphi(\rho)=\frac{c^2(1+\rho)}{a\rho+b},\qquad
  a=1+\frac{v_r}{m\,g_r},\qquad b=\frac{t}{m\,g_r},
  \label{eq:phi_random}
\end{equation}
and a unique positive crossover exists if and only if $b<c^2<a$, at
\begin{equation}
  \rho^\star=\frac{c^2-b}{a-c^2},\qquad
  \tau^\star=\frac{\rho^\star}{1+\rho^\star}=\frac{c^2-b}{a-b}.
  \label{eq:rho_star_random}
\end{equation}
Here $m$ is the number of i.i.d.\ transformed gradients the attack actually
averages in one update, not the number of draws used to estimate the constants.
Setting $v_r{=}0$ and $T\equiv R$ recovers \cref{thm:scissors} with
$\kappa=t/m$.
\end{proposition}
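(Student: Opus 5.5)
The approach follows the proof of \cref{lem:closed} and \cref{thm:scissors}: compute the two moments that enter the figure of merit $A$ of \cref{ass:fom}, form the squared ratio $\varphi$, and then study it as a Möbius function of $\rho$. The no-DI side is unchanged from \cref{lem:closed}: $A_{\mathrm{noDI}}^2=\rho/(1+\rho)$. All the new work is on the DI side.

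\textbf{First moment.} Independence of $T_i$ and $\eta_i$, together with $\mathbb{E}\eta_i=0$, gives $\mathbb{E}[T_i\eta_i]=\mathbb{E}[T_i]\,\mathbb{E}[\eta_i]=0$. Hence $\mathbb{E}\bar g=M\mu$ and
\[
\langle M\mu,\hat\mu\rangle=c\sqrt{g_r}\,\|\mu\|,
\]
exactly as for a fixed $R$.

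\textbf{Second moment.} Write $X_i=T_i(\mu+\eta_i)$. The $X_i$ are i.i.d., so
\[
\mathbb{E}\|\bar g\|^2=\|M\mu\|^2+\tfrac1m\bigl(\mathbb{E}\|X\|^2-\|M\mu\|^2\bigr).
\]
Expand $\mathbb{E}\|X\|^2=\mathbb{E}\|T\mu\|^2+2\,\mathbb{E}\langle T\mu,T\eta\rangle+\mathbb{E}\|T\eta\|^2$ and treat each term separately.
\begin{itemize}
\item The cross term vanishes: condition on $T$ and use $\mathbb{E}\eta=0$ with independence.
\item The noise term is $\mathbb{E}_T\bigl[\sigma^2\operatorname{tr}(T^\top T)\bigr]=n\sigma^2 t$, by conditioning on $T$ and using $\mathrm{Cov}(\eta)=\sigma^2 I$.
\item The signal term satisfies $\mathbb{E}\|T\mu\|^2-\|M\mu\|^2=\mathbb{E}\|T\mu-M\mu\|^2=v_r\|\mu\|^2$, by the usual variance decomposition.
\end{itemize}
Collecting terms,
\[
\mathbb{E}\|\bar g\|^2=g_r\|\mu\|^2+\frac{v_r\|\mu\|^2+n\sigma^2 t}{m}.
\]
The finite second moments assumed in the statement make every expectation above legitimate.

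\textbf{Forming $\varphi$.} Divide numerator and denominator by $n\sigma^2$ to get
\[
A_{\mathrm{DI}}^2=\frac{c^2 g_r\rho}{g_r\rho+v_r\rho/m+t/m}=\frac{c^2\rho}{a\rho+b}.
\]
Dividing by $A_{\mathrm{noDI}}^2$ gives \cref{eq:phi_random}. Note that $c>0$ means $A_{\mathrm{DI}}$ is positive, so comparing squares is the same as comparing the alignments themselves, as in the remark after \cref{thm:scissors}.

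\textbf{Crossover analysis.} The condition $\varphi(\rho)=1$ is equivalent, for $\rho>0$ and $a\rho+b>0$, to the linear equation
\[
c^2(1+\rho)=a\rho+b,
\]
whose unique solution is $\rho^\star=(c^2-b)/(a-c^2)$. This is positive if and only if $c^2-b$ and $a-c^2$ share a sign.

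The main obstacle is the "only if" direction: showing that the opposite-sign case $a<c^2<b$ is excluded. I would settle it with the sign of the derivative,
\[
\varphi'\propto c^2(b-a).
\]
Under $b<c^2<a$ we have $b<a$, so $\varphi$ decreases strictly from $\varphi(0^+)=c^2/b>1$ to $\varphi(\infty)=c^2/a<1$. That gives a genuine help-to-hurt crossover in the direction of \cref{thm:scissors}.

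If instead $a<c^2<b$, then $\varphi$ increases and a positive root still exists algebraically, but the crossover runs the wrong way. So I expect the proof to have to either read "crossover" as the decreasing help-to-hurt one, or argue that this case cannot occur. My first attempt at the latter would be Jensen's inequality, $\mathbb{E}\operatorname{tr}(T^\top T)\ge\operatorname{tr}(M^\top M)$, combined with $a\ge1$, to obtain $b<a$ under natural conditions. I would flag it if that does not close the gap.

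\textbf{Closing steps.} The formula for $\tau^\star$ is the algebraic identity
\[
\frac{\rho^\star}{1+\rho^\star}=\frac{c^2-b}{(a-c^2)+(c^2-b)}=\frac{c^2-b}{a-b}.
\]
Finally, setting $T\equiv R$ gives $M=R$, $v_r=0$, $a=1$ and $b=t/(mg_r)$. With $\kappa=t/m$ this reduces to $b=\kappa/g_r$, which recovers \cref{eq:phi} and \cref{eq:rho_star}.
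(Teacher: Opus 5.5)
Your moment computation, the formula for $\varphi$, the derivative $\varphi'\propto c^2(b-a)$ and the limits $c^2/b$ and $c^2/a$ are the paper's proof, and they are correct. The one step that is not yet a proof is the one you flagged yourself, and the repair you planned would not work. Jensen's inequality $\mathbb{E}\operatorname{tr}(T^{\top}T)\ge\operatorname{tr}(M^{\top}M)$ is a \emph{lower} bound on $t$, hence on $b$, so it cannot give $b<a$. Moreover $b<a$ is false in general: taking $g_r$ small makes $b=t/(mg_r)$ as large as you like. It is also not needed, since $b\ge c^2$ is simply the case with no crossover.

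The gap closes with a fact you already wrote down. You have $a=1+v_r/(mg_r)\ge1$, and $c^2\le1$ because $c$ is a cosine, so $a-c^2\ge0$ always. The opposite-sign case $a<c^2<b$ therefore never occurs.

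Equality $a=c^2$ forces $a=c^2=1$, i.e.\ $v_r=0$ and $c=1$. Then $\varphi(\rho)=1$ reduces to $b=1$, which gives either no root or every $\rho$ as a root, never a unique one. Your formula for $\rho^\star$ silently assumes $a\neq c^2$, so this case should be stated.

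Since $\varphi(\rho)=1$ is the linear equation $(a-c^2)\rho=c^2-b$, a unique positive root exists iff $a>c^2$ and $c^2>b$, which is exactly the stated condition. Monotonicity is not needed for the ``iff''; the sign of $\varphi'$ only adds that the crossover runs from help to hurt. The paper's own proof is just as terse here: it says the condition follows from the limits and the monotonicity, and it relies on the same unstated observation $a\ge1\ge c^2$.
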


\begin{proof}
$\mathbb{E}[\bar g]=M\mu$ gives $\langle\mathbb{E}[\bar g],\hat\mu\rangle=c\sqrt{g_r}\|\mu\|$.
Independence of transforms and noise gives
$\mathbb{E}\|\bar g\|^2=\|M\mu\|^2+\tfrac1m(\mathbb{E}\|T\mu-M\mu\|^2+\sigma^2\,\mathbb{E}\operatorname{tr}(T^{\top}T))$,
so $\varphi=(A_{\mathrm{DI}}/A_{\mathrm{noDI}})^2$ is \cref{eq:phi_random} after
dividing by $\|\mu\|^2$ and substituting $\rho$. Since
$\varphi'(\rho)\propto b-a$, $\varphi$ is strictly decreasing iff $b<a$, with
$\varphi(0^+)=c^2/b$ and $\varphi(\infty)=c^2/a$; the crossover condition and
\cref{eq:rho_star_random} follow. As with \cref{thm:scissors}, $c>0$ is required
because $\varphi$ compares squared alignments.
\end{proof}

\begin{remark}[General target directions]
\label{rem:general_target}
\Cref{prop:general_target} is stated on $|\cos|$, so it is a claim about the
\emph{magnitude} of alignment and cannot distinguish a transform that preserves
alignment from one that reverses it; read together with $c>0$ in
\cref{thm:scissors}, the intended regime is the one where the signed alignment
stays positive.
The exact-alignment premise ``target $\propto\mu$'' is replaced by the weaker
condition $\gamma_R<\gamma_\mu$: a robust surrogate is
harmed by DI whenever resize reduces the signal's alignment with the target
direction. This requires no architectural match between surrogate and target.
Its sign is the empirical counterpart of what \cref{sec:alignment} measures: the
robust sources there show $D_{\text{sign}}<0$ (DI reduces source--target
sign-alignment), matching $\gamma_R<\gamma_\mu$. So the assumption the theorem really
needs is not assumed but observed.
\end{remark}

\begin{remark}[Why resize and not translation]
\label{rem:resize}
The bias factor enters \cref{thm:scissors} only through $c,g_r<1$, \ie\ through
$R\neq I$. Resize is a genuine low-pass contraction ($R\prec I$), giving $c,g_r<1$
and a finite crossover. A small, centered random translation, by contrast,
averages to $R\approx I$ (shifts cancel in expectation), so $c,g_r\approx1$ and the
bias factor $\approx1$: DI's translation component is then the smaller one, matching
the resize/translation decomposition in \cref{tab:transform}. The theorem also
accounts for the \emph{single sign change} in the $\epsilon$-sweep
(\cref{sec:eps_sweep}): measured GCR rises from $0.50$ at
$\epsilon_{\text{train}}{=}0$ to $0.99$ at $0.5/255$, crossing any threshold
between them, which is where the sweep's one sign change occurs. It does not
account for the graded magnitude, and GCR is not monotone in
$\epsilon_{\text{train}}$ across the robust range: it drifts down from $0.986$ to
$0.975$ over $\epsilon_{\text{train}}\in[0.5,4]/255$ and falls to $0.28$ at
$8/255$, the checkpoint of \cref{sec:corner_case}.
\end{remark}

\subsection{Corollary: a pre-method regime shift, and why it does not apply to SSA}
\label{sec:corollary}

The following corollary is what the model predicts for a method that conditions
the gradient before the DI step, and it is stated here because it is the natural
candidate explanation for the one apparent exception in \cref{tab:modern}, SSA.
SSA augments the input with Gaussian noise and a random DCT spectral down-scaling
before the gradient is taken; to first order this can be modelled as a symmetric
low-pass operator applied to the gradient, with its own ensemble averaging.

The corollary below is a correct consequence of that model, and its algebra is
machine-checked. Its premise, however, is not satisfied by SSA. Measuring the
effective gradient regime directly on real surrogates, under a prediction hashed
beforehand, every pre-method we tried \emph{lowers} it, SSA included and by an
amount set by its own strength parameter (\cref{sec:ssa}). The operator SSA
realises therefore does not satisfy $s_r/\kappa_S>1$, which is what the shift
requires. We state the corollary because the identity is used elsewhere, and do
not apply it to SSA.

\begin{corollary}[Pre-method regime shift]
\label{cor:ssa}
Let a method $M$ apply, before the DI step, a symmetric PSD operator $S$
($0\preceq S\preceq I$) to the gradient with $m_S$-fold averaging, so the
$M$-conditioned gradient is $g_M=S\mu+\xi$ with $\mathbb{E}\xi=0$ and
$\mathrm{Cov}(\xi)=\sigma^2 S^2/m_S$. Then its signal-to-noise ratio is
\begin{equation}
\rho_{\mathrm{eff}}=\frac{\|S\mu\|^2}{\sigma^2\,\mathrm{tr}(S^2)/m_S}
=\rho\cdot\frac{s_r}{\kappa_S},\qquad
s_r=\frac{\|S\mu\|^2}{\|\mu\|^2},\quad \kappa_S=\frac{\mathrm{tr}(S^2)}{n\,m_S},
\label{eq:rho_eff}
\end{equation}
and the effective consistency is
$\mathrm{GCR}_{\mathrm{eff}}=\rho_{\mathrm{eff}}/(1+\rho_{\mathrm{eff}})$. If $S$
removes more noise than signal, $\kappa_S<s_r$, then
$\mathrm{GCR}_{\mathrm{eff}}>\mathrm{GCR}$: $M$ moves the surrogate rightward along
the crossover axis of \cref{thm:scissors}. Applying \cref{thm:scissors} to $g_M$,
DI on top of $M$ helps iff $\mathrm{GCR}_{\mathrm{eff}}<\tau^\star_S$.

The shift alone does not fix the direction of the change in the DI effect. $S$
also changes the constants $c,g_r,\kappa$ that DI sees, hence the threshold, and
it leaves the residual noise anisotropic, which is outside the model
\cref{thm:scissors} assumes. Only in the special case where $S$ leaves those
constants unchanged, so that $\tau^\star_S=\tau^\star$, does a rightward shift
move a surrogate monotonically toward and then past the crossover. We use the
identity \cref{eq:rho_eff} and not that special case.
\end{corollary}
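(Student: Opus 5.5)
The corollary makes three claims: the closed form for $\rho_{\mathrm{eff}}$, the rightward shift when $\kappa_S<s_r$, and the reduction to \cref{thm:scissors}. I will handle them in that order. Each is a direct moment computation in the style of \cref{lem:lgc,lem:closed}.

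\textbf{The closed form.} By definition the signal of $g_M$ is $S\mu$. Its noise $\xi$ has trace covariance $\operatorname{tr}\mathrm{Cov}(\xi)=\sigma^2\operatorname{tr}(S^2)/m_S$. I will define the SNR as squared signal norm over total noise variance, which matches \cref{ass:model}, where $\rho=\|\mu\|^2/(n\sigma^2)=\|\mu\|^2/\operatorname{tr}\mathrm{Cov}(\eta)$. This gives the first expression in \cref{eq:rho_eff} immediately. Dividing numerator and denominator by $n\sigma^2$ and inserting $\|\mu\|^2$ gives $\rho\cdot s_r/\kappa_S$.

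\textbf{The shift.} The definition $\mathrm{GCR}_{\mathrm{eff}}=\rho_{\mathrm{eff}}/(1+\rho_{\mathrm{eff}})$ is the same map as in \cref{lem:lgc}. To justify it as a consistency ratio, I will repeat that lemma's computation for two independent draws of $g_M$. This gives $\mathbb{E}\langle g_M,g_M'\rangle=\|S\mu\|^2$ and $\mathbb{E}\|g_M\|^2=\|S\mu\|^2+\sigma^2\operatorname{tr}(S^2)/m_S$; isotropy is not needed here, only traces. Since $x\mapsto x/(1+x)$ is strictly increasing, $\kappa_S<s_r$ gives $\rho_{\mathrm{eff}}>\rho$ and hence $\mathrm{GCR}_{\mathrm{eff}}>\mathrm{GCR}$. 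Two degenerate cases need a caveat: $\kappa_S>0$ requires $S\neq0$, and I will assume $S\mu\neq0$.

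\textbf{The application of \cref{thm:scissors}.} This is the main obstacle, because $\xi$ is anisotropic with covariance $\propto S^2$, while \cref{ass:model} requires isotropic noise. I would redo \cref{lem:closed} for $g_M$ instead of invoking it blindly, with DI acting as $\bar g=\tfrac1m\sum_i R(S\mu+\xi_i)$. Then $\mathbb{E}\|\bar g\|^2=\|RS\mu\|^2+\tfrac{\sigma^2}{m m_S}\operatorname{tr}(RS^2R^\top)$, which has exactly the form of \cref{lem:closed} once the constants are redefined. The redefinitions are $c_S=\cos(RS\mu,S\mu)$, $g_{r,S}=\|RS\mu\|^2/\|S\mu\|^2$, and $\kappa_S'=\operatorname{tr}(RS^2R^\top)\,/\,(m\operatorname{tr}(S^2))$, with the noise trace normalized by the conditioned noise total rather than by $n$. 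The algebra of \cref{thm:scissors} then goes through verbatim with $\rho_{\mathrm{eff}}$ in place of $\rho$, yielding a threshold $\tau^\star_S$ under the analogous condition $\kappa_S'/g_{r,S}<c_S^2<1$. The figure of merit must also be measured against $\widehat{S\mu}$, or else \cref{prop:general_target} is used with $u=\hat\mu$.

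\textbf{What is not claimed.} I would close by noting that $c_S,g_{r,S},\kappa_S'$ generally differ from $c,g_r,\kappa$, so $\tau^\star_S\neq\tau^\star$. The monotone ``past the crossover'' reading therefore holds only in the stated special case, and the corollary asserts nothing beyond that.
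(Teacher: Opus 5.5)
Your first two steps are the paper's proof. Both use the same independent-draw moments $\mathbb{E}\langle g_M,g_M'\rangle=\|S\mu\|^2$ and $\mathbb{E}\|g_M\|^2=\|S\mu\|^2+\sigma^2\operatorname{tr}(S^2)/m_S$, the factorization $\rho_{\mathrm{eff}}=\rho\,s_r/\kappa_S$, and the monotonicity of $x\mapsto x/(1+x)$. Your degenerate cases need no extra assumption, since $\kappa_S<s_r$ forces $s_r>0$, hence $S\mu\neq0$.

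You depart from the paper on the help/hurt criterion. The paper settles it in one sentence, applying \cref{thm:scissors} to $g_M$ with $\rho\to\rho_{\mathrm{eff}}$. It does so even though the statement itself concedes that $\mathrm{Cov}(\xi)\propto S^2$ lies outside \cref{ass:model}, and it never says what $\tau^\star_S$ is. You instead redo \cref{lem:closed} for $\bar g=\tfrac1m\sum_i R(S\mu+\xi_i)$, and the computation is right. With $c_S$, $g_{r,S}$ and $\kappa_S'=\operatorname{tr}(RS^2R^{\top})/(m\operatorname{tr}(S^2))$ one gets exactly $A_{\mathrm{noDI}}=\sqrt{\rho_{\mathrm{eff}}/(1+\rho_{\mathrm{eff}})}$ and $A_{\mathrm{DI}}=c_S/\sqrt{1+\kappa_S'/(g_{r,S}\rho_{\mathrm{eff}})}$. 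The theorem's algebra then carries over verbatim and $\tau^\star_S$ becomes explicit. Normalizing the noise gain by $\operatorname{tr}(S^2)$ rather than $n$ is the choice that makes the threshold fall on $\rho_{\mathrm{eff}}$ itself rather than on a rescaled SNR.

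Your route therefore actually justifies the ``iff''. It also shows that isotropy is not load-bearing for the $A$-metric: only the trace ratio $\operatorname{tr}(R\Sigma R^{\top})/(m\operatorname{tr}\Sigma)$ of the noise covariance enters. Anisotropy thus changes the constants, as the statement's second paragraph warns, but not the form of the crossover. The paper's one-liner buys brevity at the cost of invoking a theorem outside its stated hypotheses.

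Two small fixes are needed. First, list $c_S>0$ among your hypotheses. \cref{thm:scissors} requires $c>0$, and the remark after it shows the conclusion fails without it; $R$ is not assumed symmetric, and $S$ changes the vector it acts on. Second, fix the reference direction when you name $\tau^\star_S$. Measuring against $\widehat{S\mu}$, which is what ``applying the theorem to $g_M$'' silently does, and using \cref{prop:general_target} with $u=\hat\mu$ give different thresholds: in the latter, $\Gamma=\cos^2(RS\mu,\mu)/\cos^2(S\mu,\mu)$ takes the place of $c_S^2$. The latter also needs the same anisotropic re-derivation.
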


\begin{proof}
For independent draws,
\begin{equation*}
  \mathbb{E}\langle g_M,g_M'\rangle=\|S\mu\|^2,
  \qquad
  \mathbb{E}\|g_M\|^2=\|S\mu\|^2+\tfrac{\sigma^2}{m_S}\operatorname{tr}(S^2),
\end{equation*}
which give $\rho_{\mathrm{eff}}$ in \cref{eq:rho_eff} and, by \cref{lem:lgc}
applied to $g_M$, $\mathrm{GCR}_{\mathrm{eff}}=\rho_{\mathrm{eff}}/(1+\rho_{\mathrm{eff}})$.
The ratio $\rho_{\mathrm{eff}}/\rho=s_r/\kappa_S$ is immediate, and since
$x\mapsto x/(1+x)$ is increasing,
$\mathrm{GCR}_{\mathrm{eff}}>\mathrm{GCR}\Leftrightarrow s_r>\kappa_S$. The
help/hurt criterion is \cref{thm:scissors} applied to $g_M$ with
$\rho\to\rho_{\mathrm{eff}}$, which is why the threshold it is compared against is
$\tau^\star_S$ and not $\tau^\star$.
\end{proof}

\begin{remark}[the instantiation the model needs, and what SSA realises]
\label{rem:ssa}
For the corollary to apply to SSA, $S$ would have to be a DCT low-pass preserving
the low-frequency band where robust gradients concentrate while attenuating
broadband noise, giving $\kappa_S<s_r$ and a rightward shift. Measured directly,
the shift is leftward on every source we tested, so $\kappa_S>s_r$ for the
operator SSA actually realises. The likely reason is that SSA's Gaussian noise component and
the variance of its random spectral scaling both enter $\kappa_S$, and at the
strengths used they outweigh whatever the low-pass preserves; the effect grows
monotonically with SSA's scaling parameter, which is consistent with that reading.
We offer this as a description of the measurement, not as a replacement account.
\end{remark}

\subsection{Machine-checked verification}
\label{sec:theory_verify}

All algebraic steps above (the moments of \cref{lem:lgc,lem:closed}, the sign of
$\varphi'$, the limits, the root $\rho^\star$, the pre-method identity
$\rho_{\mathrm{eff}}=\rho\,s_r/\kappa_S$ of \cref{cor:ssa}, and the general-target
ratio $\varphi=\Gamma\,g_r(\rho{+}1)/(g_r\rho{+}\kappa)$ of
\cref{prop:general_target}) are verified symbolically with a computer-algebra
system; the verification is released with our code. The same script checks that
symmetry and contractivity are not load-bearing: with a deliberately
non-symmetric $R$ of spectral radius $1.30$ and $g_r{=}1.10$, so that neither
property holds, the sign of $A_{\mathrm{DI}}-A_{\mathrm{noDI}}$ still agrees with
$\rho\lessgtr\rho^\star$ at every probe point either side of the crossover.
As an independent check, we run a Monte-Carlo simulation with a DCT-based
low-pass resize operator $R$ ($n{=}64$, $m{=}10$ EOT samples) and a signal $\mu$
with mixed low/high-frequency content. These constants ($c{=}0.95$, $g_r{=}0.81$,
$\kappa{=}0.056$) are illustrative choices for the simulation, not measured from a
real surrogate, and they place the crossover at $\tau^\star{=}0.905$; a different
choice would move it (\eg, to $0.7$ or $0.99$). What the theorem guarantees is the
\emph{qualitative} behavior, not this number. With these constants, both the
theorem's $A$-metric and the directly measured $\mathbb{E}[\cos(d,\mu)]$ flip from
DI-helpful to DI-harmful at the same point, $\mathrm{GCR}\approx0.90$
(\cref{fig:theory_mc}); we note this happens to be close to CG-DI's empirically
chosen $\tau{=}0.92$, but we do not claim the theory predicts that value. The same
simulation adds a low-pass pre-operator $S$ ($s_r{=}0.79$, $\kappa_S{=}0.026$,
$m_S{=}20$, so $\rho_{\mathrm{eff}}/\rho{\approx}31\times$) and reproduces
\cref{cor:ssa} as an algebraic consequence of its own assumptions: it lifts a
standard surrogate's effective GCR from $0.55$ to $0.974$, flipping DI's effect
from helpful ($+0.19$) to near-zero ($-0.02$). This confirms the algebra
and nothing about SSA. Measured on real surrogates, SSA moves the effective
gradient regime in the opposite direction, so the operator it is modelled by does
not satisfy $s_r/\kappa_S>1$ (\cref{sec:ssa}). The crossover is visualized in
\cref{fig:theory_mc}.

\subsection{Why we do not offer a per-surrogate threshold}
\label{sec:measured_constants}

\Cref{prop:random_transform}'s constants can be estimated on real surrogates and
combined into a per-surrogate threshold, giving the rule ``DI helps iff
$\mathrm{GCR}<\tau^\star(f)$''. Scored on the surrogates used to form it, the rule
looks strong; that is a fit, not a test, since their DI direction is already
known. We therefore froze an implementation and scored it on seven surrogates
never used as attack sources.

That implementation, which we write $\widehat\tau_{\text{E6}}$ to keep it
separate from the proposition's $\tau^\star$, formed finite-sample plug-in
summaries from 20 transformed-gradient draws with the transform always applied.
Under it, six of the seven held-out surrogates had a finite crossover, with
thresholds from $0.011$ to $0.586$; Swin-B had none. It predicted harm for all
seven, and it is correct on two. The five standard sources gained between
$+9.76$ and $+23.30$ percentage points instead (\cref{tab:heldout}, at the
$p{=}0.8$ it was scored on). \Cref{tab:constants} gives the constants themselves
for all fifteen surrogates, so the estimator can be checked rather than taken on
report.

\begin{table}[h]
\caption{Measured \cref{prop:random_transform} constants on all fifteen ImageNet
surrogates ($r{=}0.9$, $m{=}20$ always-on draws, per-surrogate $N$ as in
\cref{tab:biasvar}), ordered by GCR within each group. $b=\kappa/g_r$ is the noise
gain and $a$ the transform-variance factor; a crossover exists iff $b<c^2<a$, and
$\widehat\tau_{\text{E6}}=(c^2-b)/(a-b)$ is the resulting plug-in threshold.
$^{\dagger}$ marks the seven surrogates held out of the pre-registered scoring.
The rule these constants define predicts DI harm whenever
$\mathrm{GCR}>\widehat\tau_{\text{E6}}$, which is every row with a finite
threshold; it is correct on two of the seven held-out cases.}
\label{tab:constants}
\centering
\small
\begin{tabular}{lcccccc}
\toprule
Surrogate & $c$ & $g_r$ & $b$ & $a$ & GCR & $\widehat\tau_{\text{E6}}$ \\
\midrule
\multicolumn{7}{l}{\textit{standard}} \\
Swin-B$^{\dagger}$ & $0.180$ & $0.388$ & $0.135$ & $1.000$ & $0.165$ & none \\
ResNet-50 & $0.386$ & $0.231$ & $0.122$ & $1.125$ & $0.498$ & $0.027$ \\
DenseNet-121$^{\dagger}$ & $0.439$ & $0.504$ & $0.113$ & $1.087$ & $0.629$ & $0.082$ \\
InceptionV3$^{\dagger}$ & $0.407$ & $1.486$ & $0.127$ & $1.054$ & $0.754$ & $0.042$ \\
ConvNeXt-B$^{\dagger}$ & $0.509$ & $0.444$ & $0.097$ & $1.090$ & $0.823$ & $0.164$ \\
ViT-B/16$^{\dagger}$ & $0.371$ & $0.258$ & $0.126$ & $1.186$ & $0.910$ & $0.011$ \\
\midrule
\multicolumn{7}{l}{\textit{robust}} \\
Salman $\epsilon{=}8$ & $0.440$ & $0.411$ & $0.059$ & $1.018$ & $0.280$ & $0.140$ \\
Engstrom & $0.516$ & $0.570$ & $0.063$ & $1.063$ & $0.974$ & $0.203$ \\
Salman $\epsilon{=}4$ & $0.412$ & $0.376$ & $0.080$ & $1.094$ & $0.975$ & $0.088$ \\
Salman $\epsilon{=}2$ & $0.485$ & $0.587$ & $0.071$ & $1.058$ & $0.981$ & $0.167$ \\
Salman $\epsilon{=}1$ & $0.568$ & $0.683$ & $0.062$ & $1.042$ & $0.984$ & $0.266$ \\
Salman $\epsilon{=}0.5$ & $0.620$ & $1.289$ & $0.057$ & $1.035$ & $0.986$ & $0.335$ \\
ARES ConvNeXt-AT$^{\dagger}$ & $0.683$ & $0.628$ & $0.040$ & $1.022$ & $0.999$ & $0.434$ \\
Singh ConvStem-AT$^{\dagger}$ & $0.777$ & $0.989$ & $0.025$ & $1.012$ & $0.999$ & $0.586$ \\
Mo2022 & $0.824$ & $0.850$ & $0.044$ & $1.024$ & $1.000$ & $0.648$ \\
\bottomrule
\end{tabular}
\end{table}

The mismatch between that implementation and the attack has to be stated,
because it changes what the failure means. The attack averages one transformed
gradient per update and applies the transform with probability $0.8$, whereas the
plug-in summaries used $m{=}20$ always-on draws; and the implementation estimates
$g_r$ from $\mathbb{E}\|\bar g\|^2$, which still contains a variance term that
$a$ then counts again. Re-evaluating \cref{eq:rho_star_random} at the
attack-matched $m{=}1$ leaves only one of the seven with a finite crossover at
all. What failed is therefore the plug-in predictive bridge from the model to
attack outcomes, not \cref{prop:random_transform} itself. We therefore have no
per-surrogate predictor to offer. We report the constants
(\cref{tab:constants}) and release the per-image moments they are formed from, so
that the estimator can be improved or refuted, but we do not ship a rule built on
them.

Two things survive, and they are what \cref{sec:theory} claims. The constants
are instantiable on real gradients, so the crossover the theorem describes is not
vacuous under its own assumptions: of the 15 surrogates measured, 13 have
$g_r\le1$, and 12 of those also satisfy $\kappa/g_r<c^2$. The exception is Swin-B
($c^2{=}0.032$ against $\kappa/g_r{=}0.135$), which has the lowest gradient
alignment in the set and no crossover under either estimator. The second thing
that survives is that the $\mathrm{GCR}\to1$ limit is near-definitional.
Neither yields a number. CG-DI's $\tau{=}0.92$ was and remains an empirical
choice (\cref{sec:sensitivity}), and its closeness to the illustrative
$\tau_{\text{sim}}{=}0.905$ of the simulation is a coincidence, not a prediction.

The linearization that the estimate assumes also does not hold. Modeling the
DI-averaged gradient as a fixed linear operator applied to the clean gradient
leaves a residual $\|\bar g - J_T^{\top} g\|/\|g\|$ of $1.04$--$1.28$ across the
same 13 surrogates, the subset of the 15 in \cref{tab:biasvar} whose resize
geometry can be estimated, so $R$ is not a small perturbation of the identity in
the direction the estimate needs.

\section{Experimental Hyperparameters}
\label{sec:hyperparameters}

All attacks are untargeted and optimized under the $L_\infty$ norm; the standard
configuration follows common transfer-attack practice (\cref{tab:params_attack}).

\begin{table}[h]
\caption{Hyperparameter settings for attacks.}
\label{tab:params_attack}
\centering
\begin{tabular}{lcc}
\toprule
Parameter & CIFAR-10 & ImageNet \\
\midrule
Perturbation budget ($\epsilon$) & $8/255$ & $16/255$ \\
Step size ($\alpha$) & $2/255$ & $2/255$ \\
Iterations ($T$) & 10 & 10 \\
LGC probe ($\epsilon_{chk}$) & $1/255$ & $1/255$ \\
LGC samples ($K$) & 5 & 5 \\
\bottomrule
\end{tabular}
\end{table}

\textbf{Baseline method settings.} MI-FGSM: momentum decay $\mu{=}1.0$. DI-FGSM:
transformation probability $p{=}0.5$ (default) unless swept; resize factor
$r{=}0.9$ (images resized to a random size in $[\lfloor224\times0.9\rfloor,224]$
pixels then zero-padded back; for CIFAR-10, resized into
$[\lfloor32\times0.9\rfloor,32]$ and padded back to $32$). TI-FGSM: Gaussian
kernel $k{=}15$ (ImageNet), $k{=}5$ (CIFAR-10).

\textbf{Compute.} All experiments ran on a server with an NVIDIA RTX 4090 (24\,GB)
and an Intel Xeon Platinum 8352V (16 vCPU) @ 2.10\,GHz with 120\,GB RAM, Python
3.10, CUDA 11.8.

\section{Model Architectures and Training Recipes}
\label{sec:arch}

\Cref{tab:arch} lists every surrogate, its architecture, and its training recipe.
All CIFAR-10 robust models are $L_\infty$, $\epsilon{=}8/255$ RobustBench
checkpoints. The robust sources used in the main analyses span two
architecture families (ResNet-50 and ViT-B) and several recipes (PGD-AT, TRADES,
semi-supervised, diffusion-augmented, ViT-aware AT), so the Scissors Effect cannot
be attributed to any single architecture or training procedure.

\begin{table}[h]
\caption{Surrogate architectures and training recipes. Robust accuracy (RA) for
CIFAR-10 robust models is the RobustBench~\citep{croce2021robustbench}
AutoAttack~\citep{croce2020reliable} value.}
\label{tab:arch}
\centering
\small
\begin{tabular}{lllll}
\toprule
Dataset & Model & Architecture & Recipe & RA (\%) \\
\midrule
\multirow{4}{*}{CIFAR-10 (Std)}
 & ResNet18 & ResNet-18 & natural & --- \\
 & ResNet50 & ResNet-50 & natural & --- \\
 & VGG16 & VGG-16 & natural & --- \\
 & DenseNet121 & DenseNet-121 & natural & --- \\
\midrule
\multirow{9}{*}{CIFAR-10 (Rob)}
 & Engstrom & ResNet-50 & PGD-AT & 49.3 \\
 & Rice & WRN-34-20 & PGD-AT (early stop) & 53.4 \\
 & Gowal & WRN-70-16 & PGD-AT (extra data) & 57.2 \\
 & Carmon & WRN-28-10 & semi-supervised (RST) & 59.5 \\
 & Peng & RaWRN-70-16 & synthetic data (50M) & 71.1 \\
 & Wang2023 & WRN-28-10 & diffusion data & 67.3 \\
 & Zhang (TRADES) & WRN-34-10 & TRADES & 53.1 \\
 & Sehwag & WRN-34-10 & proxy distribution & 60.3 \\
 & Sehwag-R18 & ResNet-18 & proxy distribution & 55.5 \\
\midrule
\multirow{6}{*}{ImageNet (Std)}
 & ResNet50 & ResNet-50 & natural & --- \\
 & ViT-B/16 & ViT-B/16 & natural & --- \\
 & DenseNet121 & DenseNet-121 & natural & --- \\
 & InceptionV3 & InceptionV3 & natural & --- \\
 & Swin-B & Swin-B & natural & --- \\
 & ConvNeXt-B & ConvNeXt-Base & natural & --- \\
\midrule
\multirow{3}{*}{ImageNet (Rob)}
 & Engstrom & ResNet-50 & PGD-AT, $\epsilon{=}4/255$ & --- \\
 & Salman ($\times5$) & ResNet-50 & PGD-AT, $\epsilon\in\{0.5,1,2,4,8\}/255$ & --- \\
 & Mo2022 & ViT-B & ViT-aware AT, $\epsilon{=}8/255$ & --- \\
\bottomrule
\end{tabular}
\end{table}

\section{Additional Experimental Details}
\label{sec:additional_exp}

All modern attacks were implemented via TransferAttack~\citep{transferattack2023}
following their official codebases, with default hyperparameters except the DI
probability $p$.

\subsection{Targeted Attack Generalization}
\label{sec:targeted}
We evaluate targeted MI-FGSM~\citep{dong2018boosting} on ImageNet ($N{=}1{,}000$,
5 seeds, $\epsilon{=}16/255$) with randomly assigned target classes
(\cref{tab:targeted}).

\begin{table}[h]
\caption{Targeted attack transfer ASR (\%), $N{=}1{,}000$, 5 seeds,
all-images basis. Success means the
target predicts the assigned target class, so the levels are far lower than in
the untargeted tables and are not comparable to them. DI harms robust surrogates
in this setting too.}
\label{tab:targeted}
\centering
\small
\begin{tabular}{llcccc}
\toprule
Surrogate & Target & Naked & +DI & $\Delta$ (pp) & $\Delta$ (rel.) \\
\midrule
ResNet50 (Std) & Swin-B & 0.2\% & 0.7\% & $+0.5$ & $+250\%$ \\
ResNet50 (Std) & IncV3 & 0.6\% & 1.2\% & $+0.6$ & $+100\%$ \\
\midrule
Engstrom (Rob) & Swin-B & 2.7\% & 1.8\% & $\mathbf{-0.9}$ & $-33\%$ \\
Engstrom (Rob) & IncV3 & 5.2\% & 3.6\% & $\mathbf{-1.6}$ & $-31\%$ \\
\bottomrule
\end{tabular}
\end{table}

Despite the inherently low absolute ASR of targeted cross-architecture transfer,
the direction is consistent: DI helps the standard surrogate ($\sim+0.6$ pp) and
harms the robust one ($-1.2$ pp avg, $\sim-32\%$ relative). The Scissors Effect is
thus independent of attack objective (untargeted vs.\ targeted).

\FloatBarrier
\section{Spectral Visualization of Gradient Geometry}
\label{sec:spectral}

We analyze the frequency spectra of input gradients via 2D FFT
(\cref{fig:fft_spectra}).

\begin{figure}[!ht]
    \centering
    \includegraphics[width=0.85\linewidth]{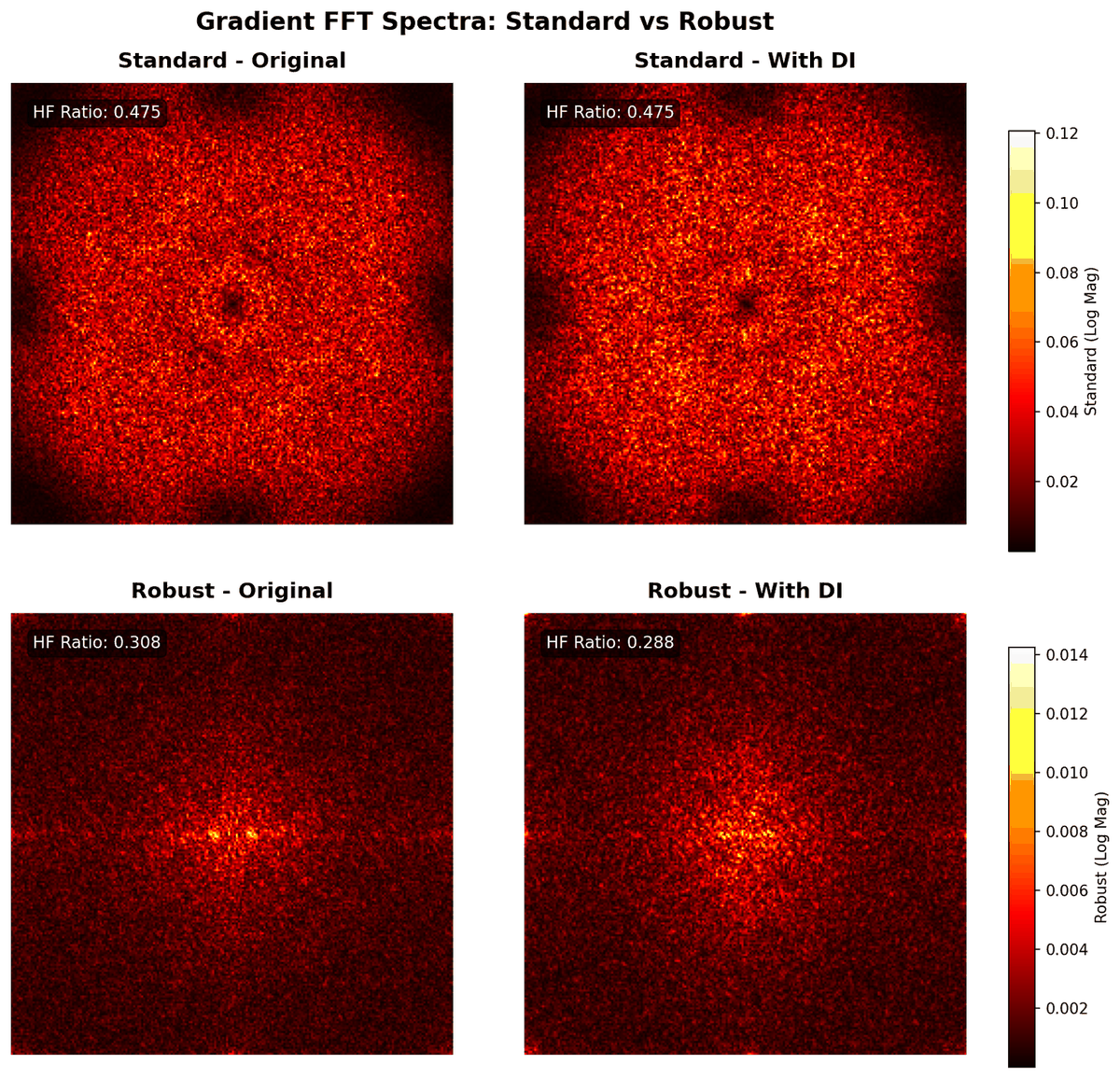}
    \caption{2D FFT spectra of gradient maps. (\emph{Top row}) Standard model
    (ResNet18): diffuse, broadband content. (\emph{Bottom row}) Robust model
    (Engstrom): concentrated low-frequency energy. Left/right columns compare
    original vs.\ DI-augmented inputs. The two rows carry \emph{separate}
    color scales (note the differing colorbars), so brightness is comparable
    within a row and not across rows. DI leaves the standard model's
    high-frequency share unchanged to three decimals ($0.475$ both sides) and
    lowers the robust model's ($0.308\to0.288$), which is the spectral form of
    the asymmetry this section is about.}
    \label{fig:fft_spectra}
\end{figure}

\textbf{High-frequency ratio.} From the 2D FFT $G(f)$ we form the log-magnitude
spectrum $S(f)=\log(1+|G(f)|)$, take its radially-averaged profile $P(r)$
(\cref{fig:radial_profile}), and define
\begin{equation}
    R_{HF} = \frac{\sum_{r > r_{\max}/2} P(r)}{\sum_r P(r)},
\end{equation}
where $r$ is the radial spatial frequency and $r_{\max}$ the maximum resolvable
frequency. The log transform prevents dominant low-frequency components from
masking the high-frequency signal.

The radial profile (\cref{fig:radial_profile}) shows a $\sim$1.7$\times$ HF gap
(standard mean $\approx0.48$ vs.\ robust
$\approx0.28$), explaining why DI's resize, which corrupts low-frequency signals via
interpolation, is particularly harmful for robust models. Among standard models,
attention-based architectures (ViT-B/16: $R_{HF}{=}0.37$) produce smoother
gradients than CNNs (ResNet50: $0.55$), consistent with their higher LGC.

\begin{figure}[!ht]
    \centering
    \includegraphics[width=0.9\linewidth]{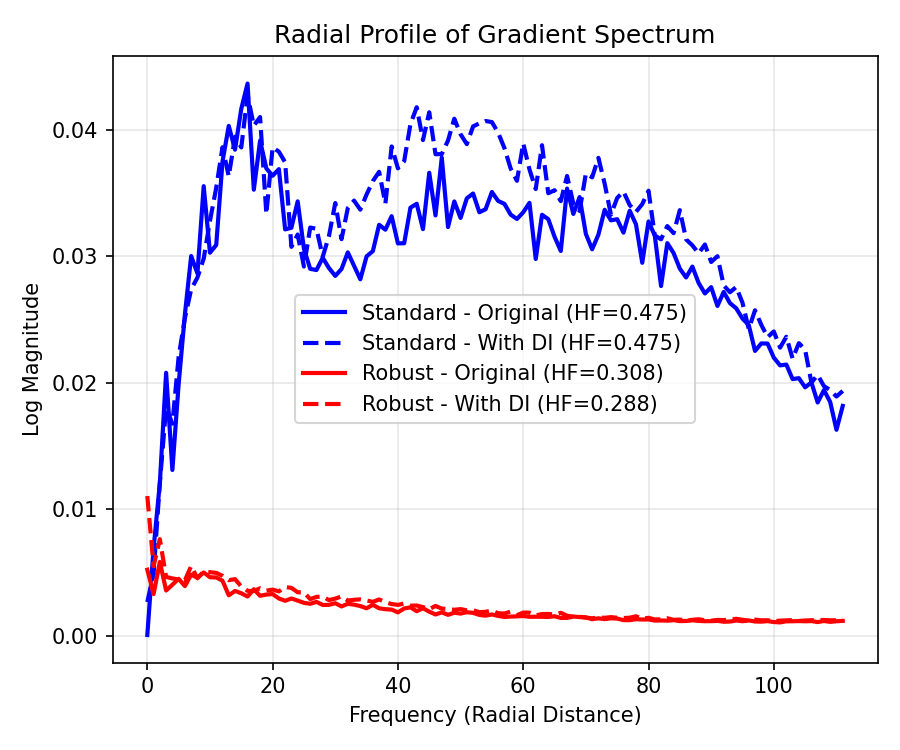}
    \caption{Radial spectral profile. Standard models (\emph{blue}) have higher
    log-magnitude at high frequencies; robust models (\emph{red}) concentrate at
    low frequencies. Each curve is the profile of the batch-averaged gradient,
    so the figure carries no uncertainty band.}
    \label{fig:radial_profile}
\end{figure}

\section{Per-Image LGC Computation}
\label{sec:per_image}

Although \cref{alg:cgdi} defines LGC per image, its variability depends on model
type. Robust models have very low per-image variance (std $<0.02$ over 1{,}000
images); standard models vary more widely (\eg, ResNet50, model-level mean
$\approx0.64$), but their mean stays well below $\tau{=}0.92$. Where the
model-level mean sits far from the threshold it is a reliable stand-in for the
per-image statistic, and one can then estimate the binary decision on a small
calibration batch ($N{=}50$), amortizing the $K{=}5$ queries across all attack
iterations.

That stand-in fails near the threshold, which is where it would matter most. On
the two held-out surrogates whose mean LGC lies closest to $\tau_{\text{op}}$ the
per-image rule disagrees with the model-level one on a large fraction of images
and costs $7.4$ and $6.4$ pp against blind DI (\cref{sec:heldout}). We therefore
evaluate and recommend the model-level rule and do not present the per-image
variant as equivalent to it.

\section{Optimal \texorpdfstring{$p^*$}{p-star} Identifiability Analysis}
\label{sec:pstar_analysis}

Two questions about the argmax $p^*$ of the ASR$(p)$ curve are easy to conflate,
and only one of them has an affirmative answer.

\emph{Can the location of the best grid value be determined?} Yes, at adequate
sample size. At $N{=}1{,}000$ with five seeds and four held-out targets
(\cref{tab:continuous_p}) the curves are strictly monotone in $p$ for all three
sources, so the best grid value is an endpoint. A smaller sweep does not show
this: at $N{=}500$ with three seeds and a single target, the best grid value beats
its runner-up by only $+0.08$, $+0.33$ and $+0.73$ pp, with every interval
containing zero.

\emph{Can neighbouring grid values near the optimum be told apart?} No, and this
is a property of a smooth curve near its endpoint rather than a defect of the
measurement. It is also not needed for anything the paper claims.

The consequence is that $p^*$ is the wrong quantity to correlate LGC against,
since on a monotone curve such a correlation only re-states the sign of the
slope, which is what \cref{tab:correlation} reports directly. The operationally
meaningful quantity is what the binary rule gives up against the endpoint: zero
on both robust sources and $0.95$ pp $[0.50, 1.42]$ on ResNet-50.

\section{Controlled \texorpdfstring{$\epsilon$}{epsilon}-Sweep: Second Target (ViT-B)}
\label{sec:eps_vitb}

\Cref{tab:eps_sweep_vitb} repeats the controlled $\epsilon$-sweep of
\cref{sec:eps_sweep} against a second target (ViT-B). The large-scale picture
matches: DI is beneficial on the standard surrogate and harmful
once $\epsilon_{\text{train}}\ge2/255$, with the harm deepening thereafter. The
two smallest robustness levels do not resolve. $D$ is $-0.7$ at $0.5$ and $+0.1$
at $1$, so the sequence is not monotone there, and neither value is separated
from zero at $N{=}500$, where the sampling uncertainty on an ASR near $90\%$ is
about $1.3$ pp. We read these two cells as ``no resolved effect'' rather than as
locating a crossover.

\begin{table}[h]
\caption{Controlled $\epsilon$-sweep, target ViT-B ($N{=}500$, 3 seeds,
$\epsilon_{\text{attack}}{=}16/255$, all-images basis; uncertainty here is
dominated by the image sample, not by seeds). $D$ is in pp; it is not resolved at
$\epsilon_{\text{train}}\le1$ and is clearly negative from $2$ onward; compare
the Swin-B target in \cref{tab:eps_sweep}.}
\label{tab:eps_sweep_vitb}
\centering
\small
\begin{tabular}{lccc}
\toprule
$\epsilon_{\text{train}}$ ($\times255$) & MI-FGSM & DI-FGSM & $D=$ DI$-$MI \\
\midrule
0 (Standard) & 34.4 & 50.4 & $+16.0$ \\
0.5 & \textbf{91.8} & 91.1 & $-0.7$ \\
1 & 90.0 & 90.1 & $+0.1$ \\
2 & 87.8 & 82.1 & $-5.7$ \\
4 & 78.8 & 71.4 & $-7.4$ \\
8 & 62.6 & 52.9 & $\mathbf{-9.7}$ \\
\bottomrule
\end{tabular}
\end{table}

\section{Cross-Recipe Replication Panel}
\label{sec:recipe_panel}

\Cref{tab:recipe_panel} gives the cross-recipe panel of \cref{sec:eps_sweep}: five
robust ImageNet surrogates spanning four adversarial-training recipes (PGD-AT,
ViT-aware AT, ARES adversarial training, ConvStem AT) and four architectures
(ResNet-50, ViT-B, ConvNeXt-B, ViT-B-ConvStem), each attacked against two targets
(Swin-B and ConvNeXt-B), with the per-source LGC measured alongside ($N{=}500$, 3
seeds, $\epsilon_{\text{attack}}{=}16/255$). DI harms every robust surrogate on all
nine cross-family pairs and benefits the standard surrogate on both; the sign of
$D$ follows the LGC regime, not the recipe or architecture. This establishes that
the Scissors direction replicates beyond the Salman ResNet-50 / PGD-AT family and
to a second target; it is a replication of the sign, not a second controlled
strength sweep (no public fixed-backbone $\epsilon$-spectrum exists outside
Salman's). The one source--target pair sharing an architecture family
(ARES ConvNeXt-B source against the ConvNeXt-B target) is marked and excluded from
the nine cross-family count.

\begin{table}[h]
\caption{Cross-recipe replication panel ($N{=}500$, 3 seeds, all-images basis,
targets Swin-B and
ConvNeXt-B). $D=$ DI$-$MI in pp. DI harms all five robust surrogates across four
recipes and four architectures on both targets; the standard surrogate benefits;
the sign tracks LGC. $\dagger$ same-family pair (excluded from the cross-family
count).}
\label{tab:recipe_panel}
\centering
\small
\begin{tabular}{lllccc}
\toprule
Source & Recipe & Arch & LGC & $D$ (Swin-B) & $D$ (ConvNeXt-B) \\
\midrule
Standard RN50 & --- & RN50 & 0.63 & $\mathbf{+13.0}$ & $\mathbf{+15.9}$ \\
\midrule
Engstrom & PGD-AT & RN50 & 0.98 & $-12.3$ & $-18.7$ \\
Salman & PGD-AT & RN50 & 0.98 & $-10.9$ & $-15.2$ \\
Mo2022 & ViT-AT & ViT-B & 1.00 & $-6.1$ & $-6.7$ \\
ARES & ARES-AT & ConvNeXt-B & 1.00 & $-5.5$ & $-7.1^{\dagger}$ \\
Singh & ConvStem-AT & ViT-B-CS & 1.00 & $-4.5$ & $-5.3$ \\
\bottomrule
\end{tabular}
\end{table}

\section{Source--Target Overlap: Same-Family Control}
\label{sec:self_transfer_app}

\Cref{tab:self_transfer} gives the full numbers for the same-family
($\mathrm{RN50}\to\mathrm{RN50}$) control of \cref{sec:self_transfer}. The DI
effect's sign is set by the source regime, not by source--target family
overlap: a standard source benefits from DI even against robust RN50 targets,
while a robust source is harmed even against a standard RN50 target.

\begin{table}[h]
\caption{Same-family control ($N{=}500$, 3 seeds, all-images basis). $D=$
DI$-$MI in pp. Cross-family baselines (against Swin-B) repeated for reference. Source
and target here share an architecture family but are distinct checkpoints; this
is the one place in the paper where that is deliberate.}
\label{tab:self_transfer}
\centering
\small
\begin{tabular}{lllccc}
\toprule
Source & Target & Relation & MI & DI & $D$ \\
\midrule
Standard RN50 & Swin-B & cross-family (baseline) & 41.2 & 54.2 & $+13.0$ \\
Standard RN50 & Engstrom RN50 & same-family (robust tgt) & 37.4 & 38.7 & $+1.3$ \\
Standard RN50 & Salman $\epsilon{=}2$ RN50 & same-family (robust tgt) & 34.0 & 35.1 & $+1.1$ \\
\midrule
Salman $\epsilon{=}2$ RN50 & Swin-B & cross-family (baseline) & 78.4 & 65.9 & $-12.5$ \\
Salman $\epsilon{=}2$ RN50 & Standard RN50 & same-family (std tgt) & 88.8 & 77.2 & $\mathbf{-11.6}$ \\
\bottomrule
\end{tabular}
\end{table}

\section{LGC Across the Robustness Spectrum, and a Corner Case}
\label{sec:corner_case}

We compute LGC on 7 robust ImageNet sources spanning the controlled
$\epsilon$-spectrum plus Engstrom and Mo2022, together with the naturally trained
ResNet-50 as a standard reference point (clean-correct subsets, $K{=}5$,
$\sigma{=}1/255$; \cref{tab:lgc_spectrum}). LGC tracks DI sensitivity
quantitatively: against the DI effect $D$ from \cref{tab:eps_sweep}, Pearson
$r(\text{LGC},D)=-0.87$, $p{=}0.025$ ($n{=}6$; \cref{fig:lgc_scatter}). That
coefficient should not be read as evidence of a graded relation: this sweep holds
the architecture and recipe fixed and varies only $\epsilon_{\text{train}}$, so it
contains a single standard model, and dropping that one point takes Pearson to
$-0.33$ ($p{=}0.59$). The correlation panel we report in the main text
(\cref{fig:lgc_harm}) therefore uses the held-out ImageNet surrogates, which
contain five standard models. CG-DI is
correct on 6 of the 7 robust sources and on the standard reference.

\begin{table}[h]
\caption{LGC across the robustness spectrum ($K{=}5$ probes at
$\sigma{=}1/255$, source clean-correct subset): 7 robust ImageNet sources plus the
naturally trained ResNet-50 as a standard reference (first row), which is not
counted among the robust sources. CG-DI is correct on 6 of the 7 robust sources
and on the reference. Bold marks the two extremes of the LGC column; the single
miss (Salman $\epsilon{=}8$ ResNet-50) is discussed in the text.}
\label{tab:lgc_spectrum}
\centering
\small
\begin{tabular}{lllcc}
\toprule
Source & Arch & $\epsilon_{\text{train}}$ & LGC & CG-DI \\
\midrule
Standard & RN50 & 0 & 0.65 & $p{=}0.8$\ \checkmark \\
Salman & RN50 & 0.5 & 0.99 & $p{=}0$\ \checkmark \\
Salman & RN50 & 1 & 0.99 & $p{=}0$\ \checkmark \\
Salman & RN50 & 2 & 0.99 & $p{=}0$\ \checkmark \\
Salman & RN50 & 4 & 0.98 & $p{=}0$\ \checkmark \\
Engstrom & RN50 & 4 & 0.98 & $p{=}0$\ \checkmark \\
Mo2022 & ViT-B & 8 & \textbf{1.00} & $p{=}0$\ \checkmark \\
Salman & RN50 & 8 & \textbf{0.80} & $p{=}0.8$\ (corner case) \\
\bottomrule
\end{tabular}
\end{table}

\begin{figure}[h]
\centering
\includegraphics[width=0.6\linewidth]{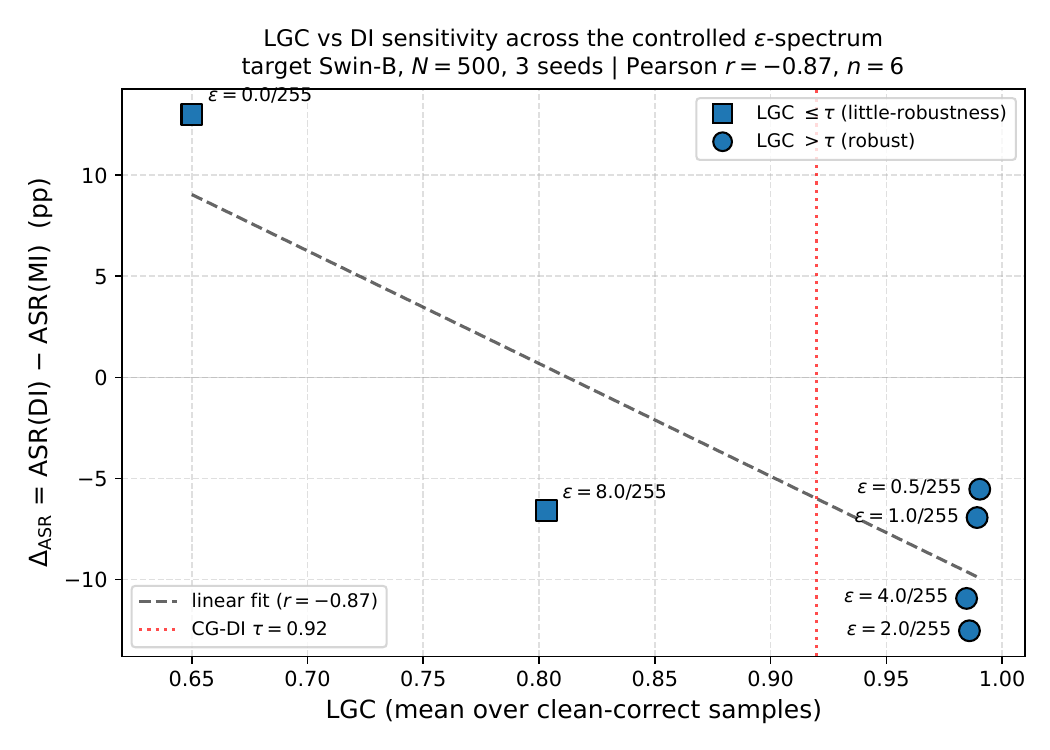}
\caption{LGC vs.\ the DI effect $D$ across the controlled $\epsilon$-spectrum
($N{=}500$, 3 seeds, all-images basis; Pearson $r{=}-0.87$, $p{=}0.025$,
$n{=}6$ surrogates, which is the unit of the correlation). The CG-DI threshold
$\tau{=}0.92$ is marked. Only one of the six is a standard model, and the
coefficient depends on it: without that point Pearson is $-0.33$ ($p{=}0.59$).
This is a description of six points, not an estimate we would extrapolate.}
\label{fig:lgc_scatter}
\end{figure}

\textbf{The corner case.} Salman $\epsilon{=}8/255$ ResNet-50 has LGC $0.80$, so
CG-DI would (incorrectly) enable DI. It is a failure of the rule, and we report
it as one. Capacity strain is our reading of \emph{why} the checkpoint's LGC is
atypical, and the three grounds below are circumstantial rather than a controlled
test. (i)~At the same training
$\epsilon{=}8/255$, Mo2022's ViT-B yields LGC $1.00$ and CG-DI is correct, though
that comparison changes the training recipe along with the backbone and so does
not isolate capacity. (ii)~Salman \etal's own reported clean
accuracy for this checkpoint falls from $76.1\%$ ($\epsilon{=}0$) to $54.5\%$
($\epsilon{=}8$), a documented $21.6$-point capacity strain. (iii)~RobustBench and
\citet{mo2022adversarial} reserve higher-capacity backbones (WRN-70-16, ViT-B) for
$\epsilon\ge8/255$, using ResNet-50 only for $\epsilon\le4/255$.

What fails here is the rule we ship. The capacity strain lowers this checkpoint's
LGC from the ${\approx}0.98$ typical of robust models to $0.80$; since
$0.80<\tau{=}0.92$, CG-DI routes it as standard and enables DI on a surrogate DI
harms. That is a failure of the single fixed threshold, and the threshold stays
as it was frozen.

A per-surrogate threshold is not the repair. Tested on seven surrogates it was
not fitted to, the plug-in rule of \cref{sec:measured_constants} is correct on
only 2 of 7, so invoking it to recover this case would import its failures
elsewhere. It is also the wrong comparison to make: $\widehat\tau_{\text{E6}}$ is
defined on GCR and the routing statistic is computed from \cref{eq:lgc}, two
quantities \cref{tab:thresholds} keeps deliberately distinct. For the record the rule does place this checkpoint on the harmful side: its
measured GCR of $0.280$ exceeds its $\widehat\tau_{\text{E6}}$ of $0.140$
(\cref{tab:constants}). It does the same for all seven held-out surrogates,
five of which DI helps, so a single agreeing case is not evidence.
Re-fitting a threshold on the cases it fails would in any event convert a
held-out test into a fitted one, which is why $\tau$ was frozen before these
runs.

\section{CIFAR-100 in Full}
\label{sec:cifar100_full}

\Cref{tab:cifar100_full} is the per-surrogate record behind the CIFAR-100 summary
of \cref{sec:heldout}, at the same resolution as the datasets the rule succeeds
on. Two features explain the tally. First, the effect itself is small: at $r{=}0.9$ every
surrogate lies within $1.5$ pp of zero and three of the seven intervals contain
zero, so for those three there is no sign to get right. Second, the six robust
surrogates all sit at LGC ${\ge}0.994$, far above $\tau_{\text{op}}$, so the rule
predicts harm for all of them; where the measured effect is instead a small
positive number, the rule is scored wrong even though the magnitude is
negligible. Aggressive resize ($r{=}0.6$) restores the sign on two more
surrogates without restoring the effect size.

\begin{table}[h]
\caption{CIFAR-100 in full ($p{=}0.8$, $N{=}1{,}000$, 5 seeds, two to three CIFAR-100
targets per source, all-images basis). $D=$ DI$-$MI in percentage points, pooled
over targets and seeds; intervals are a $10{,}000$-replicate bootstrap over
(target, seed) pairs. ``pred'' is the frozen rule's decision from LGC alone;
``res.'' marks whether the $r{=}0.9$ interval excludes zero. The last column is
the rule's correctness at $r{=}0.9$ and $r{=}0.6$.}
\label{tab:cifar100_full}
\centering
\small
\begin{tabular}{llcccccc}
\toprule
Source & Type & LGC & pred & $D$ ($r{=}0.9$) & res. & $D$ ($r{=}0.6$) & ok \\
\midrule
Standard WRN-28-10 & Std & $0.868$ & $p{=}0.8$ & $+1.16$ $[+0.87,+1.45]$ & yes & $+2.18$ $[+2.01,+2.36]$ & Y/Y \\
Rice & Rob & $0.994$ & $p{=}0$ & $+1.45$ $[+1.16,+1.70]$ & yes & $+0.59$ $[+0.37,+0.81]$ & n/n \\
Hendrycks & Rob & $0.996$ & $p{=}0$ & $-0.02$ $[-0.20,+0.16]$ & no & $-1.45$ $[-2.00,-0.93]$ & Y/Y \\
Rebuffi & Rob & $0.997$ & $p{=}0$ & $-0.10$ $[-0.37,+0.21]$ & no & $-0.95$ $[-1.64,-0.26]$ & Y/Y \\
Pang & Rob & $0.997$ & $p{=}0$ & $+0.13$ $[-0.04,+0.29]$ & no & $-0.45$ $[-1.07,+0.16]$ & n/Y \\
Cui & Rob & $0.998$ & $p{=}0$ & $+0.70$ $[+0.39,+1.05]$ & yes & $+0.24$ $[-0.34,+0.80]$ & n/n \\
Wang & Rob & $0.998$ & $p{=}0$ & $+0.34$ $[+0.09,+0.58]$ & yes & $-0.06$ $[-0.78,+0.64]$ & n/Y \\
\midrule
\multicolumn{7}{l}{\emph{rule correct}} & 3/7, 5/7 \\
\bottomrule
\end{tabular}
\end{table}

\section{Multimodal Surrogate: CLIP Transfer}
\label{sec:clip}

We attack from CLIP ViT-B/32~\citep{radford2021learning} to robust CIFAR-10
targets (\cref{tab:clip}). CLIP's gradients are ``standard-like'' (LGC
$\approx0.72$; \cref{tab:frequency}), likely because contrastive pretraining does
not induce AT-level gradient smoothness, so CG-DI enables moderate diversity
($p{=}0.8$) and reaches near-oracle performance.

\begin{table}[h]
\caption{CLIP ViT-B/32 $\to$ robust CIFAR-10 targets ($N{=}1{,}000$, 5 seeds,
all-images basis). CLIP's LGC is standard-like, so CG-DI enables diversity, and $p{=}0.8$ is
the better of the two settings it chooses between.}
\label{tab:clip}
\centering
\small
\begin{tabular}{lcccc}
\toprule
Target & $p$=0 & $p$=0.5 & $p$=1.0 & CG-DI \\
\midrule
Engstrom & 13.9\% & 14.1\% & 14.5\% & 14.3\% \\
Rice & 16.1\% & 16.5\% & 16.6\% & 16.6\% \\
Wang-WRN & 8.4\% & 8.9\% & 9.0\% & 9.0\% \\
\bottomrule
\end{tabular}
\end{table}

\section{Extended ImageNet Architecture Breakdown}
\label{sec:imagenet_extended}

\Cref{tab:imagenet_extended} gives the per-target CG-DI results underlying the
averaged ImageNet metrics in \cref{tab:imagenet}, plus an extra target (ResNet50)
for the robust source.

\begin{table}[h]
\caption{ImageNet per-architecture breakdown ($N{=}1{,}000$, 5 seeds, all-images
basis; the per-target detail behind \cref{tab:imagenet}). CG-DI selects $p{=}0.8$
for the standard source and $p{=}0$ for the robust one, on every target.
$\Delta$ is the DI effect in pp, ASR$(p{=}1)-$ASR$(p{=}0)$, not CG-DI's gain. Bold marks
the best of the three settings in each row: on the standard source CG-DI's
$p{=}0.8$ is within about a point of $p{=}1$ rather than above it.}
\label{tab:imagenet_extended}
\centering
\small
\begin{tabular}{llcccc}
\toprule
Source & Target & $p$=0 & $p$=1 & CG-DI & $\Delta$ \\
\midrule
\multirow{4}{*}{ResNet50} & IncV3 & 51.9 & \textbf{66.5} & 66.1 & +14.6 \\
 & ViT & 34.9 & \textbf{49.2} & 48.1 & +14.3 \\
 & Swin & 40.9 & \textbf{54.1} & 53.5 & +13.2 \\
 & ConvNeXt & 48.3 & 64.6 & \textbf{64.9} & +16.3 \\
\midrule
\multirow{5}{*}{Engstrom} & IncV3 & \textbf{86.0} & 80.9 & \textbf{86.0} & -5.1 \\
 & ResNet50 & \textbf{81.3} & 68.7 & \textbf{81.3} & -12.6 \\
 & ViT & \textbf{82.3} & 75.5 & \textbf{82.3} & -6.8 \\
 & Swin & \textbf{68.6} & 55.5 & \textbf{68.6} & -13.1 \\
 & ConvNeXt & \textbf{66.9} & 50.7 & \textbf{66.9} & -16.2 \\
\bottomrule
\end{tabular}
\end{table}

\section{Model Preprocessing Details}
\label{sec:preprocessing}

\Cref{tab:preprocessing} documents input sizes and normalization for ImageNet
targets. All use standard ImageNet normalization; InceptionV3 requires
$299\times299$, others $224\times224$. CIFAR-10 RobustBench models use native
$32\times32$ inputs; torchvision architectures used for cross-architecture
experiments are upscaled to $224\times224$.

\begin{table}[h]
\centering
\caption{Model preprocessing configuration for ImageNet targets.}
\label{tab:preprocessing}
\small
\begin{tabular}{lccc}
\toprule
Model & Input size & Norm mean & Norm std \\
\midrule
ResNet50 & 224$\times$224 & (0.485, 0.456, 0.406) & (0.229, 0.224, 0.225) \\
InceptionV3 & \textbf{299$\times$299} & (0.485, 0.456, 0.406) & (0.229, 0.224, 0.225) \\
ViT-B/16 & 224$\times$224 & (0.485, 0.456, 0.406) & (0.229, 0.224, 0.225) \\
Swin-B & 224$\times$224 & (0.485, 0.456, 0.406) & (0.229, 0.224, 0.225) \\
ConvNeXt-B & 224$\times$224 & (0.485, 0.456, 0.406) & (0.229, 0.224, 0.225) \\
Engstrom (Robust) & 224$\times$224 & (0.485, 0.456, 0.406) & (0.229, 0.224, 0.225) \\
\bottomrule
\end{tabular}
\end{table}

\section{Additional Analysis}
\label{sec:additional_analysis}

\subsection{Computational Cost of LGC}
\label{sec:lgc_cost}
\Cref{tab:runtime_cost} benchmarks LGC ($K{=}5$) against a 10-step MI-FGSM attack.
Per-batch LGC adds $\sim$76\% overhead, reduced to $\sim$5\% with pre-computation
on a small calibration set (LGC is stable across set sizes;
\cref{tab:calibration_stability}).

\begin{table}[h]
\centering
\small
\caption{Runtime overhead of the CG-DI probe (batch size 32, $T{=}10$ attack
steps, wall-clock on one GPU, mean over 5 repetitions). Overhead is relative to
the same attack without the probe.}
\label{tab:runtime_cost}
\begin{tabular}{lcc}
\toprule
Component & Time (s) & Overhead \\
\midrule
LGC ($K{=}5$) & 0.239 & --- \\
Attack ($T{=}10$) & 0.316 & +75.6\% \\
\midrule
\multicolumn{3}{c}{\textit{Amortized ($N{=}1000$)}} \\
\midrule
Pre-comp (50 imgs) & 0.5 & --- \\
Attack & 10.1 & \textbf{+5.0\%} \\
\bottomrule
\end{tabular}
\end{table}

\begin{table}[h]
\centering
\caption{Stability of model-level LGC estimation across calibration set sizes
($K{=}5$ probes at $\sigma{=}1/255$). Each row is one fixed calibration subset of
the stated size; the interval is the mean and standard deviation \emph{over
images} within it, so it measures the spread of per-image LGC and does not shrink
with the subset size. What the table shows is that the mean is insensitive to the
size, not that the estimate has been resampled.}
\label{tab:calibration_stability}
\begin{tabular}{cc}
\toprule
Calibration set size & Estimated LGC \\
\midrule
$N{=}50$ & $0.9815 \pm 0.0091$ \\
$N{=}1000$ & $0.9812 \pm 0.0098$ \\
\bottomrule
\end{tabular}
\end{table}

\FloatBarrier
\subsection{Impact of Translation (TI-FGSM) on Robust Models}
\label{sec:translation_impact}
We compare a translation-only variant (TI-FGSM, kernel 5) to standard DI and the
naked baseline (\cref{tab:translation_impact}). Kernel-based translation
invariance degrades robust-model attacks by $-18.4$ pp, even more than standard DI
($-6.9$ pp), because the convolution kernel over-smooths already-consistent robust
gradients. This differs from the mild random (EOT) translation used in the
main-paper decomposition (\cref{tab:transform}), and reinforces that robust
gradients are highly sensitive to spatial structure.

\begin{table}[h]
\centering
\caption{Input transformations on a robust surrogate (Engstrom $\to$ Swin-B,
$N{=}500$, 3 seeds, all-images basis).}
\label{tab:translation_impact}
\begin{tabular}{llc}
\toprule
Method & Transform & ASR (\%) \\
\midrule
CG-DI ($p{=}0$) & none (naked) & \textbf{67.6} \\
DI-FGSM ($p{=}0.5$) & resize + translation & 60.7 \\
TI-FGSM & translation only (kernel) & 49.2 \\
\bottomrule
\end{tabular}
\end{table}

\subsection{Interpolation Mode Ablation (Ruling Out Artifacts)}
\label{sec:interp_ablation}
To rule out that the effect is tied to the bilinear interpolation in DI's resize,
we ablate three modes, namely bilinear (default), bicubic, and antialiased, while sweeping
$p\in\{0,0.3,0.5,0.7,1.0\}$, using \texttt{torchattacks}~\citep{torchattacks2020}
DIFGSM and modifying only the interpolation call (3 seeds, ImageNet,
$N{=}1{,}000$; \cref{tab:interp_ablation}).

\begin{table}[h]
\centering
\caption{Interpolation mode ablation ($N{=}1{,}000$, 3 seeds, all-images basis).
Average transfer ASR (\%) across the four ImageNet targets;
$\Delta$ is in pp relative to $p{=}0$ (44.0\% Standard, 76.0\% Robust).}
\label{tab:interp_ablation}
\small
\begin{tabular}{ll ccccc}
\toprule
Source & Interp mode & $p\!=\!0$ & $p\!=\!0.3$ & $p\!=\!0.5$ & $p\!=\!0.7$ & $p\!=\!1.0$ \\
\midrule
\multirow{3}{*}{Standard}
 & Bilinear  & 44.0 & 52.8\scs{+8.8} & 55.6\scs{+11.6} & 57.2\scs{+13.2} & 58.9\scs{+14.9} \\
 & Bicubic   & 44.0 & 51.7\scs{+7.7} & 54.2\scs{+10.2} & 56.0\scs{+12.0} & 57.3\scs{+13.3} \\
 & Antialias & 44.0 & 52.3\scs{+8.3} & 55.3\scs{+11.3} & 57.5\scs{+13.5} & 59.2\scs{+15.2} \\
\midrule
\multirow{3}{*}{Robust}
 & Bilinear  & 76.0 & 73.7\scs{-2.3} & 71.8\scs{-4.2} & 69.8\scs{-6.2} & 65.6\scs{-10.4} \\
 & Bicubic   & 76.0 & 75.1\scs{-0.9} & 74.6\scs{-1.4} & 73.7\scs{-2.3} & 70.9\scs{-5.1} \\
 & Antialias & 76.0 & 73.7\scs{-2.3} & 71.8\scs{-4.2} & 69.4\scs{-6.6} & 65.0\scs{-11.0} \\
\bottomrule
\end{tabular}
\end{table}

All three modes show the same qualitative pattern (DI helps Standard $+13$--$15$ pp,
harms Robust $-5$ to $-11$ pp): the \emph{direction} is invariant across all three
interpolation filters. The \emph{magnitude}, however, is not: bicubic roughly
halves the robust harm ($-5.1$ pp vs.\ bilinear $-10.4$ pp), while standard surrogates
are largely insensitive to interpolation choice ($<2$ pp variation). We read this as
support for, not against, the gradient-geometry account. A pure interpolation
artifact would have no reason to keep a fixed sign
across filters; what we instead see is a sign that never flips and a magnitude that
scales with how aggressively the filter suppresses high-frequency content: bicubic has a
gentler low-pass rolloff than bilinear, so on a robust surrogate's low-frequency
gradient it injects less resize bias and does less harm. That is exactly the
dependence the low-pass-bias mechanism (\cref{sec:bias_variance}) predicts: the harm
is set by the resize operator's contraction strength, not by an interpolation
idiosyncrasy. The Scissors Effect is thus a property of gradient geometry interacting
with a low-pass resize, with magnitude tunable by the filter and direction fixed.

\subsection{A Per-Image Negative Control: Sign-Alignment Does Not Mediate Individual Flips}
\label{sec:alignment_mediation}
The sign-alignment evidence of \cref{sec:alignment} is a \emph{population} statement:
averaged over images, DI moves a robust surrogate's gradient the wrong way and a
standard one's the right way. A natural follow-up question is whether it also acts as
a \emph{per-image} mediator: do the specific images whose transfer DI breaks have a
larger negative sign-alignment shift than the images it leaves alone? We tested this
directly and report a clean negative result, because it bounds what the
sign-alignment column should be read to claim.

For each clean-correct image (Engstrom$\to$Swin-B, $N{=}500$, 311 correct on both
models, 3 seeds, $\epsilon{=}16/255$) we paired (i) the per-image sign-alignment shift
$d_{\text{sign}}$ induced by DI (EOT$=10$, $r{=}0.9$, the same metric as
\cref{sec:alignment}) with (ii) the per-image transfer outcome under an actual attack
(MI-FGSM vs.\ DI-FGSM at $p{=}1$). The attack side reproduces the Scissors Effect at
the outcome level (net transfer change $-14.9$ pp robust, $+16.1$ pp standard; DI flips
$18.5\%$ of robust images from success to failure). But $d_{\text{sign}}$ does
not distinguish the flipped images: those DI breaks have mean
$d_{\text{sign}}{=}-0.0001$, statistically indistinguishable from the unchanged images
($+0.0001$; Mann--Whitney $p{=}0.43$), the correlation between $d_{\text{sign}}$ and a
signed outcome ($+1$ helped, $-1$ hurt) is null ($r{=}0.02$, $p{=}0.48$), and
$P(\text{DI hurts})$ is flat across $d_{\text{sign}}$ quintiles ($0.17$--$0.21$).

We read this straightforwardly: the clean-point sign-alignment shift is a
\emph{directional, population-level} signal, not a per-image cause of which
individual attack transfers, since the transferred outcome depends on the full
ten-step attack trajectory and not on the initial gradient alignment. Two things
in the main text follow from it. The quantitative case rests on the directly
measured bias--variance decomposition (\cref{tab:biasvar}), not on the norm ratio
of \cref{tab:alignment}, which can move for reasons other than variance
reduction; and the per-image link to ASR is supplied by the trajectory statistics
of \cref{tab:trajectory}, which is where the null above is resolved as a
timescale mismatch. The Scissors
mechanism is \emph{distributional}, a property of the surrogate's gradient regime
as a whole, and we do not claim it reduces to a per-image mediator. This negative
control is reported because the per-image test is the first one a skeptical
reader would run.

\subsection{Remark on High Baseline ASR}
\label{sec:high_asr_remark}
The high baseline ASR of the robust source (Engstrom) against standard targets
(avg 76.0\%) is a documented phenomenon: adversarially trained models learn
perceptually aligned, transferable features~\citep{salman2020adversarially}, and
robust models are known to be strong surrogates for transfer
attacks~\citep{springer2021little}. Our values use unmodified
RobustBench~\citep{croce2021robustbench} checkpoints.

\subsection{Threshold Sensitivity Analysis}
\label{sec:sensitivity}
We ablate $\tau\in[0.80,0.98]$ on ImageNet (target Swin-B, $N{=}500$, 3 seeds;
\cref{fig:sensitivity}) and identify a safe zone $\tau\in[0.86,0.96]$. The robust
source (Engstrom) holds $\sim$67.1\% across $\tau\in[0.80,0.96]$, dropping only at
$\tau{=}0.98$ (64.7\%); the standard source (ResNet50) rises from 51.5\%
($\tau{=}0.80$) to a $\sim$53.3\% plateau for $\tau\ge0.86$. Because CG-DI computes
LGC per image (or on a calibration batch), the model-level mean ($\approx0.64$ for
ResNet50) does not place all images below $\tau$; as $\tau$ decreases, more
individual images exceed it and are assigned $p{=}0$, explaining the gradual
standard-source transition.

\begin{figure}[h]
\centerline{\includegraphics[width=0.72\linewidth]{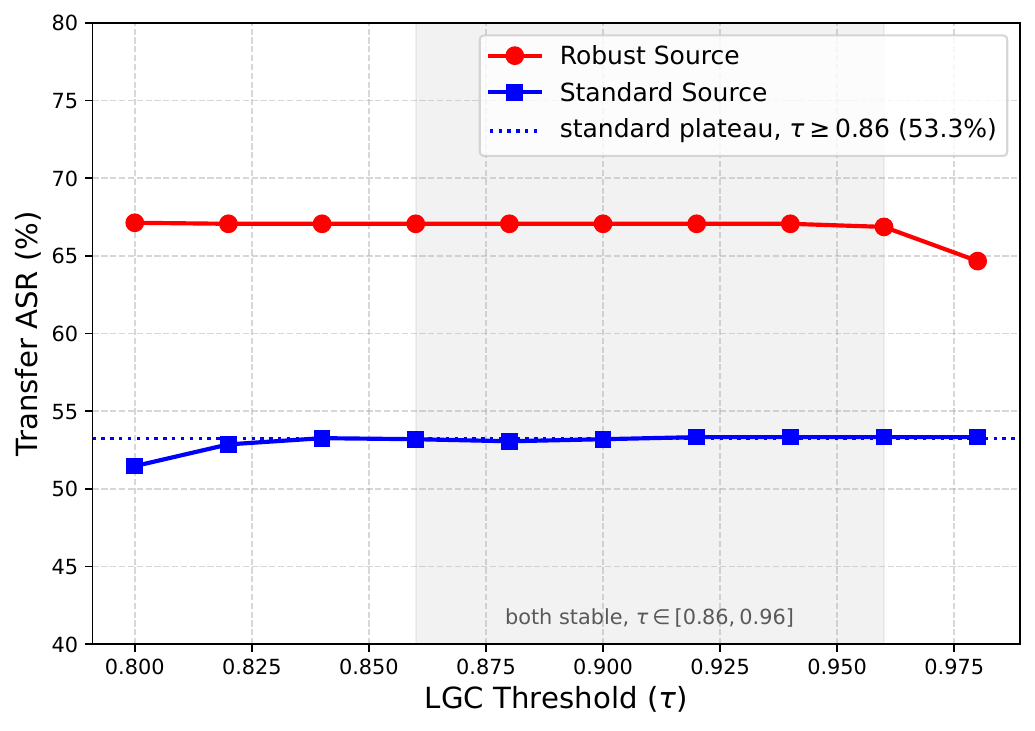}}
\caption{Threshold sensitivity of the \emph{per-image} rule on ImageNet (target
Swin-B, $N{=}500$, 3 seeds, all-images basis).
Transfer ASR vs.\ the LGC threshold $\tau$ for both source types (Engstrom in
red, ResNet-50 in blue). Both are simultaneously stable across
$\tau\in[0.86,0.96]$, with the default $\tau{=}0.92$ centred in that range. The
curves are smooth because each image is routed separately; the model-level rule
we recommend is a step function of $\tau$, so this figure does not measure its
robustness.}
\label{fig:sensitivity}
\end{figure}

\end{document}